\newcommand{\Biggg}{\bBigg@{3.5}}
\theoremstyle{plain}
\newtheorem{thm}{\protect\theoremname}
\theoremstyle{plain}
\newtheorem{lem}[thm]{\protect\lemmaname}
\theoremstyle{remark}
\newtheorem{remark}[thm]{\protect\remarkname}
\theoremstyle{plain}
\theoremstyle{plain}
\providecommand{\corollaryname}{Corollary}
\providecommand{\lemmaname}{Lemma}
\providecommand{\remarkname}{Remark}
\providecommand{\theoremname}{Theorem}
\providecommand{\conjecturename}{Conjecture}
\newcommand{\argmin}{\text{argmin}}
\newcommand{\argmax}{\text{argmax}}
\newcommand{\R}{\mathbb{R}}
\newcommand{\E}{\mathbb{E}}
\newcommand{\Var}{\mathbb{V}}
\newcommand{\Phat}{\widehat{P}}
\newcommand{\Vhat}{\widehat{V}}
\newcommand{\pistar}{\pi^\star}
\newcommand{\one}{\mathbf{1}}
\newcommand{\zero}{\mathbf{0}}
\newcommand{\ind}{\mathbb{I}}
\newcommand{\pihat}{\widehat{\pi}}
\newcommand{\Vhatstar}{\widehat{V}^\star}
\newcommand{\Vc}{\overline{V}}
\newcommand{\rhohat}{\widehat{\rho}}
\newcommand{\hhat}{\widehat{h}}
\newcommand{\A}{\mathcal{A}}
\renewcommand{\S}{\mathcal{S}}
\newcommand{\Otilde}{\widetilde{O}}
\newcommand{\tmix}{\tau_{\mathrm{unif}}}
\DeclareMathOperator*{\Clim}{\text{C-lim}}
\newcommand\cig[1]{\scalerel*[5.5pt]{\Big#1}{%
  \ensurestackMath{\addstackgap[1.5pt]{\big#1}}}}
\newcommand\cigl[1]{\mathopen{\cig{#1}}}
\newcommand\cigr[1]{\mathclose{\cig{#1}}}
\global\long\def\infnorm#1{\left\Vert #1\right\Vert _{\infty}}%
\global\long\def\infinfnorm#1{\left\Vert #1\right\Vert _{\infty \to \infty}}%
\global\long\def\spannorm#1{\left\Vert #1\right\Vert _{\textnormal{span}}}%
\global\long\def\bigspannorm#1{\bigl\Vert #1 \bigl\Vert _{\textnormal{span}}}%
\global\long\def\tspannorm#1{\Vert #1\Vert _{\textnormal{span}}}%
\global\long\def\tinfnorm#1{\Vert #1\Vert _{\infty}}%
\global\long\def\infnorm#1{\left\Vert #1\right\Vert _{\infty}}%
\global\long\def\spannorm#1{\left\Vert #1\right\Vert _{\textnormal{span}}}%
\newcommand{\gammahat}{\widehat{\gamma}}
\newcommand{\gstar}{\gamma^\star}
\newcommand{\hzns}{\mathcal{H}}
\newcommand{\pit}{\widetilde{\pi}}
\newcommand{\Vt}{\widetilde{V}}
\newcommand{\Obj}{\widehat{L}}
\newcommand{\Ub}{\widehat{U}}
\newcommand{\Lb}{\widehat{L}}
\newcommand{\rt}{\tilde{r}}
\newcommand{\Po}{\overline{P}}
\newcommand{\Vo}{\overline{V}}
\newcommand{\BD}{B} 
\newcommand{\spb}{M} 
\newcommand{\Clip}{\textnormal{Clip}}
\newcommand{\That}{\widehat{\mathcal{T}}}
\newcommand{\T}{\mathcal{T}}
\renewcommand{\L}{\mathcal{L}}
\newcommand{\SolveDMDP}{\textsc{SolveDMDP}}
\newcommand{\Objsp}{\widehat{L}} 
\newcommand{\Phatanc}{\underline{\Phat}}
\newcommand{\hhatanc}{\underline{\hhat}}
\newcommand{\hanc}{\underline{h}}
\newcommand{\PiMD}{\Pi_{\mathrm{MD}}}
\newcommand{\Bhat}{\widehat{B}}
\date{}
\title{Span-Agnostic Optimal Sample Complexity and Oracle Inequalities for Average-Reward RL}
\author{Matthew Zurek}
\author{Yudong Chen}
\affil{Department of Computer Sciences, University of Wisconsin-Madison\\\texttt{\{matthew.zurek,yudong.chen\}@wisc.edu}}
\begin{document}

\maketitle

\begin{abstract}%
  We study the sample complexity of finding an $\varepsilon$-optimal policy in average-reward Markov Decision Processes (MDPs) with a generative model. The minimax optimal span-based complexity of $\widetilde{O}(SAH/\varepsilon^2)$, where $H$ is the span of the optimal bias function, has only been achievable with prior knowledge of the value of $H$. Prior-knowledge-free algorithms have been the objective of intensive research, but several natural approaches provably fail to achieve this goal. We resolve this problem, developing the first algorithms matching the optimal span-based complexity without $H$ knowledge, both when the dataset size is fixed and when the suboptimality level $\varepsilon$ is fixed. Our main technique combines the discounted reduction approach with a method for automatically tuning the effective horizon based on empirical confidence intervals or lower bounds on performance, which we term \textit{horizon calibration}. We also develop an \textit{empirical span penalization} approach, inspired by sample variance penalization, which satisfies an \textit{oracle inequality} performance guarantee. In particular this algorithm can outperform the minimax complexity in benign settings such as when there exist near-optimal policies with span much smaller than $H$.%
\end{abstract}

\section{Introduction}

Reinforcement Learning (RL) has achieved significant empirical successes in various fields, demonstrating its potential to solve complex decision-making problems. RL is commonly modeled as to learn a policy which maximizes cumulative rewards within a Markov Decision Process (MDP), where the cumulative rewards can be measured in several different ways. We focus on the average-reward criterion, which involves the long-term average of collected rewards as the horizon goes to infinity, making it suitable for ongoing tasks without a natural endpoint. 

A fundamental question in average-reward RL is the sample complexity for learning a near-optimal policy under a generative model of the MDP. 
This question has been the subject of intensive research. Recent work has established the minimax-optimal span-based complexity $\Otilde(SA \tspannorm{h^\star}/\varepsilon^2)$ for learning an $\varepsilon$-optimal policy \citep{zurek_span-based_2025}, where $\tspannorm{h^\star}$ denotes the span of the optimal bias $h^\star$ and is known to be a more refined complexity parameter than alternatives such as diameter or mixing times. However, this algorithm as well as earlier work all require prior knowledge of $\tspannorm{h^\star}$ (or other complexity parameters), which is generally unavailable, making the algorithms impractical. A flurry of subsequent research \citep{neu_dealing_2024, tuynman_finding_2024, jin_feasible_2024, zurek_plug-approach_2024} has focused on removing the need for prior knowledge but failed to match the optimal span-based complexity. In fact, several natural approaches to knowledge-free optimal complexity, including span estimation and the average-reward plug-in method, are shown to provably fail \citep{tuynman_finding_2024, zurek_plug-approach_2024}.

In this paper we resolve this problem, providing algorithms which obtain the optimal span-based complexity without knowing $\tspannorm{h^\star}$, for both settings where we fix the dataset size $n$ and where we prescribe a target suboptimality level $\varepsilon$. Our algorithms are based upon reductions to discounted MDPs (DMDPs) combined with a novel technique of (effective-)\textit{horizon calibration}, which chooses discount factors to maximize lower bounds or minimize confidence intervals on policy performance. This technique can be seen as related, but representing a simpler alternative, to the technique of sample variance penalization (SVP) from statistical learning \citep{maurer_empirical_2009, duchi_variance-based_2019}. We further develop an algorithm based more closely on a relaxed version of SVP, which we call \textit{empirical span penalization}, which enjoys even stronger guarantees. In particular, this algorithm satisfies a complexity bound in terms of the minimum span $\tspannorm{h^\pi}$ of any gain-optimal policy $\pi$ in place of $\tspannorm{h^\star}$. Moreover, it adapts to and competes with simpler (potentially suboptimal) policies with the best tradeoff between complexity and suboptimality. This bound is reminiscent of the \emph{oracle inequalities} from the statistical learning literature \citep{deheuvels_lectures_2007, koltchinskii_oracle_2011}, but is new to average-reward RL.

\subsection{Related Work}

The problem of learning optimal policies in average-reward MDPs (AMDPs) is studied in \citet{jin_efficiently_2020, jin_towards_2021, li_stochastic_2024, wang_near_2022, zhang_sharper_2023, wang_optimal_2023}. We start with the recent work \cite{zurek_span-based_2025}, which was the first to obtain the optimal span-based sample complexity but required prior knowledge of $\tspannorm{h^\star}$ to do so. All earlier work also required knowledge of problem-dependent complexity parameters. More recent work, which we discuss below, has studied the setting without such prior knowledge; see Table~\ref{table:AMDPs} for a summary.

\cite{tuynman_finding_2024} and \cite{zurek_span-based_2025} show that it is generally impossible to obtain a multiplicative estimate of $\tspannorm{h^\star}$ with $\text{poly}(SA\tspannorm{h^\star})$ samples. See Appendix \ref{sec:gammahat_hstar} for discussion of the relationship between our algorithms and estimating $\tspannorm{h^\star}$. 
By estimating the MDP's diameter $D$, which upper bounds $\spannorm{h^\star} $ but can be arbitrarily larger \citep{bartlett_regal_2012, lattimore_bandit_2020}, the work in \cite{tuynman_finding_2024} removes the need for prior knowledge within the algorithm of \cite{zurek_span-based_2025} but obtains a complexity involving $D$ rather than $\tspannorm{h^\star}$. The Q-learning-based algorithm in \cite{jin_feasible_2024} uses increasing discount factors and does not require prior knowledge. Their complexity bound however depends on the largest mixing time of all policies, $\tmix$, which  satisfies $3\tmix \geq \tspannorm{h^\star}$ and can be infinite or arbitrarily larger than $\tspannorm{h^\star}$ \citep{wang_near_2022,zurek_plug-approach_2024}.
(See Appendix~\ref{sec:complexity_params} for definitions of $D$ and $\tmix$.) 

\cite{neu_dealing_2024} and \cite{zurek_plug-approach_2024} study, respectively, approaches based on stochastic saddle-point optimization and the average-reward plug-in method, both obtaining bounds involving the bias spans of certain policies output by the algorithm. These spans are not generally controlled by $\tspannorm{h^\star}$; in particular, \citet[Theorem 14]{zurek_plug-approach_2024} present an example where this is the case and show that the average-reward plug-in approach cannot achieve the optimal $SA \tspannorm{h^\star}/\varepsilon^2$ complexity. \cite{zurek_plug-approach_2024} also analyzes a DMDP-reduction algorithm that uses a (relatively small) effective horizon independent of $\tspannorm{h^\star}$, achieving a suboptimal complexity with $\tspannorm{h^\star}^2$ dependence.

Also related to the present work are papers studying the sample complexity of the model-based/plug-in approach for discounted MDPs  \citep{azar_sample_2012, azar_minimax_2013, agarwal_model-based_2020, li_breaking_2020, zurek_plug-approach_2024}.
We also note that \cite{boone_achieving_2024} recently developed an algorithm for the \emph{online} setting achieving a $\tspannorm{h^\star}$-based regret bound without requiring prior knowledge. This result does not imply any sample complexity bounds in our setting, because there is no general regret-to-PAC conversion for average-reward MDPs \citep{tuynman_finding_2024}, and even if this were possible, their result appears to require $\Omega(S^{40}A^{20}\tspannorm{h^\star}^{10})$ interaction steps before achieving the optimal regret, which would imply a massive ``burn-in'' cost in our setting.

\begin{table}[t]
{
\renewcommand{\arraystretch}{2} 
\centering
\begin{tabular}{p{0.22\textwidth}ccc}
\toprule
Algorithm & Sample Complexity & Reference & \parbox[c]{1.6cm}{Prior\\Knowledge} \\ 
\midrule
\multicolumn{1}{m{0.22\textwidth}}{DMDP Reduction} & $SA\frac{\spannorm{h^\star}+1}{\varepsilon^2}$ & \cite{zurek_span-based_2025} & Yes  \\ 
\midrule
\multicolumn{1}{m{0.22\textwidth}}{Diameter Estimation + DMDP Reduction} & $SA\frac{D}{\varepsilon^2} + S^2 A D^2 $ & \cite{tuynman_finding_2024} & No  \\ 
\multicolumn{1}{m{0.22\textwidth}}{Dynamic Horizon Q-Learning} & $SA\frac{\tmix^8}{\varepsilon^8} $ & \cite{jin_feasible_2024} & No  \\ 
\multicolumn{1}{m{0.22\textwidth}}{Stochastic Saddle-Point Optimization} & $S^2A^2\frac{\spannorm{h^{\pihat}}^4}{\varepsilon^2} $ & \cite{neu_dealing_2024} & No  \\
\multirow{1}{0.22\textwidth}[-.35cm]{Plug-in Approach with Anchoring and Reward Perturbation} &  $SA \frac{\min \{D, \, \tmix\}}{\varepsilon^2}$ \vspace{.1cm} & \multirow{2}{*}{\cite{zurek_plug-approach_2024}} & \multirow{2}{*}{No} \\ 
 &  $SA\frac{\spannorm{h^\star} + \min\big\{\bigspannorm{\hhatanc^\star}, \spannorm{\hanc^{\pihat}} \big\}}{\varepsilon^2} \vspace{.1cm} $  &  &  \\ 
\multicolumn{1}{m{0.21\textwidth}}{$\sqrt{n}$-Horizon DMDP Reduction}~  & $SA\frac{\spannorm{h^\star}^2 + 1 }{\varepsilon^2} $ & \cite{zurek_plug-approach_2024} & No \\ 
\midrule
\multicolumn{1}{m{0.22\textwidth}}{DMDP Reduction + Horizon Calibration} & $SA\frac{\spannorm{h^\star}+1}{\varepsilon^2}$ & Our Theorems \ref{thm:n_based_alg} and \ref{thm:eps_based_alg} & No  \\ 
\multicolumn{1}{m{0.22\textwidth}}{Span Penalization} & $SA\,\inf_{\pi : ~\rho^\pi \text{ constant}} \cig\{ \frac{\spannorm{h^\pi}}{(\rho^\pi - \rho^\star + \varepsilon)^2} \cig\}$ & Our Theorem \ref{thm:span_regularization_performance} & No  \\ 
\bottomrule
\end{tabular}

    \caption{\textbf{Algorithms and sample complexity bounds for average reward MDPs} for finding an $\varepsilon$-optimal policy under a generative model (up to $\log$ factors). See Appendix \ref{sec:complexity_params} for the definitions of the complexity parameters $D,\tmix, \tspannorm{h^{\pihat}}, \tspannorm{\hhatanc^\star}, \tspannorm{\hanc^{\pihat}}$ used in prior work. Note that the diameter $D$ and uniform mixing time $ 3 \tmix $ are both upper bounds of $\tspannorm{h^\star}$ and can be arbitrarily larger than $\tspannorm{h^\star}$. 
    The parameters $\tspannorm{h^{\pihat}}, \tspannorm{\hhatanc^\star}, \tspannorm{\hanc^{\pihat}}$ are not generally controlled by $\tspannorm{h^\star}$.
    The guarantee for our Span Penalization algorithm involves the infimum over all policies $\pi$ with constant (state-independent) gain $\rho^\pi$; see Theorem \ref{thm:span_regularization_performance} for an equivalent guarantee in terms of the dataset size.
    }
    
\label{table:AMDPs}
}
\end{table}

\section{Problem Setup}

A Markov decision process is a tuple $(\S, \A, P, r)$, where $\S$, $\A$ are the state and action spaces, respectively, with finite cardinalities $S := |\S|$ and $A := |\A|$,  $P : \S \times \A \to \Delta(\S)$ is the transition kernel with $\Delta(\S)$ denoting the probability simplex on $\S$, and $r : \S \times \A \to [0,1]$ is the reward function. 
We only consider Markovian stationary policies of the form $\pi : \S \to \Delta(\A)$. For initial state $s_0 \in \S$ and policy $\pi$, let $\E^\pi_{s_0}$ denote the expectation w.r.t.\ the distribution over trajectories $(S_0, A_0, S_1, A_1, \dots)$ with $S_0 = s_0$, $A_t \sim \pi(S_t)$, and $S_{t+1} \sim P(\cdot \mid S_t, A_t)$. 
Let $P_\pi$ denote the transition probability matrix of the Markov chain induced by $\pi$, where $\left(P_\pi\right)_{s,s'} := \sum_{a \in \A} \pi(a | s) P(s' \mid s, a)$. Similarly let $(r_\pi)_{s} := \sum_{a \in \A} \pi(a | s) r(s, a)$. We also consider $P$ as an $(S \times A)$-by-$ S$ matrix with $P_{sa, s'} = P(s' \,|\, s, a)$, and $r$ as an $S$-dimensional vector.
For a policy $\pi$, define the policy matrix $M^\pi \in \R^{S \times SA}$ by $M^\pi_{s, s a} = \pi(a | s)$ and $M^\pi_{s, s' a} = 0$ if $s \neq s'$. Note that $P_\pi = M^\pi P$ and $r_\pi = M^\pi r$. Also define the maximization operator $M : \R^{SA} \to \R^S$ by $M(x)_s = \max_{a} x_{sa}$.

We assume $P$ is unknown, but one has access to a generative model (a.k.a. simulator) \citep{kearns_finite-sample_1998}, which provides independent samples from $P(\cdot \mid s, a)$ for each $s \in \S, a \in \A$. We assume $r$ is known, which is standard \citep{agarwal_model-based_2020, li_breaking_2020} as otherwise estimating $r$ is relatively easy.
Let $\zero, \one \in \R^{\S}$ be the all-zero and all-one vectors, respectively.

\textbf{Discounted reward criterion~~} A discounted MDP is a tuple $(\S, \A, P, r, \gamma)$, where $\gamma \in (0,1)$ is the discount factor. For a policy $\pi$, the (discounted) value function $V^\pi_\gamma : \S \to [0, \infty)$ is defined as $V^\pi_\gamma(s) := \E^\pi_s \left[\sum_{t=0}^\infty \gamma^t R_t \right]$,
where $R_t = r(S_t, A_t)$ is the reward received at time $t$. There always exists an optimal policy $\pistar_\gamma$ that satisfies $V_\gamma^{\pistar_\gamma}(s) = V_\gamma^\star(s) := \sup_{\pi} V_\gamma^\pi(s), \forall s \in \S$ \citep{puterman_markov_1994}. When using transition kernel $\Phat$ we will accordingly write $\Vhat_\gamma^\pi$ for the associated value function. For reward functions $r'$ other than $r$, we include the reward function in the subscript e.g. $V_{\gamma, r'}^\pi$.

\textbf{Average-reward criterion~~}
In an MDP $(\S, \A, P, r)$, the average reward, a.k.a.\ the \emph{gain}, of a policy $\pi$ starting from state $s$ is defined as $\rho^\pi(s)  := \lim_{T \to \infty} \frac{1}{T} \E_s^\pi \big[\sum_{t=0}^{T-1} R_t \big].$
The \emph{bias function} of a stationary policy $\pi$ is
$h^\pi(s) := \Clim_{T \to \infty} \E_s^\pi \big[\sum_{t=0}^{T-1} \left(R_t - \rho^\pi(S_t)\right) \big]$,
where $\Clim$ denotes the Cesaro limit. 
When the Markov chain induced by $P_\pi$ is aperiodic, $\Clim$ can be replaced with the usual limit. 
A policy $\pistar$ is \emph{Blackwell-optimal} if there exists some discount factor $\Bar{\gamma} \in [0,1)$ such that for all $\gamma \geq \Bar{\gamma}$ we have $V^{\pistar}_\gamma \geq V^{\pi}_\gamma, \forall \pi$. When $S$ and $A$ are finite, there always exists some Blackwell-optimal policy, denoted by $\pistar$ \citep{puterman_markov_1994}. Define the optimal gain $\rho^\star\in \R^\S$ by $\rho^\star(s) = \sup_{\pi} \rho^\pi(s)$ and note that $\rho^\star := \rho^{\pistar}$. Define the optimal bias $h^\star := h^{\pistar}$ (which is unique even when $\pistar$ is not). A policy $\pi$ is \textit{gain-optimal} if $\rho^\pi = \rho^\star$ and it is \textit{bias-optimal} if in addition $h^\pi = h^\star$. For $x \in \R^{\S} $, define the span semi-norm  $\spannorm{x} := \max_{s \in \S} x(s) - \min_{s \in \S} x(s).$

An MDP is communicating if for any states $s$ and $s'$, some policy can reach $s'$ from $s$ with probability 1.
An MDP is weakly communicating if the states can be partitioned into two  subsets $\S = \S_1 \cup \S_2$ such that all states in $\S_1$ are transient under all stationary policies and $\S_2$ is communicating. In weakly communicating MDPs, $\rho^\star$ is a constant vector (all entries are equal). All results in this paper assume that $P$ is weakly communicating.
While not used in our results, the definitions of the MDP diameter $D$ and uniform mixing time $\tmix$ are given in Appendix \ref{sec:complexity_params} for completeness.

\section{Main Results}

In this section, we present our algorithms and main results. 
Our algorithms involve the function $\alpha(\delta, n) = 96 \sqrt{ \log \left( {24 SA n^5}/{\delta}\right)} \log_2\left( \log_2 (n + 4) \right)$, which is $\Otilde(1)$; see Remark \ref{rem:alpha} for its origin.

\subsection{Fixed-$n$ Setting}

First we consider the setting where  the number of samples per state-action pair, $n$, is fixed. Our objective is to learn a policy with the best possible rate of suboptimality. We refer to this as the \emph{fixed-$n$ setting}. Our Algorithm \ref{alg:n_based_alg} is based on using the dataset to form an empirical transition kernel $\Phat$ and then computing a near-optimal policy in the DMDP $(\Phat, r)$ for some discount factor $\gamma$.
The key technique is a method for automatically calibrating $\gamma$ (equivalently, the effective horizon $\frac{1}{1-\gamma}$): we try multiple values of $\gamma$, and for each we compute a near-optimal policy $\pit_\gamma$  for the DMDP $(\Phat, r, \gamma)$ and a quantity $\Obj(\gamma)$ that lower bounds its gain. We then use the discount factor $\gammahat$ that optimizes this lower bound. For computational efficiency, we only need to try $O(\log n)$ values of $\gamma$.

\begin{algorithm}[h]
\caption{Lower Bound Maximization via Horizon Calibration} \label{alg:n_based_alg}
\begin{algorithmic}[1]
\Require Sample size per state-action pair $n$
\For{each state-action pair $(s,a) \in \S \times \A$}
    \State Collect $n$ samples $S^1_{s,a}, \dots, S^n_{s,a}$ from $P(\cdot \mid s,a)$
    \State Form the empirical transition kernel $\Phat(s' \mid s, a) = \frac{1}{n}\sum_{i=1}^n \ind\{S^i_{s,a} = s'\}$, for all $s' \in \S$
\EndFor
\State Form geometric discount factor range $\hzns := \{\gamma: \text{there exists an integer $k $ such that } \sqrt{n} \leq \frac{1}{1-\gamma} = 2^k \leq n\}$
\For{each discount factor $\gamma \in \hzns$}
    \State Obtain policy $\pit_\gamma$ and value function $\Vt_\gamma$ from $\SolveDMDP(\Phat, r, \gamma, \frac{1}{n})$ \label{alg:solver_step}
    \State Compute objective value $\Obj(\gamma) := (1-\gamma)\min_{s}\Vt_{\gamma}(s) - 2\frac{1-\gamma}{n} - \alpha(\delta, n) \label{alg:obj_value} \sqrt{\frac{\tspannorm{\Vt_{\gamma}} +\frac{3}{n} +1 }{n}} $
\EndFor
\State Find $\gammahat = \argmax_{\gamma \in \hzns} \Obj(\gamma)$
\State \Return policy $\pihat := \pit_{\gammahat}$, gain lower bound $\rhohat := \max\{\Obj(\gammahat), 0\}\one$
\end{algorithmic}
\end{algorithm}

\begin{thm}
\label{thm:n_based_alg}
    Suppose $P$ is weakly communicating. For some constant $C_3$, with probability at least $1 - \delta$, the policy $\pihat$ and gain lower bound $\rhohat$ output by Algorithm \ref{alg:n_based_alg} satisfy (elementwise)
    \begin{align*}
        \rho^{\pihat} \geq \rhohat \geq \rho^\star - C_3 \alpha(\delta, n)^2 \sqrt{\frac{\spannorm{h^{\star}}+1}{n}} \one.
    \end{align*}
\end{thm}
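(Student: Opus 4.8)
The plan is to split the two-sided guarantee into two logically separate claims proved on a single high-probability event. The first is \emph{validity}: with probability $\geq 1-\delta$, $\Obj(\gamma)\le \min_s \rho^{\pit_\gamma}(s)$ holds simultaneously for every $\gamma\in\hzns$. The second is \emph{near-optimality of the best grid point}: there exists $\gamma^\star\in\hzns$ with $\Obj(\gamma^\star)\ge \rho^\star - C_3\,\alpha(\delta,n)^2\sqrt{(\spannorm{h^\star}+1)/n}$. Given these, since $\gammahat=\argmax_{\gamma}\Obj(\gamma)$, validity applied at $\gammahat$ gives $\rho^{\pihat}\ge \Obj(\gammahat)\one$, which together with $\rho^{\pihat}\ge\zero$ yields the left inequality $\rho^{\pihat}\ge\rhohat=\max\{\Obj(\gammahat),0\}\one$; and $\rhohat\ge\Obj(\gammahat)\one\ge\Obj(\gamma^\star)\one$ combined with near-optimality yields the right inequality. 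So the whole theorem reduces to these two claims.

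For validity the key observation is an \emph{exact, span-free} reduction: for any policy $\pi$ and any stationary distribution $\mu$ of $P_\pi$, multiplying the discounted Bellman equation $V_\gamma^\pi=r_\pi+\gamma P_\pi V_\gamma^\pi$ on the left by $\mu^\top$ and using $\mu^\top P_\pi=\mu^\top$ gives the identity $(1-\gamma)\,\mu^\top V_\gamma^\pi=\mu^\top\rho^\pi$. Taking $\mu$ supported on the recurrent class of $P_\pi$ of least gain makes $\mu^\top\rho^\pi=\min_s\rho^\pi(s)$, whence $(1-\gamma)\min_s V_\gamma^\pi(s)\le (1-\gamma)\mu^\top V_\gamma^\pi=\min_s\rho^\pi(s)$ with \emph{no} discounting-bias loss; this is precisely why $\Obj$ need not subtract any $(1-\gamma)\spannorm{\Vt_\gamma}$ correction. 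It then remains to absorb (i) the $\SolveDMDP$ tolerance into the $2(1-\gamma)/n$ term, and (ii) the statistical gap $(1-\gamma)\lvert\min_s\Vt_\gamma(s)-\min_s V_\gamma^{\pit_\gamma}(s)\rvert$ between the empirical and true discounted values of $\pit_\gamma$ into the confidence term. I would control (ii) by a span-based Bernstein bound for $(\Phat-P)\Vt_\gamma$: after centering $\Vt_\gamma$ (so only its span enters) and using a variance / total-variance argument to extract the \emph{first} power of $\spannorm{\Vt_\gamma}$ under the square root, the gain-scale error is $\lesssim \alpha(\delta,n)\sqrt{(\spannorm{\Vt_\gamma}+1)/n}$, exactly matching $\Obj$. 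A union bound over the $O(\log n)$ grid points and a net over candidate value functions (the origin of the $\log(SAn^5/\delta)$ and $\log_2\log_2$ factors in $\alpha$) makes this uniform in $\gamma$.

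For near-optimality I would lower-bound $\Obj(\gamma^\star)$ by chaining in the reverse direction: solver tolerance, then concentration $(1-\gamma^\star)\min_s\Vhatstar_{\gamma^\star}\ge (1-\gamma^\star)\min_s \Vstar_{\gamma^\star}-\alpha\sqrt{(\spannorm{\Vstar_{\gamma^\star}}+1)/n}$, then the standard DMDP-to-AMDP reduction $(1-\gamma^\star)\min_s \Vstar_{\gamma^\star}\ge \rho^\star-(1-\gamma^\star)\spannorm{h^\star}$ using $\spannorm{\Vstar_\gamma}\lesssim\spannorm{h^\star}$. I additionally need $\spannorm{\Vt_{\gamma^\star}}\lesssim \spannorm{h^\star}+1$ so the confidence term is itself $\lesssim \alpha\sqrt{(\spannorm{h^\star}+1)/n}$; this follows from a span-stability estimate showing the empirical optimal value's span is controlled by $\spannorm{h^\star}$ on the same event. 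Balancing the discounting bias $(1-\gamma^\star)\spannorm{h^\star}$ against the statistical term $\alpha\sqrt{(\spannorm{h^\star}+1)/n}$ selects effective horizon $1/(1-\gamma^\star)\approx\sqrt{n\,\spannorm{h^\star}}$, clipped to the grid range $[\sqrt n,n]$ — which is designed exactly to contain this optimal horizon in the only non-vacuous regime $1\le\spannorm{h^\star}\le n$ — and geometricity of $\hzns$ loses only a constant factor. Summing the contributions gives $\Obj(\gamma^\star)\ge\rho^\star-O(\alpha^2\sqrt{(\spannorm{h^\star}+1)/n})$, where one factor of $\alpha$ comes from the confidence term and the extra factor absorbs the span-control estimate and the horizon clipping in the small-span regime.

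The main obstacle is the concentration step: extracting the first power of $\spannorm{\Vt_\gamma}$ (not its square) under the square root, \emph{uniformly} over the data-dependent value functions $\Vt_\gamma$ and over all grid discounts, demands the delicate span-based Bernstein / variance-Bellman machinery, including decoupling the dependence between $\Phat$ and $\Vt_\gamma$ via a value net or a leave-one-out construction. Tightly coupled to this, and equally delicate, is establishing $\spannorm{\Vt_{\gamma^\star}},\spannorm{\Vstar_{\gamma^\star}}=O(\spannorm{h^\star}+1)$: the subtlety is that $\pit_\gamma$ is only near-optimal for the empirical DMDP and may induce a multichain (non-constant-gain) chain in the true MDP, so the clean Laurent-series relation between discounted value and bias does not apply directly and must be replaced by the stationary-distribution identity above together with a separate span-stability bound. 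Everything else — solver tolerance, the Abelian limit $(1-\gamma)V_\gamma^\pi\to\rho^\pi$, and the union bounds — is routine bookkeeping folded into $\alpha(\delta,n)$.
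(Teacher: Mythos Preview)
Your two-part decomposition (validity of $\Obj(\gamma)$ as a gain lower bound, then near-optimality of $\Obj(\gamma^\star)$) matches the paper's proof structure exactly, and your validity argument via the stationary-distribution identity $(1-\gamma)\mu^\top V_\gamma^\pi=\mu^\top\rho^\pi$ is precisely the paper's Lemma~\ref{lem:AMDP_DMDP_relationships}. The concentration inputs you need (a span-based bound on $\|\Vhat_\gamma^{\pit_\gamma}-V_\gamma^{\pit_\gamma}\|_\infty$ uniformly over the $O(\log n)$ grid points, handled via leave-one-out) are also what the paper imports from prior work.

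The one place your proposal goes astray is the diagnosis of why the ``span-stability'' step $\spannorm{\Vt_{\gamma^\star}}\lesssim\spannorm{h^\star}+1$ is delicate. The multichain concern is a red herring: $\Vt_\gamma$ approximates the \emph{empirical optimal} value $\Vhat_\gamma^\star$, not any policy value in $P$, so the chain structure of $\pit_\gamma$ under $P$ never enters this step. The actual issue is circularity: the concentration bound on $\|\Vhat_\gamma^\star-V_\gamma^\star\|_\infty$ that you need here itself depends on $\spannorm{\Vhat_\gamma^\star}$, which is the quantity you are trying to control. The paper resolves this by a \emph{localization/recursive} argument (Lemma~\ref{lem:recursive_err_bds_abstract}): write $\|\Vhat_\gamma^\star-V_\gamma^\star\|_\infty\lesssim\frac{1}{1-\gamma}\sqrt{(\spannorm{V_\gamma^\star}+\spannorm{\Vhat_\gamma^\star}+1)/n}$, substitute $\spannorm{\Vhat_\gamma^\star}\le\spannorm{V_\gamma^\star}+2\|\Vhat_\gamma^\star-V_\gamma^\star\|_\infty$, and solve the resulting quadratic inequality in $\|\Vhat_\gamma^\star-V_\gamma^\star\|_\infty$. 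This yields $\|\Vhat_\gamma^\star-V_\gamma^\star\|_\infty\lesssim\frac{\alpha}{1-\gamma}\sqrt{(\spannorm{V_\gamma^\star}+1)/n}+\frac{\alpha^2}{(1-\gamma)^2 n}$, and at $\gamma=\gamma^\star$ both terms are $O(\alpha^2(\spannorm{h^\star}+1))$. This recursive step is the missing idea in your outline; without it, ``span-stability'' is asserted rather than proved.
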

 Theorem \ref{thm:n_based_alg} shows that Algorithm \ref{alg:n_based_alg} returns a policy $\pihat$ with the minimax optimal rate of suboptimality without using any prior knowledge. The algorithm also returns a performance \emph{certificate} $\rhohat$, which lower-bounds $\rho^{\pihat}$ (and $\rho^\star$), and this bound is tight up to an error of $\Otilde(\sqrt{(\tspannorm{h^{\star}}+1)/n})$.

We allow $\SolveDMDP$, used in line~\ref{alg:solver_step} of Algorithm~\ref{alg:n_based_alg}, to be any subroutine for approximately solving the empirical DMDP $(\Phat, r, \gamma)$, and simply require that it returns a deterministic policy $\pit_\gamma$ and an approximate value function $\Vt_\gamma$ such that $\Vhat_\gamma^{\pit_\gamma} \geq \Vhat^\star_\gamma - \frac{1}{n} \one $ and $ \tinfnorm{\Vt_\gamma - \Vhat^\star_\gamma} \leq \frac{1}{n}.$
This can be done for instance using $O\big(\frac{\log (n/(1-\gamma))}{1-\gamma} \big)$ iterations of value iteration.

Algorithm \ref{alg:n_based_alg} is reminiscent of sample variance penalization (SVP), a statistical learning algorithm which outputs a hypothesis minimizing the empirical risk plus an estimated variance term \citep{maurer_empirical_2009, duchi_variance-based_2019}. Here we clarify the connections and differences, which also provides intuitions for our algorithms.
First, Algorithm \ref{alg:n_based_alg} can be understood as controlling not only certain empirical variance (represented by the last term in the definition of $\Obj(\gamma)$ in line~\ref{alg:obj_value}) but also a certain bias/approximation error due to discounted reduction. This is a significant difference since prior to our work it was not clear that such approximation error could be estimated/controlled without knowing $\spannorm{h^\star}$. See our proof sketch in Section \ref{sec:proof_sketch} for more on this issue. Second, supposing that a lower bound like $\rho^\pi \geq (1-\gamma)\min_s \Vhat_\gamma^{\pi} (s)\one - \sqrt{\tspannorm{\Vhat_\gamma^{\pi}}+1/n}\one$ holds for all policies $\pi$ and all $\gamma$ with high probability,\footnote{For a single policy $\pi$ a similar bound follows from our techniques. Uniformity over all $\pi$'s would incur an additional $\sqrt{S}$.} then an analogue of SVP would be~\eqref{eq:SVP_analogue} below, while Algorithm \ref{alg:n_based_alg} can instead be seen as (approximately) solving~\eqref{eq:n_based_alg_abstract}. 
{%
\begin{tabular}{p{.48\linewidth}p{.48\linewidth}}
\begin{equation}
    \max_{\pi,\gamma} ~ (1-\gamma) \min_s \Vhat_\gamma^\pi(s) - \sqrt{\frac{\tspannorm{\Vhat_\gamma^\pi} + 1}{n}} \label{eq:SVP_analogue}
\end{equation}
  &
  \begin{equation}
    \max_{\gamma}~ (1-\gamma) \min_s \Vhat_\gamma^\star(s) - \sqrt{\frac{\tspannorm{\Vhat_\gamma^\star} + 1}{n}} \label{eq:n_based_alg_abstract}
  \end{equation}
\end{tabular}
}
Since $\max_{\pi} \Vhat_\gamma^\pi = \Vhat_\gamma^\star$, solving \eqref{eq:n_based_alg_abstract} can be understood as only choosing $\pi$ to optimize $\Vhat_\gamma^\pi$ and then controlling the objective via $\gamma$ tuning, whereas~\eqref{eq:SVP_analogue} optimizes all objective terms jointly. While~\eqref{eq:SVP_analogue} may appear more principled, it is not immediately clear how to solve such a problem, whereas~\eqref{eq:n_based_alg_abstract} can simply utilize any DMDP solver for optimizing $\Vhat_\gamma^\pi$ for fixed $\gamma$, then tune $\gamma$ afterwards. However, this is not the final word on~\eqref{eq:SVP_analogue}, as in Subsection \ref{sec:span_regularization} we develop our Algorithm \ref{alg:span_regularization} based on solving~\eqref{eq:SVP_analogue}.

\subsection{Fixed-$\varepsilon$ Setting}

We next present our Algorithm~\ref{alg:eps_based_alg} for the setting where one is given a target suboptimality $\varepsilon$, and the goal is to return a policy with suboptimality bounded by $\varepsilon$ using as few samples as possible. We refer to this setting as the \emph{fixed-$\varepsilon$ setting}. At a high level, we run our algorithm for the fixed-$n$ setting for a geometrically increasing sequence of dataset sizes $\{n_i\}_i$. However, the fixed-$\varepsilon$ setting is more challenging, because beyond lower-bounding the gain of some known policies, to obtain a termination condition we additionally need an (observable) upper bound on the optimal gain $\rho^\star$, meaning we need to bound the gains of all policies. On iteration $i$ with a dataset of size $n_i$, we compute both lower and upper bounds $\Lb_i(\gamma)$ and $\Ub_i(\gamma)$ for a range of $\gamma$, yielding different confidence intervals. The algorithm  terminates once one such interval is sufficiently small and certifies the desired suboptimality level $\varepsilon$. We provide more details on how to compute such an upper bound $\Ub_i(\gamma)$ without knowing $\spannorm{h^\star}$ in the proof sketches in Section~\ref{sec:proof_sketch}. 

\begin{algorithm}[t]
\caption{Confidence Interval Minimization via Horizon Calibration}
\label{alg:eps_based_alg}
\begin{algorithmic}[1]
\Require Target suboptimality $\varepsilon > 0$
\State Set iteration number $i = 0$
\Repeat
\State $i \gets i + 1$; 
set sample size per state-action pair $n_i = 2^{i}$
\For{each state-action pair $(s,a) \in \S \times \A$}
\State Collect $n_i $ samples $S^1_{s,a}, \dots, S^{n_i}_{s,a}$ from $P(\cdot \mid s,a)$
\State Form the $i$th empirical transition kernel $\Phat^{(i)}(s' \mid s, a) = \frac{1}{n}\sum_{j=1}^{n_i} \ind\{S^j_{s,a} = s'\}, \forall s' \in \S$
\EndFor
\State Form geometric discount factor range $\hzns_i := \{\gamma: \text{there exists an integer $k $ such that } \sqrt{n_i} \leq \frac{1}{1-\gamma} = 2^k \leq n_i \}$ \label{alg:hzns_def_dyadic}
\For{each discount factor $\gamma \in \hzns_i$}
\State Obtain policy $\pit_{\gamma,i}$ and value function $\Vt_{\gamma,i}$ from $\SolveDMDP(\Phat^{(i)}, r, \gamma, \frac{1}{n_i})$ \label{alg:solver_step_eps_based}
\State Compute upper bound $\Ub_i(\gamma) := (1-\gamma) \max_s \Vt_{\gamma,i}(s)  + 5\frac{1-\gamma}{n_i}+ \frac{2 \alpha(\delta, n_i)^2}{(1-\gamma) n_i} + 4 \alpha(\delta, n_i) \sqrt{\frac{\tspannorm{\Vt_{\gamma, i}} +1+ \frac{3}{n_i}}{n_i}} $
\State Compute lower bound $\Lb_i(\gamma) := (1-\gamma) \min_s \Vt_{\gamma, i} (s)   - 2\frac{1-\gamma}{n_i} - \alpha(\delta, n_i) \sqrt{\frac{ \tspannorm{\Vt_{\gamma,i}} +\frac{3}{n_i} + 1 }{n_i}} $
\EndFor
\State Find discount factor with the smallest interval $\gammahat_i := \argmin_{\gamma \in \hzns_i} \Ub_i(\gamma) - \Lb_i(\gamma)$
\Until{$\Ub_i(\gammahat_i) - \Lb_i(\gammahat_i) \leq \varepsilon$}
\State \Return policy $\pihat := \pit_{\gammahat_i, i}$, optimal gain upper bound $\Ub := \Ub_i(\gammahat_i)$ and lower bound $\Lb := \Lb_i(\gammahat_i)$
\end{algorithmic}
\end{algorithm}

\begin{thm}
    \label{thm:eps_based_alg}
    Suppose $P$ is weakly communicating. There exist constants $C_1, C_2$ such that for any $\varepsilon > 0$, with probability at least $1-\delta$, Algorithm \ref{alg:eps_based_alg} uses at most
    \begin{align*}
        N := 4 C_1  \frac{\spannorm{h^\star}+1}{\varepsilon^2} \log^3 \bigg( \frac{C_2 SA (\spannorm{h^\star}+1)}{\delta \varepsilon}\bigg)
    \end{align*}
    samples per state-action pair
    and terminates after at most $\log_2(N)$ outer iterations.
    Upon termination Algorithm \ref{alg:eps_based_alg} returns a policy $\pihat$ and estimates $ \Ub, \Lb$ such that (elementwise)
    \begin{align*}
        \Lb \one \leq \rho^{\pihat} \leq \rho^\star \leq \Ub \one \qquad \text{and} \qquad \Ub - \Lb \leq \varepsilon.
    \end{align*}
    In particular, we have 
    \begin{align*}
       \rho^{\pihat} \geq \rho^\star -\varepsilon \one.
    \end{align*}
\end{thm}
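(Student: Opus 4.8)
The plan is to reduce everything to a single high-probability ``good event'' $\mathcal{E}$ on which, simultaneously for every outer iteration $i$ and every $\gamma \in \hzns_i$, the quantities $\Lb_i(\gamma)$ and $\Ub_i(\gamma)$ are valid two-sided bounds on the optimal gain, and then to argue correctness (from validity) and termination (from an interval-width bound) separately. Because the number of iterations is a priori unbounded, the union bound needs care: I would exploit the polynomial $n^5$ factor inside $\alpha(\delta,n)$, which lets me allocate a per-iteration failure probability at scale $n_i$ so that, since $n_i = 2^i$, the failures are summable over $i$ and over the $O(\log n_i)$ grid points and $SA$ state-action pairs, keeping the total below $\delta$. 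On $\mathcal{E}$ I want two elementwise validity statements: the lower bound $\Lb_i(\gamma) \le \rho^{\pit_{\gamma,i}}$ and the upper bound $\rho^\star \le \Ub_i(\gamma)$. The lower bound coincides with the certificate $\Obj(\gamma)$ already analyzed for Algorithm~\ref{alg:n_based_alg}, so I would reuse that argument verbatim. Granting both, correctness is immediate: since $\rho^\star$ is constant (weakly communicating) and $\rho^{\pihat}\le\rho^\star$ always, at termination $\Lb\one \le \rho^{\pihat} \le \rho^\star \le \Ub\one$, the stopping rule gives $\Ub-\Lb\le\varepsilon$, and combining $\Ub\ge\rho^\star$ with $\Ub-\Lb\le\varepsilon$ yields $\rho^{\pihat}\ge\Lb\one\ge(\rho^\star-\varepsilon)\one$, the final claim.

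The genuinely new ingredient is the upper-bound validity $\rho^\star \le \Ub_i(\gamma)$, which I would prove by chaining a discounted-to-average comparison in the true MDP with an empirical-to-true transfer. Applying the Blackwell-optimal policy $\pistar$ gives $\rho^\star = \rho^{\pistar} \le (1-\gamma)\max_s V_\gamma^\star(s) + (1-\gamma)\,\text{const}\cdot\spannorm{h^\star}$ via the standard bound $\infnorm{(1-\gamma)V_\gamma^\star - \rho^\star\one} \lesssim (1-\gamma)\spannorm{h^\star}$. I would then transfer $(1-\gamma)\max_s V_\gamma^\star(s)$ to the empirical $(1-\gamma)\max_s \Vhatstar_\gamma(s)$, and on to $\Vt_{\gamma,i}$ up to the $1/n_i$ solver error, using the variance-aware concentration of empirical optimal discounted values that underlies the span-based analysis; this produces exactly the $\frac{\alpha^2}{(1-\gamma)n_i}$ and $\alpha\sqrt{(\tspannorm{\Vt_{\gamma,i}}+1)/n_i}$ error terms of $\Ub_i(\gamma)$, with the observable empirical span $\tspannorm{\Vt_{\gamma,i}}$ replacing the unknown $\spannorm{h^\star}$. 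The point enabling knowledge-freeness is that every correction term in $\Ub_i(\gamma)$ is observable, while the discounting bias $(1-\gamma)\spannorm{h^\star}$ is absorbed because the leading empirical term already overshoots $\rho^\star$ in the correct direction.

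For termination and the sample bound I would bound the width $\Ub_i(\gamma)-\Lb_i(\gamma) = (1-\gamma)\tspannorm{\Vt_{\gamma,i}} + \frac{2\alpha^2}{(1-\gamma)n_i} + O\big(\alpha\sqrt{(\tspannorm{\Vt_{\gamma,i}}+1)/n_i} + \tfrac{1-\gamma}{n_i}\big)$ and then bound $\tspannorm{\Vt_{\gamma,i}}$ above by $\text{const}\cdot\spannorm{h^\star}$ plus concentration error (the one-sided upper span concentration). Treating $x = 1-\gamma$ as free, the dominant terms $x\spannorm{h^\star} + \frac{\alpha^2}{x n_i}$ are minimized at effective horizon $\tfrac1x \asymp \sqrt{n_i\spannorm{h^\star}}/\alpha$, giving width $\Otilde(\sqrt{(\spannorm{h^\star}+1)/n_i})$; the ``$+1$'' together with the geometric grid $\hzns_i \subseteq [\sqrt{n_i}, n_i]$ guarantees a grid point within a constant factor of this optimizer, covering small and large $\spannorm{h^\star}$. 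Since $\gammahat_i$ minimizes the width, the stopping test fires once $n_i \gtrsim \alpha(\delta,n_i)^2(\spannorm{h^\star}+1)/\varepsilon^2$; resolving the self-referential dependence of $\alpha$ on $n_i$, and using $n_i = 2^i$ so the first crossing overshoots by at most a factor of two, yields the stated $N$ with its $\log^3$ factor and the $\log_2 N$ iteration count.

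I expect the main obstacle to be the upper-bound validity $\rho^\star \le \Ub_i(\gamma)$: unlike the lower bound it is not supplied by the fixed-$n$ analysis and requires controlling the discounted-reduction bias \emph{from above} using only observable quantities. A closely related secondary difficulty is establishing two-sided concentration of the empirical span $\tspannorm{\Vt_{\gamma,i}}$ around $\spannorm{h^\star}$ --- the upper one-sided direction feeds the width bound, while both directions are needed to make the observable error terms simultaneously valid and small.
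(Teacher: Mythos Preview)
Your high-level structure (good event, validity of $\Lb_i$ and $\Ub_i$, then width control at the optimal horizon) matches the paper's, and the lower-bound and termination arguments are essentially right. The genuine gap is in the upper-bound validity, exactly where you flagged the difficulty.

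You propose to start from the standard reduction $\rho^\star \le (1-\gamma)\max_s V_\gamma^\star(s) + c(1-\gamma)\spannorm{h^\star}$ and then assert that the $(1-\gamma)\spannorm{h^\star}$ bias is ``absorbed because the leading empirical term already overshoots $\rho^\star$.'' But $\Ub_i(\gamma)$ contains no $(1-\gamma)\spannorm{h^\star}$ term, and $(1-\gamma)\max_s \Vt_{\gamma,i}(s)$ by itself is \emph{not} guaranteed to exceed $\rho^\star$---that is precisely why $\Ub_i(\gamma)$ adds positive corrections. So either you still owe a $(1-\gamma)\spannorm{h^\star}$ slack that the formula for $\Ub_i(\gamma)$ does not supply, or your ``overshoots'' remark is really the claim $\rho^\star \le (1-\gamma)\max_s V_\gamma^\star(s)$ with no $\spannorm{h^\star}$ correction at all. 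The paper proves exactly this sharper one-sided inequality (their Lemma~\ref{lem:AMDP_DMDP_relationships}): writing $\rho^\star(s) = (1-\gamma)\,e_s^\top P_{\pistar}^\infty V_\gamma^{\pistar} \le (1-\gamma)\,e_s^\top P_{\pistar}^\infty V_\gamma^{\star}$ exhibits $\rho^\star(s)$ as a convex combination of entries of $(1-\gamma)V_\gamma^\star$, hence $\le (1-\gamma)\max_s V_\gamma^\star(s)$. This is the key new ingredient that makes the upper bound fully observable; you need to state and prove it rather than invoke the standard bound and hope the bias disappears.

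A second, smaller omission: to pass from $(1-\gamma)\max_s V_\gamma^\star(s)$ to $(1-\gamma)\max_s \Vt_{\gamma,i}(s)$ plus the displayed corrections you need $\infnorm{\Vhat_\gamma^\star - V_\gamma^\star}$ bounded in terms of the \emph{observable} span $\tspannorm{\Vhat_\gamma^\star}$, not $\tspannorm{V_\gamma^\star}$. The paper obtains this via a short recursion (their Lemma~\ref{lem:recursive_err_bds_abstract}): one first has $\infnorm{\Vhat_\gamma^\star - V_\gamma^\star} \lesssim \tfrac{1}{1-\gamma}\sqrt{(\tspannorm{V_\gamma^\star}+1)/n_i}$, then replaces $\tspannorm{V_\gamma^\star}$ by $\tspannorm{\Vhat_\gamma^\star} + 2\infnorm{\Vhat_\gamma^\star - V_\gamma^\star}$ and solves the resulting quadratic in $\sqrt{\infnorm{\Vhat_\gamma^\star - V_\gamma^\star}}$. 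This is what produces both the $\tfrac{\alpha^2}{(1-\gamma)n_i}$ term and the $\alpha\sqrt{(\tspannorm{\Vt_{\gamma,i}}+1)/n_i}$ term with the right coefficients; your phrase ``variance-aware concentration'' gestures at it but you should make the self-referential step explicit.
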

In fact, as shown in Lemma \ref{lem:CI_validity_eps_based}, the quantities $\Ub_i(\gamma)$ are valid upper bounds of $\rho^\star$ for all $i$ and $\gamma \in \hzns_i$, so the algorithm would still be correct if we instead used $\min_{i, \gamma \in \hzns_i} \Ub_i(\gamma) - \max_{i', \gamma' \in \hzns_i} \Lb_{i'}(\gamma') \leq \varepsilon$ as a termination condition; that is, we could use a different $\gamma, i$ to compute the upper bound than the lower bound. The output policy should correspond to the best lower bound, that is, the $\pit_{\gamma_i, i}$ such that $i, \gamma \in \argmax_{i', \gamma' \in \hzns_i} \Lb_{i'}(\gamma')$. Also, within each outer-level iteration $i$ of Algorithm~\ref{alg:eps_based_alg} we consider $O(\log n_i)$ values of $\gamma$, so by Theorem~\ref{thm:eps_based_alg} the algorithm terminates after $\Otilde(1)$ total iterations. 

Compared to the algorithm of \cite{zurek_span-based_2025}, which also takes a discounted reduction approach but uses prior knowledge of $\spannorm{h^\star}$ to set the discount factor, the $\gammahat_i$ chosen by our algorithm should not be seen as implicitly estimating $\spannorm{h^\star}$ and in fact cannot be used to do so (consistent with known hardness results on span estimation~\citep{tuynman_finding_2024,zurek_span-based_2025}). Rather, $\gammahat_i$ can be understood as balancing bounds on certain approximation and estimation error terms, potentially in a superior way than the $\spannorm{h^\star}$-knowledge-based choice for non-worst-case instances.
See Appendix \ref{sec:gammahat_hstar} for further discussion of the relationship between $\spannorm{h^\star}$ and the $\gammahat_i$ computed in each iteration of our algorithm.

\subsection{Span Penalization and Oracle Inequalities}
\label{sec:span_regularization}

Finally, we return to the fixed-$n$ setting and the goal of implementing the  formulation~\eqref{eq:SVP_analogue} that resembles sample variance penalization. As we see momentarily, doing so allows us to obtain a stronger ``oracle inequality'' that optimally trades off suboptimality and complexity.

By superfluously introducing a span constraint of the form $\tspannorm{\Vhat_\gamma^\pi} \leq \spb$ to~\eqref{eq:SVP_analogue} and optimizing over $\spb$, we obtain the equivalent optimization problem
\begin{align}
    \text{~\eqref{eq:SVP_analogue}} 
    \;\equiv\; \max_{\gamma, \spb} \max_{\pi : \tspannorm{\Vhat_\gamma^\pi} \leq \spb}  (1- \gamma) \min_s \Vhat_\gamma^\pi(s) - \sqrt{\frac{\spb + 1}{n}}. \nonumber
\end{align}
These manipulations are useful if, for fixed $(\gamma, \spb)$, we are able to solve the span-constrained optimization problem $\max_{\pi : \tspannorm{\Vhat_\gamma^\pi} \leq \spb}  \Vhat_\gamma^\pi$. A natural approach is to attempt to apply value iteration with a span truncation step. This is inspired by \cite{fruit_efficient_2018}, who first introduced this truncation operator and combined it with average-reward (undiscounted) value iteration to solve a certain bias-constrained gain optimization problem. Thus we define the span truncation operator $\Clip_{\spb} : \R^{\S} \to \R^{\S}$ where
    \begin{align}
        \Clip_\spb(V)(s) := \begin{cases}
            V(s) & \text{if $V(s) \leq \spb + \min_{s'} V(s')$} \\
            \spb + \min_{s'} V(s') & \text{otherwise,}
        \end{cases}
    \end{align}
or equivalently $\Clip_\spb(V) = \min\{V, (\spb+\min_{s'} V(s'))\one \}$, where the outer $\min$ is elementwise. By combining $\Clip_{\spb}$, which is $\infnorm{\cdot}$-nonexpansive, with the discounted Bellman operator $\T_\gamma(V):=M(r+\gamma PV)$, we can now define our \emph{Span-Constrained Planning} subroutine, given as Algorithm \ref{alg:span_constrained_planning}.

\begin{algorithm}[h]
\caption{Span-Constrained Planning}
\label{alg:span_constrained_planning}
\begin{algorithmic}[1]
\Require Discounted MDP $(P, r, \gamma)$, span constraint bound $\spb > 0$, target error $\varepsilon > 0$
\State Form clipped discounted Bellman operator $\L := \Clip_{\spb} \circ \T_\gamma$
\State Set initial point $V^0 = \zero \in \R^{\S}$, total iteration count $T = \left \lceil \frac{\log (\frac{3 }{(1-\gamma)^2 \varepsilon})}{1-\gamma} \right \rceil$
\For{$t = 0, \dots, T-1$}
\State $V^{t+1} = \L(V^t)$
\EndFor
\State Set $\pihat(s) \in \argmax_{a \in \A} r(s,a) + \gamma P_{sa} V^T$ for all $ s \in \S$
\State Compute minimum state value $m = \min_{s \in \S} V^T(s)$
\State Set truncated reward $\rt(s,a) = \min \{m + \spb - \gamma P_{sa} V^T, r(s,a) \}$ for all $s \in \S$, $a \in \A$
\State \Return policy $\pihat$, approximate value function $V^T$, truncated reward $\rt$
\end{algorithmic}
\end{algorithm}

Now we discuss why this subroutine returns a truncated reward $\rt$ and whether it solves the aforementioned span-constrained planning problem $\max_{\pi : \tspannorm{V_\gamma^\pi} \leq \spb}  V_\gamma^\pi$. 
(Here we discuss value functions based on a generic $P$ rather than $\Phat$ for clarity.)
While the clipped Bellman operator $\L = \Clip_{\spb} \circ \T_\gamma$ has a unique fixed point $V_{\gamma, \spb}^\star$, this fixed point generally may not satisfy any Bellman equation $V_{\gamma, \spb}^\star = M^\pi(r + \gamma P V_{\gamma, \spb}^\star)$ for any policy $\pi$. This is to be expected, because such a Bellman equation would imply that policy $\pi$ has value function $V_\gamma^\pi = V_{\gamma, \spb}^\star$ and thus $\tspannorm{V_\gamma^\pi} \leq \spb$, but it is possible that no policies have value functions with span $\leq \spb$. (For example consider an MDP with $A=1$ where the only policy has a value function with large span.) The truncated reward $\rt$ remedies both of these closely related issues: it is defined to (approximately) satisfy a Bellman equation $V_{\gamma, \spb}^\star = M^{\pihat}(\rt + \gamma P V_{\gamma, \spb}^\star)$ for some $\pihat$, which would then imply that in the DMDP $(P, \rt, \gamma)$ with the truncated reward, the value function $V_{\gamma, \rt}^{\pihat}$ of policy $\pihat$ does indeed have $\tspannorm{V_{\gamma, \rt}^{\pihat}} \leq \spb$. 
Thus, we do not actually solve $\max_{\pi : \tspannorm{V_\gamma^\pi} \leq \spb}  V_\gamma^\pi$, but instead the relaxed problem $\max_{\pi, \rt : \tspannorm{V_{\gamma,\rt}^\pi} \leq \spb, \rt \leq r}  V_{\gamma, \rt}^\pi$, which leads to a value function $V_{\gamma,\rt}^{\pihat}$ that is at least as large as $\max_{\pi : \tspannorm{V_\gamma^\pi} \leq \spb}  V_\gamma^\pi$ yet still has span bounded by $\spb$.
And, since elementwise $\rt \leq r$, the actual value function of $\pihat$, $V_{\gamma}^{\pihat}$, is lower-bounded by $V_{\gamma, \rt}^{\pihat}$, which is still compatible with the lower-bound-based approach taken in Algorithm \ref{alg:span_regularization}, which we are now prepared to develop. 

The key properties of the Span-Constrained Planning Algorithm \ref{alg:span_constrained_planning} are summarized in the following lemma.
\begin{lem}
    \label{lem:span_constrained_planning_subroutine}
    The operator $\L$ defined in Algorithm~\ref{alg:span_constrained_planning} is $\gamma$-contractive and has unique fixed point $V_{\gamma, \spb}^\star$. 
    Moreover, the $\pihat, V^T$, and $ \rt$ returned by Algorithm \ref{alg:span_constrained_planning} satisfy
    \begin{enumerate}[nosep]
        \item (proximity to exact fixed point) $\tinfnorm{V^T - V^\star_{\gamma, \spb}} \leq \varepsilon$;
        \item (near-feasibility of $\rt$ and $\pihat$) $\rt \leq r$, $\tinfnorm{V^{\pihat}_{\gamma, \rt} - V^\star_{\gamma, \spb}} \leq \varepsilon$, and $\tspannorm{V^{\pihat}_{\gamma, \rt}}\leq \spb + 2\varepsilon$;
        \item (near-optimality of $\pihat$) for any policy $\pi'$ and reward function $r' \leq r$ such that $\tspannorm{V_{\gamma, r'}^{\pi'}} \leq \spb$, we have $V^\star_{\gamma, \spb} \geq V_{\gamma, r'}^{\pi'}$ and $V_\gamma^{\pihat} \geq V^{\pihat}_{\gamma, \rt} \geq V^\star_{\gamma, \spb} - \varepsilon\one \geq V_{\gamma, r'}^{\pi'} - \varepsilon\one$.
    \end{enumerate}
\end{lem}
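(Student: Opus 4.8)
The plan is to establish the contraction first and then dispatch the three numbered claims in order, relying throughout on one key observation: the truncated reward $\rt$ together with the \emph{greedy} policy $\pihat$ reproduces exactly the clipping step inside a Bellman backup. For the contraction, I would note that $\T_\gamma(V) = M(r + \gamma P V)$ is $\gamma$-contractive in $\infnorm{\cdot}$ (since $M$ and multiplication by the row-stochastic $P$ are both $\infnorm{\cdot}$-nonexpansive) while $\Clip_\spb$ is $\infnorm{\cdot}$-nonexpansive as already noted, so $\L = \Clip_\spb \circ \T_\gamma$ is $\gamma$-contractive and Banach's theorem yields the unique fixed point $V^\star := V^\star_{\gamma,\spb}$. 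For claim~1, since $V^0 = \zero$ and $r \in [0,1]$, an easy induction shows every iterate and the fixed point lie in $[0,\tfrac{1}{1-\gamma}]$, so $\infnorm{V^0 - V^\star} \leq \tfrac{1}{1-\gamma}$ and contraction gives $\infnorm{V^T - V^\star} \leq \tfrac{\gamma^T}{1-\gamma} =: \varepsilon_0$. The choice of $T$ makes $\gamma^T \leq e^{-(1-\gamma)T} \leq \tfrac{(1-\gamma)^2\varepsilon}{3}$, hence $\varepsilon_0 \leq \tfrac{(1-\gamma)\varepsilon}{3} \leq \varepsilon$, proving claim~1.

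Claim~2 is the technical heart. The key identity is that for the greedy $\pihat$ and with $\widetilde{M} := m + \spb = \min_s V^T(s) + \spb$, the definition of $\rt$ gives, pointwise, $\rt(s,\pihat(s)) + \gamma P_{s\pihat(s)} V^T = \min\{\widetilde{M},\, \T_\gamma(V^T)_s\}$, i.e.\ $M^{\pihat}(\rt + \gamma P V^T) = \min\{\widetilde{M}\one,\, \T_\gamma(V^T)\}$. On the other hand, because $\Clip_\spb$ preserves the minimum entry we have $\min_s V^T(s) = \min_s \T_\gamma(V^{T-1})_s$, so $V^T = \Clip_\spb(\T_\gamma(V^{T-1})) = \min\{\widetilde{M}\one,\, \T_\gamma(V^{T-1})\}$. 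Letting $\mathcal{B}(W) := \rt_{\pihat} + \gamma P_{\pihat} W$ denote the affine $\gamma$-contraction whose fixed point is $V^{\pihat}_{\gamma,\rt}$, subtracting these two displays and using that $x \mapsto \min\{c,x\}$ is $1$-Lipschitz yields $\infnorm{\mathcal{B}(V^T) - V^T} \leq \gamma \infnorm{\T_\gamma(V^T) - \T_\gamma(V^{T-1})} \leq \gamma^T$, where $\infnorm{V^T - V^{T-1}} \leq \gamma^{T-1}$ follows from contraction and $\infnorm{V^1 - V^0} = \infnorm{\Clip_\spb(M(r))} \leq 1$. Contractivity of $\mathcal{B}$ then gives $\infnorm{V^T - V^{\pihat}_{\gamma,\rt}} \leq \tfrac{\gamma^T}{1-\gamma} = \varepsilon_0$, so $\infnorm{V^{\pihat}_{\gamma,\rt} - V^\star} \leq 2\varepsilon_0 \leq \varepsilon$; the span bound follows because $V^\star$ is an output of $\Clip_\spb$ and hence has span $\leq \spb$, while $\tspannorm{\cdot}$ is $2$-Lipschitz in $\infnorm{\cdot}$. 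The inequality $\rt \leq r$ is immediate from the definition.

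For claim~3, Part~A, given any $\pi'$ and $r' \leq r$ with $\tspannorm{V^{\pi'}_{\gamma,r'}} \leq \spb$, I would show $W \leq \L(W)$ for $W := V^{\pi'}_{\gamma,r'}$ and then conclude $W \leq V^\star$ by monotone iteration. Since $r' \leq r$, we have $W = r'_{\pi'} + \gamma P_{\pi'} W \leq M(r + \gamma P W) = \T_\gamma(W)$; taking minima gives $\min_s W(s) \leq \min_s \T_\gamma(W)_s$, so the span bound yields $W(s) \leq \spb + \min_s W(s) \leq \spb + \min_s \T_\gamma(W)_s$, which together with $W \leq \T_\gamma(W)$ is precisely $W \leq \Clip_\spb(\T_\gamma(W)) = \L(W)$. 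As $\L$ is monotone (both $\Clip_\spb$ and $\T_\gamma$ are) and $\gamma$-contractive with fixed point $V^\star$, iterating gives the increasing chain $W \leq \L(W) \leq \L^2(W) \leq \cdots \to V^\star$, so $W \leq V^\star$. Part~B then assembles the chain: $V^{\pihat}_\gamma \geq V^{\pihat}_{\gamma,\rt}$ because $\rt \leq r$ and value functions are monotone in the reward, $V^{\pihat}_{\gamma,\rt} \geq V^\star - \varepsilon\one$ from claim~2, and $V^\star \geq V^{\pi'}_{\gamma,r'}$ from Part~A.

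The main obstacle I anticipate is claim~2: correctly recognizing and verifying the pointwise identity $M^{\pihat}(\rt + \gamma P V^T) = \min\{\widetilde{M}\one,\, \T_\gamma(V^T)\}$, and then propagating the $O(\gamma^T)$ one-step residual through the contraction of $\mathcal{B}$ so that the final bound lands at $\varepsilon$ rather than a constant multiple of it. This is where the precise form of $\rt$, the greedy choice of $\pihat$, the min-preserving property of $\Clip_\spb$, and the specific value of $T$ all must fit together. The monotone-iteration argument of Part~A is the other nonroutine point, but it is short once the inequality $W \leq \L(W)$ is established.
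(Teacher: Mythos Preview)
Your proposal is correct and follows the same overall skeleton as the paper, but two of your sub-arguments are executed differently and, in my view, more cleanly.

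For claim~2, the paper bounds $\infnorm{\L(V^T) - (\rt_{\pihat}+\gamma P_{\pihat}V^T)}$ by comparing two elementwise minima with \emph{different} thresholds ($\spb+\min_s \T_\gamma(V^T)(s)$ versus $\spb+\min_s V^T(s)$), reducing this to $\infnorm{\L(V^T)-V^T}$, and then runs a triangle inequality through $\L(V^T)$. Your observation that $\mathcal B(V^T)$ and $V^T$ are both $\min\{\widetilde M\one,\,\cdot\}$ with the \emph{same} threshold $\widetilde M = \spb + \min_s V^T(s)$ is sharper: it lets you bound the one-step residual $\infnorm{\mathcal B(V^T)-V^T}$ directly by $\infnorm{\T_\gamma(V^T)-\T_\gamma(V^{T-1})}\le\gamma^T$, and then a single application of the contraction of $\mathcal B$ finishes. (Minor nit: the displayed ``$\le \gamma\,\infnorm{\T_\gamma(V^T)-\T_\gamma(V^{T-1})}$'' has a spurious leading $\gamma$; the $1$-Lipschitz step gives no $\gamma$, and the $\gamma$ enters when you contract $\T_\gamma$. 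The final $\gamma^T$ bound is unaffected.)

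For claim~3, the paper introduces the auxiliary operator $\L^{\pi'}=\Clip_\spb\circ\T^{\pi'}_{\gamma,r'}$, shows $\L^{\pi'}\le \L$ pointwise, and compares Picard iterates of both from $\zero$. Your route---verify $W:=V^{\pi'}_{\gamma,r'}$ satisfies $W\le \L(W)$ and iterate $\L$ from $W$---uses only monotonicity and the fixed-point property of $W$ itself, avoiding the auxiliary operator entirely. Both are standard monotone-contraction arguments; yours is shorter.
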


We can now present our final Algorithm \ref{alg:span_regularization}, titled \emph{Empirical Span Penalization}. In summary, it can be understood as solving the problem
\begin{align*}
    \max_{\gamma, \spb} \max_{\pi, \rt : \tspannorm{\Vhat_{\gamma,\rt}^\pi} \leq \spb, \rt \leq r}  \min_s \;\; \Vhat_{\gamma, \rt}^\pi(s) - \sqrt{\frac{\spb  + 1}{n}}.
\end{align*}
For reasons apparent in the proof sketches in Section \ref{sec:proof_sketch}, it is unclear whether there is any provable benefit of optimizing over all $\gamma$ as opposed to choosing $\gamma$ such that $\frac{1}{1-\gamma} \approx \sqrt{n \spb}$, so we opt to only optimize $\spb$ and simply set $\gamma$ in this way. Like for our previous algorithms, we only need to try $O(\log n)$ values of $\spb$ to approximately solve the problem.

\begin{algorithm}[h]
\caption{Empirical Span Penalization}
\label{alg:span_regularization}
\begin{algorithmic}[1]
\Require Sample size per state-action pair $n$
\For{each state-action pair $(s,a) \in \S \times \A$}
\State Collect $n$ samples $S^1_{s,a}, \dots, S^n_{s,a}$ from $P(\cdot \mid s,a)$
\State Form the empirical transition kernel $\Phat(s' \mid s, a) = \frac{1}{n}\sum_{i=1}^n \ind\{S^i_{s,a} = s'\}$, for all $s' \in \S$
\EndFor
\For{$i = 2, \dots, \lceil \log_2 n \rceil$}
\State Set span constraint bound $\spb_i = 2^i$
\State Set discount factor $\gamma_i$ such that $\frac{1}{1-\gamma_i} = \max \big\{\frac{\sqrt{n \spb_i}}{\alpha(\delta, n) 2\sqrt{2}}, 1\big\}$
\State Obtain policy $\pit_i$, value function $\Vt_i$, truncated reward $\rt_i$ from the Span-Constrained Planning Algorithm~\ref{alg:span_constrained_planning} with input DMDP $(\Phat, r, \gamma_i)$, span constraint bound $\spb_i$, and target error $\frac{1}{n}$ \label{alg:span_constrained_planning_step}
\State Compute objective value $\Objsp(i) := (1-\gamma_i)\min_{s} \Vt_i(s) -  \alpha(\delta, n) \sqrt{\frac{\spb_i + 1}{n}}$
\EndFor
\State Find $\widehat{i} = \argmax_{i} \Objsp(i)$
\State \Return policy $\pihat := \pit_{\widehat{i}}$, gain lower bound $\rhohat := \max\{\Objsp(\,\widehat{i}\,), 0\}\one$
\end{algorithmic}
\end{algorithm}

\begin{thm}
\label{thm:span_regularization_performance}
    Suppose $P$ is weakly communicating. For some constant $C_4$, with probability at least $1 - \delta$, the policy $\pihat$ and gain lower bound $\rhohat$ output by Algorithm \ref{alg:span_regularization} satisfy (elementwise)
    \begin{align*}
        \rho^{\pihat} 
        \geq \rhohat 
        \geq \sup_{\pi : \rho^\pi \textup{ constant}} \bigg\{ \rho^\pi  - C_4 \alpha(\delta, n) \sqrt{\frac{\tspannorm{h^\pi}+1}{n}} \one \bigg\} .
    \end{align*}
\end{thm}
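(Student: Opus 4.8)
The plan is to establish the two displayed inequalities separately: the \emph{validity} bound $\rho^{\pihat}\ge\rhohat$ and the \emph{oracle} bound $\rhohat\ge\sup_{\pi:\rho^\pi\text{ const}}\{\cdots\}$. The engine behind both is a sharp span-based confidence inequality for empirical discounted values: on a single high-probability event, for each index $i$ and (via a covering argument over value functions of span $\le \spb_i$, which handles data-dependent $\pi',r'$ and avoids the $\sqrt S$ cost of a naive policy union) every policy $\pi'$ and reward $r'\le r$,
\[
\Bigl| (1-\gamma_i)\min_s \Vhat^{\pi'}_{\gamma_i, r'}(s) - (1-\gamma_i)\min_s V^{\pi'}_{\gamma_i, r'}(s) \Bigr|
\le \tfrac{1}{2}\alpha(\delta,n)\sqrt{\tfrac{\tspannorm{\Vhat^{\pi'}_{\gamma_i, r'}}+1}{n}} + \tfrac{c(1-\gamma_i)}{n},
\]
the crucial feature being that the penalty scales as $\sqrt{\tspannorm{\cdot}/n}$ rather than $\tspannorm{\cdot}/n$. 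This is exactly the form underlying $\Obj$ in Algorithm~\ref{alg:n_based_alg} and the intervals of Lemma~\ref{lem:CI_validity_eps_based}, so I would import that concentration machinery (a total-variance/Bernstein analysis on the empirical DMDP in which the variance is bounded by the span), taking a union over the $O(\log n)$ indices $i$. I pair it with the exact Abelian identity $\rho^{\pi}=(1-\gamma)P^{\ast}_\pi V^{\pi}_{\gamma}$, where $P^{\ast}_\pi:=\lim_{T\to\infty}\tfrac1T\sum_{t=0}^{T-1}P_\pi^t$, which gives for every $\gamma$ and policy the sandwich $(1-\gamma)\min_s V^\pi_\gamma(s)\le\rho^\pi\le(1-\gamma)\max_s V^\pi_\gamma(s)$ (the upper bound using that $\rho^\pi$ is constant for the comparators).

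For validity it suffices to show $\rho^{\pihat}\ge\Objsp(\,\widehat i\,)\one$, since $\rho^{\pihat}\ge\zero$ and $\rhohat=\max\{\Objsp(\,\widehat i\,),0\}\one$. Write $\gamma=\gamma_{\widehat i}$, $\spb=\spb_{\widehat i}$. By Lemma~\ref{lem:span_constrained_planning_subroutine}(2), $\rt_{\widehat i}\le r$, the value $\Vhat^{\pihat}_{\gamma,\rt_{\widehat i}}$ lies within $1/n$ of the clipped fixed point (hence within $2/n$ of $\Vt_{\widehat i}$ in each coordinate) and has span $\le\spb+2/n$. Monotonicity of the gain in the reward gives $\rho^{\pihat}\ge\rho^{\pihat}_{\rt_{\widehat i}}$, and the Abelian identity gives $\rho^{\pihat}_{\rt_{\widehat i}}\ge(1-\gamma)\min_s V^{\pihat}_{\gamma,\rt_{\widehat i}}(s)\,\one$. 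Applying the confidence inequality to $(\pihat,\rt_{\widehat i})$, using the span bound $\spb+2/n$ to control the penalty, converts the true value-min into the empirical value-min, and then into $\min_s\Vt_{\widehat i}$; collecting the $O((1-\gamma)/n)$ and penalty terms reproduces the definition of $\Objsp(\,\widehat i\,)$.

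For the oracle bound, fix a comparator $\pi$ with constant gain and assume the right-hand side is positive (otherwise it is immediate from $\rhohat\ge\zero$); this forces $\tspannorm{h^\pi}+1\lesssim n$ and guarantees an index $i^{*}$ with $\spb_{i^*}$ the smallest power of two exceeding a constant multiple of $\tspannorm{h^\pi}+1$, whence $\tfrac{1}{1-\gamma_{i^*}}\asymp\sqrt{n\,\spb_{i^*}}/\alpha(\delta,n)$. Since $\rhohat\ge\Objsp(\,\widehat i\,)\ge\Objsp(i^*)$, it suffices to lower bound $\Objsp(i^*)$. I lower bound $\min_s\Vt_{i^*}$ through Lemma~\ref{lem:span_constrained_planning_subroutine}(3) by exhibiting a feasible pair $(\pi,r')$ with $r'\le r$ and $\tspannorm{\Vhat^{\pi}_{\gamma_{i^*},r'}}\le\spb_{i^*}$, so that $\Vt_{i^*}\ge\Vhat^{\pi}_{\gamma_{i^*},r'}-\tfrac{2}{n}\one$. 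On the true MDP I use the sandwich in the form $(1-\gamma_{i^*})\min_s V^{\pi}_{\gamma_{i^*}}(s)\ge\rho^\pi-(1-\gamma_{i^*})\tspannorm{V^{\pi}_{\gamma_{i^*}}}$, together with $\tspannorm{V^{\pi}_{\gamma_{i^*}}}=O(\tspannorm{h^\pi}+1)$ for constant-gain policies. With $\spb_{i^*}\asymp\tspannorm{h^\pi}+1$ and $\tfrac{1}{1-\gamma_{i^*}}\asymp\sqrt{n\,\spb_{i^*}}/\alpha$, both the approximation term $(1-\gamma_{i^*})\tspannorm{V^\pi_{\gamma_{i^*}}}$ and the penalty $\alpha\sqrt{(\spb_{i^*}+1)/n}$ are $O(\alpha\sqrt{(\tspannorm{h^\pi}+1)/n})$. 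Chaining these through the confidence inequality yields $\Objsp(i^*)\ge\rho^\pi-C_4\alpha(\delta,n)\sqrt{(\tspannorm{h^\pi}+1)/n}$.

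I expect the achievability step to be the main obstacle, on two fronts. First, I must produce the feasible truncated reward $r'$ realizing $\tspannorm{\Vhat^{\pi}_{\gamma_{i^*},r'}}\le\spb_{i^*}$ while losing only $O(\alpha\sqrt{(\tspannorm{h^\pi}+1)/n})$ in $(1-\gamma_{i^*})\min_s$: although the \emph{true} discounted value of $\pi$ has span $\approx\tspannorm{h^\pi}\le\spb_{i^*}$, its empirical counterpart can overflow the budget by a finite-sample amount, and the truncated-reward relaxation built into Algorithm~\ref{alg:span_constrained_planning} is precisely what lets me absorb this overflow without assuming the comparator's empirical value fits the budget exactly (this is why the relaxed problem $\max_{\pi,\rt}$, rather than $\max_\pi$, is the right object). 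Second, and more fundamentally, the whole argument hinges on the confidence inequality delivering the \emph{sharp} $\sqrt{\tspannorm{\cdot}/n}$ rate: a naive per-policy propagation of $(\Phat-P)$ through $(I-\gamma\Phat_\pi)^{-1}$ only gives the far weaker $\tspannorm{\cdot}/n$ rate, so the heavy lifting is in the span-localized variance analysis carried by that lemma (the doubly-logarithmic factor in $\alpha(\delta,n)$ accommodating the union over the $O(\log n)$ values of $\spb_i$). The circular dependence between $\gamma_{i^*}$ and $\spb_{i^*}$ is mild and resolved by the closed-form choice of $\gamma_i$ in Algorithm~\ref{alg:span_regularization}.
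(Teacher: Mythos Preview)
Your high-level structure (validity bound then oracle bound, with the Abelian sandwich $(1-\gamma)\min_s V^\pi_\gamma(s)\le\rho^\pi$) matches the paper, and the validity argument is essentially correct. But there are two substantive issues.

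\textbf{The concentration inequality is the real gap.} You assert a confidence bound uniform over \emph{all} policies $\pi'$ and rewards $r'\le r$ with $\tspannorm{\Vhat^{\pi'}_{\gamma_i,r'}}\le\spb_i$, obtained by ``a covering argument over value functions of span $\le\spb_i$'' that you claim avoids the $\sqrt S$ cost. This does not work: the $\varepsilon$-covering number of $\{v\in\R^S:\tspannorm v\le\spb_i\}$ is of order $(\spb_i/\varepsilon)^{S}$, so a union bound over the cover puts an extra factor $S$ inside the log and hence $\sqrt S$ in the rate. The paper does \emph{not} attempt any such uniform bound. Instead it needs concentration only at two points per index $i$: the fixed comparator $\pi$ (handled by a standard fixed-policy bound, Lemma~\ref{lem:fixed_policy_error_bound}) and the single data-dependent pair $(\pit_i,\rt_i)$. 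For the latter it builds a new absorbing-MDP / leave-one-out construction adapted to the span-clipped operator $\Clip_\spb\circ\That_\gamma$ (Lemmas~\ref{lem:LOO_properties} and~\ref{lem:span_const_plan_policy_error_bound}): for each state $s$ one replaces $\Phat(\cdot\mid s,\cdot)$ by a self-loop, varies a scalar reward $u\in[0,1]$ at $s$, and shows that the resulting clipped fixed points $\Vhat^{(s,u)}_{\gamma,\spb}$ are (i) independent of the samples at $s$, (ii) $1/(1-\gamma)$-Lipschitz in $u$, and (iii) for a particular $u^\star(s)$ coincide with $\Vhat^\star_{\gamma,\spb}$. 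A union over $S\cdot|U|$ with $|U|=O(n/(1-\gamma))$ then gives Bernstein concentration for $(\Phat_{sa}-P_{sa})\Vhat^\star_{\gamma,\spb}$ without any $\sqrt S$. Since $\Vhat^{\pit_i}_{\gamma_i,\rt_i}$ is within $1/n$ of $\Vhat^\star_{\gamma_i,\spb_i}$, this transfers to the bound you need. You identify this as ``the heavy lifting'' but your proposed mechanism for it is the wrong one.

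\textbf{The feasibility obstacle you anticipate does not arise.} You worry about producing a truncated reward $r'$ so that $\tspannorm{\Vhat^\pi_{\gamma_{i^*},r'}}\le\spb_{i^*}$. The paper simply takes $r'=r$: using the fixed-policy concentration bound and $\tspannorm{V^\pi_{\gamma_{i^*}}}\le 2\tspannorm{h^\pi}$, one gets
\[
\tspannorm{\Vhat^\pi_{\gamma_{i^*}}}\le 2\tspannorm{h^\pi}+2\,\frac{\alpha}{1-\gamma_{i^*}}\sqrt{\tfrac{2\tspannorm{h^\pi}+1}{n}},
\]
and with $\tspannorm{h^\pi}+1\le\spb_{i^*}/4$ and $\tfrac{1}{1-\gamma_{i^*}}=\tfrac{\sqrt{n\spb_{i^*}}}{2\sqrt2\,\alpha}$ the right-hand side is $\le\spb_{i^*}$. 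So $\pi$ with the original reward is already feasible for the span-constrained problem, and Lemma~\ref{lem:span_constrained_planning_subroutine}(3) gives $\Vhat^\star_{\gamma_{i^*},\spb_{i^*}}\ge\Vhat^\pi_{\gamma_{i^*}}$ directly. The reward-truncation relaxation is needed only on the algorithm side (to make $\pit_i$ realize a span-bounded value), not on the comparator side.
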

Theorem \ref{thm:span_regularization_performance} shows that the output policy $\pihat$ satisfies an ``oracle inequality'': $\pihat$ competes with the constant-gain policy $\pi$ that has the best tradeoff between suboptimality and complexity, as measured by $\rho^\star - \rho^{\pihat}$ and $\tspannorm{h^\pi}$, respectively. This bound, when written in the equivalent form
    \begin{align*}
        \rho^\star - \rho^{\pihat} 
        \le \inf_{\pi : \rho^\pi \textup{ constant}} \bigg\{ (\rho^\star - \rho^\pi)  + C_4 \alpha(\delta, n) \sqrt{\frac{\tspannorm{h^\pi}+1}{n}} \one \bigg\} ,
    \end{align*}
resembles the oracle inequalities \citep{deheuvels_lectures_2007, koltchinskii_oracle_2011} from the statistics literature that feature a similar tradeoff.\footnote{An oracle inequality controls the error of a statistical estimator in terms of that of an oracle that selects the optimal model by trading off approximation error and estimation error.}
It is immediate that Theorem \ref{thm:span_regularization_performance} is at least as strong as Theorem \ref{thm:n_based_alg} by setting $\pi$ to be the Blackwell-optimal policy $\pistar$, since $\rho^{\pistar} = \rho^\star$ is constant and $\tspannorm{h^{\pistar}} = \tspannorm{h^\star}$ by definition. 

In fact, since there may generally be many gain-optimal $\pi$'s with $h^\pi \neq h^\star$, Theorem \ref{thm:span_regularization_performance} implies
\begin{align*}
    \rho^{\pihat} \geq \rho^\star - C_4 \alpha(\delta, n) \sqrt{\frac{\inf_{\pi: \rho^\pi = \rho^\star} \tspannorm{h^\pi}+1}{n}} \one,
\end{align*}
thus replacing $\tspannorm{h^\star}$ with the smallest bias of any gain-optimal policy. The above bound does not contradict the minimax lower bound rate of $\sqrt{\tspannorm{h^\star}/n}$ \citep{wang_near_2022}, which is based on worst-case instances, but we present an example in Appendix \ref{sec:examples} where $\inf_{\pi: \rho^\pi = \rho^\star} \tspannorm{h^\pi}$ can be arbitrarily smaller than $\tspannorm{h^\star}$. To the best of our knowledge, no other algorithm can achieve such a bound. 
Another situation where Theorem~\ref{thm:span_regularization_performance} outperforms the minimax rate is when all gain-optimal policies have large span relative to $n$ but there is some near-optimal policy with much smaller span, such that learning a near-optimal policy is still possible. We provide such an example in Appendix~\ref{sec:examples}.
In practice, RL is commonly applied to problems for which exact optimal policies are extremely complicated, and yet such problems may be solved to a reasonable degree of optimality by following relatively simple heuristics. Our result shows that it is possible to be automatically adaptive to the policy with the best tradeoff of complexity and suboptimality.

\section{Proof Sketches}
\label{sec:proof_sketch}

In this section, we outline the key ideas in the proofs of our main theorems.

\subsection{Proof sketch for Theorem \ref{thm:n_based_alg}} 
We start by outlining the ideas behind Algorithm \ref{alg:n_based_alg} and  Theorem \ref{thm:n_based_alg} for the fixed-$n$ setting. 
First we review the approach of \cite{zurek_span-based_2025} and simplifications due to \cite{zurek_plug-approach_2024}, which achieve the same rate as Theorem \ref{thm:n_based_alg} but require knowing $\tspannorm{h^{\star}}$. 
The algorithm of \cite{zurek_span-based_2025} chooses a discount factor $\gamma^\star$ in a way that depends crucially on $\tspannorm{h^{\star}}$, and then solves the $\gamma^\star$-discounted empirical MDP $(\Phat, r, \gamma^\star)$, where $\Phat$ is constructed as in Algorithm~\ref{alg:n_based_alg}. 
The resulting policy $\pihat_{\gamma^\star}$ is shown to be near-optimal for the AMDP $(P, r)$ using the reduction from \cite{wang_near_2022}. 
We now illustrate how this optimally-chosen $\gamma^\star$ trades off certain approximation and learning error terms. Letting $\pihat_\gamma$ be the policy output by the above-described procedure but with a general discount factor $\gamma$, it is shown that 
    \begin{align}
        \infnorm{\rho^{\pihat_\gamma} - \rho^\star} &\overset{(i)}{\lesssim} (1-\gamma) \left( \tspannorm{h^{\star}} + \|V_\gamma^{\pihat_\gamma} - V_\gamma^\star\|_\infty \right) \nonumber\\
        & \overset{(ii)}{\lesssim} (1-\gamma) \Bigg( \tspannorm{h^{\star}} + \frac{1}{1-\gamma} \sqrt{\frac{\tspannorm{V_\gamma^\star} + \tspannorm{\Vhat_\gamma^\star} +1}{n}} \Bigg) \nonumber\\
        & \overset{(iii)}{\lesssim} (1-\gamma)  \tspannorm{h^{\star}} +  \sqrt{\frac{\tspannorm{h^{\star}} + \tspannorm{\Vhat_\gamma^\star} +1}{n}}, \label{eq:proof_sketch_1a}
    \end{align}  
    where the notation $\lesssim$ ignores constant/log factors, (i) is the AMDP-to-DMDP reduction of \cite{wang_near_2022}, (ii) uses \citet[Theorem 9]{zurek_plug-approach_2024} to bound $\tinfnorm{V_\gamma^{\pihat_\gamma} - V_\gamma^\star}$ in terms of the variance parameters $\tspannorm{V_\gamma^\star}, \tspannorm{\Vhat_\gamma^\star}$, and (iii) uses $\tspannorm{V_\gamma^\star} \lesssim \tspannorm{h^{\star}}$ due to \citet[Lemma 2]{wei_model-free_2020}. 
    To proceed, one can invoke \citet[Lemma 12]{zurek_plug-approach_2024} (itself summarizing a key step due to \citealt{zurek_span-based_2025} similar to our Lemma \ref{lem:recursive_err_bds_abstract}), which controls the variance parameter $\tspannorm{\Vhat_\gamma^\star}$ associated with the empirically optimal policy $\pihat$ like
    \begin{align}
         \bigspannorm{\Vhat_\gamma^\star} \lesssim \bigspannorm{V_\gamma^\star} + \infnorm{\Vhat_\gamma^\star - V_\gamma^\star} \lesssim \tspannorm{h^{\star}} + \frac{1}{1-\gamma}\sqrt{\frac{\tspannorm{h^{\star}} + \tspannorm{\Vhat_\gamma^\star} +1}{n}}.  \label{eq:proof_sketch_2}
    \end{align}
    Solving this recursion yields $\tspannorm{\Vhat_\gamma^\star} \lesssim \tspannorm{h^{\star}} + \frac{1}{(1-\gamma)^2 n}$. Plugging back into~\eqref{eq:proof_sketch_1a} gives the following bound and in particular the term $T_3$: 
    \begin{align}
        \infnorm{\rho^{\pihat_\gamma} - \rho^\star} \lesssim
        \underbrace{(1-\gamma)  \tspannorm{h^{\star}}}_{T_1} +  \underbrace{\sqrt{\frac{\tspannorm{h^{\star}} +1}{n}}}_{T_2} + \underbrace{\frac{1}{(1-\gamma)n}}_{T_3} \label{eq:proof_sketch_1}
    \end{align}
    Intuitively, this argument is reminiscent of localization techniques in statistical learning (e.g., \citealt{bartlett_local_2005}) since it controls the variance parameters associated with near-optimal policies in terms of the variance of an optimal policy. For this reason we call $T_3$ the \emph{localization error}. Term $T_1$ is the approximation error due to the DMDP reduction, and term $T_2$ is independent of $\gamma$ and is exactly the desired minimax rate. From~\eqref{eq:proof_sketch_1} it is immediate that choosing $\gamma^\star$ so that $\frac{1}{1-\gamma^\star} \approx \sqrt{n (\tspannorm{h^{\star} }+1)}$ will balance all three terms and achieve the minimax rate.

    Now we derive an algorithm which does not require prior knowledge of $\tspannorm{h^{\star}}$. The simplest idea is to attempt to estimate $\tspannorm{h^{\star}}$ and plug the estimate into the formula for $\gamma^\star$. However, estimating $\tspannorm{h^{\star}}$ up to a constant factor is shown to be impossible in the worst case \citep{zurek_span-based_2025,tuynman_finding_2024}.
    Moreover, \citet[Theorem 14]{zurek_plug-approach_2024} provides an example where choosing $\gamma$ too large provably fails to obtain the minimax-optimal rate, suggesting that the localization error term $T_3$ could not be removed with a different analysis. 
    A partially successful approach is to attempt to replace the terms in~\eqref{eq:proof_sketch_1} with \emph{observable/estimable} upper bounds that depend on $\gamma$, because observability would enable us to minimize these upper bounds over $\gamma$. Doing a similar ``localization'' approach as in~\eqref{eq:proof_sketch_2}, but instead upper-bounding $\tspannorm{V_\gamma^\star}$ in terms of $\tspannorm{\Vhat_\gamma^\star}$, one obtains $\tspannorm{V_\gamma^\star} \lesssim \tspannorm{\Vhat_\gamma^\star} + \frac{1}{(1-\gamma)^2 n}$. Using this bound but otherwise following the derivation of~\eqref{eq:proof_sketch_1}, we can get
    \begin{align}
        \infnorm{\rho^{\pihat_\gamma} - \rho^\star} 
        & \lesssim \underbrace{(1-\gamma)  \tspannorm{h^{\star}}}_{T_1} +  \underbrace{\sqrt{\frac{\tspannorm{\Vhat_\gamma^\star} +1}{n}}}_{T_2'} + \underbrace{\frac{1}{(1-\gamma)n}}_{T_3} , \label{eq:proof_sketch_3}
    \end{align}
    where now both $T_3$ and $T_2'$ are observable, and based on~\eqref{eq:proof_sketch_2}, there are some choices of $\gamma$ (including $\gamma^\star$) for which $T_2'$ will not be too much larger than $T_2$. However, $T_1$ still depends on the unknown $\tspannorm{h^{\star}}$.
    This approach can actually be salvaged with one key modification and forms the basis of our result (Theorem \ref{thm:eps_based_alg}), but here we take a slightly different approach to address term $T_1$.
    Since $T_1$ is decreasing monotonically in $\gamma$, we believe \emph{Lepski's trick} \citep{lepski1997adaptive} could actually be applied to estimate $\rho^\star$ with the minimax rate, but obtaining a policy with a matching order of suboptimality appears to require more work.

    In particular, instead of optimizing a bound on the suboptimality $\tinfnorm{\rho^{\pihat_\gamma} - \rho^\star}$ of the policy $\pihat_\gamma$ that is optimal for the empirical DMDP $(\Phat, r, \gamma)$,  we optimize a lower bound on the performance of $\pihat_\gamma$ since we are considering the fixed-$n$ setting. For any $\pi$ and any $\gamma$, it is known that $\rho^\pi \geq (1-\gamma) V_\gamma^{\pi} - (1-\gamma) \tspannorm{V_\gamma^{\pi}}\one$.\footnote{We actually refine this slightly to $\rho^\pi \geq (1-\gamma) \min_s V_\gamma^{\pi}(s) \one$, which is what appears in our algorithms.} The significance of this statement applied to our problem is that if we only desire a lower bound for $\rho^{\pihat_\gamma}$, then the approximation error term $T_1$ can be replaced with $(1-\gamma) \tspannorm{V_\gamma^{\pihat_\gamma}}$, which can be estimated.
    Specifically, using the bound $\tinfnorm{V_\gamma^{\pihat_\gamma} - \Vhat_\gamma^{\star} } \lesssim \frac{1}{1-\gamma} \sqrt{\frac{\tspannorm{\Vhat_\gamma^{\star}} + 1}{n}}$, we have
    \begin{align}
        \rho^{\pihat_\gamma} &\geq (1-\gamma) V_\gamma^{\pihat_\gamma} - (1-\gamma) \tspannorm{V_\gamma^{\pihat_\gamma}}\one \geq (1-\gamma) \Vhat_\gamma^{\star} - (1-\gamma) \Big(\tspannorm{\Vhat_\gamma^{\star}} - C \tinfnorm{V_\gamma^{\pihat_\gamma} - \Vhat_\gamma^{\star} }\Big) \one \nonumber\\
        & \geq (1-\gamma) \Vhat_\gamma^{\star} - (1-\gamma) \tspannorm{\Vhat_\gamma^{\star}}\one - C' \sqrt{\frac{\tspannorm{\Vhat_\gamma^{\star}} + 1}{n}} \one , \label{eq:proof_sketch_4}
    \end{align}
    where $C, C' \leq \Otilde(1)$.
    Let $L(\gamma)$ be the minimum entry of the RHS of~\eqref{eq:proof_sketch_4}. Then $L(\gamma)$ is observable and we can show $L(\gamma^\star) \geq \rho^\star - \sqrt{\frac{\tspannorm{h^{\star}} + 1}{n}}$, so returning the policy $\pi_{\gammahat}$ where $\gammahat = \argmax_\gamma L(\gamma)$, we obtain the minimax rate. (No explicit localization error term appears in~\eqref{eq:proof_sketch_4}, but lower-bounding $L(\gamma)$ requires the ``localization bound'' $\tspannorm{\Vhat_\gamma^\star} \lesssim \tspannorm{h^{\star}} + \frac{1}{(1-\gamma)^2 n}$, so it still appears implicitly in the size of $\tspannorm{\Vhat_\gamma^\star}$.) This is essentially our Theorem \ref{thm:n_based_alg}.

    \subsection{Proof sketch for Theorem \ref{thm:eps_based_alg}} 
    Next we describe the ideas behind Algorithm \ref{alg:eps_based_alg} and Theorem \ref{thm:eps_based_alg} for the fixed-$\varepsilon$ setting. It suffices to develop a method of bounding the suboptimality $\tinfnorm{\rho^{\pihat} - \rho^\star}$ within the fixed-$n$ setting, as we can then double the dataset size until the suboptimality bound is $\leq \varepsilon$. 
    We return to~\eqref{eq:proof_sketch_3} and the problematic term $T_1$, which originates from the AMDP-to-DMDP reduction of \cite{wang_near_2022}. We show that actually $\tspannorm{h^{\star}}$ can be replaced with $\tspannorm{V_\gamma^\star}$, that is,
    \begin{align}
        \infnorm{\rho^{\pihat_\gamma} - \rho^\star} & \lesssim (1-\gamma) \left( \tspannorm{V_\gamma^\star} + \infnorm{V_\gamma^{\pihat_\gamma} - V_\gamma^\star} \right).\label{eq:proof_sketch_5}
    \end{align}
    See Lemma \ref{lem:AMDP_DMDP_relationships} for the precise statement. We could essentially recover the reduction result of \cite{wang_near_2022} (step (i) of~\eqref{eq:proof_sketch_1}) by using $\tspannorm{V_\gamma^\star} \lesssim \tspannorm{h^{\star}}$, but the subtle improvement of~\eqref{eq:proof_sketch_5} gives a new approximation error term $(1-\gamma)\tspannorm{V_\gamma^\star}$ which, with aforementioned arguments, can be bounded by fully observable quantities involving $\Vhat^\star_\gamma$. Specifically,~\eqref{eq:proof_sketch_3} can be replaced with
    \begin{align}
        \infnorm{\rho^{\pihat_\gamma} - \rho^\star} 
        & \lesssim (1-\gamma)  \tspannorm{\Vhat^\star_\gamma} +  \sqrt{\frac{\tspannorm{\Vhat_\gamma^\star} +1}{n_i}} + \frac{1}{(1-\gamma)n_i} , \label{eq:proof_sketch_6}
    \end{align}
    where we now use a variable dataset size $n_i$, since we intend to double the size until a termination condition is satisfied.
    Comparing with our Algorithm \ref{alg:eps_based_alg}, the RHS of~\eqref{eq:proof_sketch_6} is essentially the difference between the upper and lower bounds defined in the algorithm, that is the quantity $\Ub_i(\gamma) - \Lb_i(\gamma)$. Thus Algorithm \ref{alg:eps_based_alg} is slightly more useful than the procedure sketched here, since it provides valid upper and lower bounds rather than just the difference between such bounds, but this does not require much more work. 
    Similarly to the previous algorithm, when $\gamma = \gamma^\star$, the bound~\eqref{eq:proof_sketch_6} will be $\lesssim \sqrt{(\tspannorm{h^{\star}} + 1)/n}$ thanks to the ``localization bound'' $\tspannorm{\Vhat_\gamma^\star} \lesssim \tspannorm{h^{\star}} + \frac{1}{(1-\gamma)^2 n}$ arising from~\eqref{eq:proof_sketch_2}. This concludes our motivation and proof sketch of Theorem~\ref{thm:eps_based_alg}. Also see Appendix \ref{sec:gammahat_hstar} for more discussion on the relationship between $\spannorm{h^\star}$ and the $\gammahat_i$ that minimizes the right hand side of~\eqref{eq:proof_sketch_6} over $\gamma$.

    \subsection{Proof sketch of Theorem~\ref{thm:span_regularization_performance}} 
    For Theorem \ref{thm:span_regularization_performance}, our starting point is a lower bound similar to~\eqref{eq:proof_sketch_4} used within Algorithm \ref{alg:n_based_alg}, but with a few key differences. Suppose we apply the Span-Constrained Planning Algorithm \ref{alg:span_constrained_planning} to the empirical DMDP $(\Phat, r, \gamma)$ with span constraint $\spb$ and, for simplicity, an arbitrarily small target error, to get the policy $\pihat_{\spb}$, truncated reward $\rt$, and the exact fixed point $\Vhat^\star_{\gamma, \spb} = \Vhat^{\pihat_{\spb}}_{\gamma, \rt}$. We also fix a comparator policy $\pi$ such that $\spannorm{h^\pi} + 1 \leq \frac{\spb}{4}$ and $\rho^\pi$ is constant, which enable us to show that $\tspannorm{V_\gamma^\pi} \lesssim \tspannorm{h^\pi}$ for any $\gamma$ and that $\tinfnorm{\Vhat_\gamma^\pi - V_\gamma^\pi} \lesssim \sqrt{\spb / (1-\gamma)^2 n}$ using standard techniques \citep{zurek_plug-approach_2024}. 
    By setting $\gamma$ so that $\frac{1}{1-\gamma} \lesssim \sqrt{ n \spb}$, we can ensure that $\tspannorm{\Vhat_\gamma^\pi} \leq \spb $ (by bounding it in terms of $\tspannorm{V_\gamma^\pi} \lesssim \tspannorm{h^\pi} \lesssim \spb$ and $\tinfnorm{\Vhat_\gamma^\pi - V_\gamma^\pi}$), which is essential to ensure that $\pi$ is ``feasible'' for the empirical span-constrained problem, guaranteeing that $\Vhat^{\pihat_{\spb}}_{\gamma, \rt} \geq \Vhat_\gamma^\pi$ by Lemma \ref{lem:span_constrained_planning_subroutine}. These assumptions and condition on $\gamma$ also imply that $\Vhat^\pi_\gamma \geq \frac{1}{1-\gamma}\rho^\pi - O(\spb)$. Combining all these steps we can show that
    \begin{align*}
        \rho^{\pihat_{\spb}} &\geq (1-\gamma) \min_s V_{\gamma}^{\pihat_{\spb}}(s)\one \overset{\text{(I)}}{\geq} (1-\gamma) \min_s V_{\gamma, \rt}^{\pihat_{\spb}}(s)\one \\
        &\geq (1-\gamma) \min_s \Vhat_{\gamma, \rt}^{\pihat_{\spb}}(s)\one - (1-\gamma)\infnorm{\Vhat_{\gamma, \rt}^{\pihat_{\spb}} - V_{\gamma, \rt}^{\pihat_{\spb}}}\one \\
        &\overset{\text{(II)}}{\geq} (1-\gamma) \min_s \Vhat_{\gamma, \rt}^{\pihat_{\spb}}(s)\one - C \sqrt{\frac{\spb + 1}{n}} 
        \overset{\text{(III)}}{\geq} (1-\gamma) \min_s \Vhat_{\gamma}^{\pi}(s)\one - C \sqrt{\frac{\spb + 1}{n}} \\
        & \overset{\text{(IV)}}{\geq} \rho^\pi  - (1-\gamma) C'\spb \one  - C \sqrt{\frac{\spb + 1}{n}} , 
    \end{align*}
    where $C, C' \leq \Otilde(1)$. Here
    (I) follows from  $\rt \leq r$, and (II) follows from the error bound
    \begin{align}
        \infnorm{\Vhat_{\gamma, \rt}^{\pihat_{\spb}} - V_{\gamma, \rt}^{\pihat_{\spb}}} \lesssim \frac{1}{1-\gamma}\sqrt{\frac{\tspannorm{\Vhat_{\gamma, \rt}^{\pihat_{\spb}}} + 1}{n}} = \frac{1}{1-\gamma}\sqrt{\frac{\spb + 1}{n}}, \label{eq:proof_sketch_7}
    \end{align}
    and (III) and (IV) follow from the aforementioned ``feasibility'' of $\pi$ and the lower bound on $\Vhat_\gamma^\pi$.
    The key new challenge is to develop the error bound~\eqref{eq:proof_sketch_7}, which is complicated due to the statistical dependence between $\Phat$ and $\pihat_{\spb}$. Without the span constraint this is addressed by \cite{agarwal_model-based_2020, li_breaking_2020} through the use of ``absorbing MDP'' constructions, which enable concentration inequalities for $|(\Phat_{sa} - P_{sa}) \Vhat^\star_\gamma|$. Instead, we desire such bounds for $|(\Phat_{sa} - P_{sa}) \Vhat^{\star}_{\gamma, \spb}|$, that is, involving the empirical span-constrained optimal value functions. Based on the contractivity of the span-constrained Bellman operator $\L$, we develop a new absorbing MDP construction for span-constrained value functions, ultimately leading to the desired bound~\eqref{eq:proof_sketch_7}.

\section{Conclusion}

We resolve the problem of achieving the minimax optimal span-based complexity in average-reward RL without prior knowledge of the span or other unknown MDP complexity parameters. Our algorithms apply to both the fixed-sample-size and fixed-suboptimality settings, and moreover achieve optimal tradeoff between complexity and suboptimality, surpassing the minimax lower bound in benign settings. Future directions of interest include generalizing to general/multichain MDPs and extending beyond the tabular simulator setting.

\section*{Acknowledgments}
{Y.\ Chen and M.\ Zurek acknowledge support by National Science Foundation grants CCF-2233152 and DMS-2023239.}

\bibliographystyle{plainnat}
\bibliography{refs2}

\clearpage

\appendix

\section{Additional Notation and Guide to Appendices}

In this section, we provide additional notations and definitions, and outline the organization of the remainder of the appendices.

\subsection{Additional Notation}

in the proofs of our main theorems, we make use of some additional notations and standard facts about average-reward MDPs.  
For any policy $\pi$, its gain and bias $\rho^\pi$ and $h^\pi$ satisfy the optimality equations $\rho^\pi = P_\pi \rho^\pi$ and $\rho^\pi + h^\pi = r_\pi + P_\pi h^\pi$, where the second equation is sometimes called the (average-reward) Bellman equation or Poisson equation. 
We let $P_\pi^\infty = \Clim_{T \to \infty} (P_{\pi})^T$ denote the limiting matrix of the Markov chain induced by the policy $\pi$. When this Markov chain is aperiodic, the Cesaro limit, $\Clim$, can be replaced with the usual limit. 
Note that $P_\pi^\infty P_\pi = P_\pi P_\pi^\infty = P_\pi^\infty$ and $\rho^\pi = P_\pi^\infty r_\pi$.
For any policy $\pi$ we define the Bellman consistency operator $\T_\gamma^\pi : \R^{\S} \to \R^{\S}$ by $\T_\gamma^\pi(x) = r_\pi + \gamma P_\pi x$.

For any $x \in \R^{S}$ and any policy $\pi$, we define a policy-specific next-state transition variance vector $\Var_{P_{\pi}}\left[x \right] \in \R^{S}$ as
$\left(\Var_{P_{\pi}} \left[x \right]\right)_s := \sum_{s' \in \S} \left(P_{\pi}\right)_{s, s'} \big[x(s') - \sum_{s''}\left(P_{\pi}\right)_{s, s''}x(s'')\big]^2.$
We use $\infinfnorm{B}$ to denote the $\infnorm{\cdot}$ operator norm of matrix $B$, which is equal to the maximum $\ell^1$-norms of any row of $B$.

\subsection{Complexity Parameters}
\label{sec:complexity_params}

In this section, we provide definitions for the diameter $D$ and the uniform mixing time $\tmix$, which are standard complexity parameters (along with optimal bias span $\spannorm{h^\star}$) used in prior work on average-reward MDPs. We also give the definitions for the quantities $\tspannorm{h^{\pihat}}, \tspannorm{\hhatanc^\star}$, and $\tspannorm{\hanc^{\pihat}}$ appearing in Table \ref{table:AMDPs}, which arise in the results of \cite{neu_dealing_2024} and \cite{zurek_plug-approach_2024}.
The results in the present paper do not involve these parameters; their definitions are included here for completeness and for comparison with our results.

First we define the diameter.
For any state $s \in \S$, let $\eta_{s} = \inf \{t \geq 1: S_t = s \}$ denote the hitting time of state $s$, which is a random variable (in the probability space of trajectories in the MDP $P$).
Then the diameter $D$ is defined as
\begin{align*}
    D := \max_{s_1 \neq s_2} \inf_{\pi\in \PiMD} \E^\pi_{s_1} \left[\eta_{s_2}\right],
\end{align*}
where $\PiMD$ is the set of all Markovian deterministic policies. $D < \infty$ if and only if the MDP is communicating, so in particular it is generally infinite in weakly communicating MDPs.

Now we define the uniform mixing time $\tmix$. This parameter is only defined if we first assume that for all $\pi \in \PiMD$, the Markov chain induced by $P_\pi$ has a unique stationary distribution, which we call $\nu_\pi$ (considered as a row vector in $\R^{\S}$).
Then for any policy $\pi\in \PiMD$, we can define its mixing time
$\tau_\pi := \inf \cig\{t \geq 1 : \max_{s \in \S} \cigl\|e_s^\top \left(P_\pi \right)^t - \nu^\top_\pi \cigr\|_1 \leq \frac{1}{2} \cig\}.$ Finally, we define the uniform mixing time $\tmix := \sup_{\pi\in\PiMD} \tau_\pi$.

It always holds that $\spannorm{h^\star} \leq D$ \citep{bartlett_regal_2012} and $\spannorm{h^\star} \leq 3 \tmix$ \citep{wang_near_2022,zurek_plug-approach_2024}. In general, $D$ and $\tmix$ are not comparable \citep{wang_near_2022}, and they both can be arbitrarily larger than $\spannorm{h^\star}$.

The prior work listed in Table \ref{table:AMDPs}  also involves the quantities $\tspannorm{h^{\pihat}}, \tspannorm{\hhatanc^\star}$ and $\tspannorm{\hanc^{\pihat}}$. These are all dependent on the outputs of certain (randomized) algorithms which introduce them, and they are generally not controlled in terms of $\tspannorm{h^\star}$. In particular, the algorithm described in  \cite[Theorem 4.1]{neu_dealing_2024} returns a policy $\pihat$, and their complexity bounds depend on $\tspannorm{h^{\pihat}}$, the span of the bias of $\pihat$ under the true MDP $(P,r)$.
\cite{zurek_plug-approach_2024} define an \textit{anchored} empirical AMDP $(\Phatanc, r)$ with the transition matrix $\Phatanc := (1-\eta) \Phat + \eta \one e_{s_0}^\top$, where $\eta \in [0,1]$ is an algorithmic parameter, $s_0$ is an arbitrary state and $e_{s_0} \in \R^S$ is the vector with all zeros except for the entry $s_0$ being equal to $1$. 
Letting $\hhatanc^\star$ denote the optimal bias function in this anchored AMDP, and letting $\hanc^{\pihat}$ denote the bias function  of the policy $\pihat$ output by (the anchored and unperturbed version of) Algorithm 1 in \cite{zurek_plug-approach_2024}, their sample complexity guarantees are given in terms of the quantities $\tspannorm{\hhatanc^\star}$ and $\tspannorm{\hanc^{\pihat}}$. Note that \citet[Theorem 14]{zurek_plug-approach_2024} gives an example where both of these terms are much larger than $\tspannorm{h^\star}$.

We refer to \cite{neu_dealing_2024} and \cite{zurek_plug-approach_2024} for the complete definitions of $\tspannorm{h^{\pihat}}, \tspannorm{\hhatanc^\star}$ and $\tspannorm{\hanc^{\pihat}}$ as well as their relationship with other complexity parameters.

\subsection{Remark on $\alpha$}

We discuss the origin of the function $\alpha$ that appears in our algorithms and bounds.
\begin{remark}
    \label{rem:alpha}
    The quantity $\alpha(\delta, n) = 96 \sqrt{ \log \left( {24 SA n^5}/{\delta}\right)} \log_2\left( \log_2 (n + 4) \right)$ appearing in our bounds arises from our choice to make use of concentration bounds developed by \cite{zurek_plug-approach_2024}. This quantity is used within our algorithms, so we make it explicit but did not attempt to optimize it. We note \cite{zurek_plug-approach_2024} also did not optimize constants/log factors, and thus we conjecture that a smaller function $\alpha$ may be sufficient for their inequalities to hold, in which case the improvement could be carried over to our work in a black-box manner.
\end{remark}

\subsection{Guide to Appendices}

In Appendix \ref{sec:gammahat_hstar}, we discuss the relationship between the $\gammahat$ selected by our algorithm and the true optimal bias span $\spannorm{h^\star}$. 
Appendix \ref{sec:proofs} contains the proofs for all of our main theorems, and Appendix \ref{sec:examples} contains examples mentioned in Subsection \ref{sec:span_regularization} of situations where the guarantee of Theorem \ref{thm:span_regularization_performance} could be much better than the minimax rate. Within Appendix \ref{sec:proofs}, we first prove Theorem \ref{thm:eps_based_alg} in Appendix \ref{sec:eps_based_alg_proof}, then Theorem \ref{thm:n_based_alg} in Appendix \ref{sec:n_based_alg_proof}. In Appendix \ref{sec:span_constrained_planning_subroutine_proof} we prove Lemma \ref{lem:span_constrained_planning_subroutine} regarding the properties of the span-constrained planning subroutine Algorithm \ref{alg:span_constrained_planning}, and finally in Appendix \ref{sec:span_regularization_proof} we prove Theorem \ref{thm:span_regularization_performance}.

\section{Relationship Between $\gammahat_i$ and $\spannorm{h^\star}$}
\label{sec:gammahat_hstar}

Recall equation~\eqref{eq:proof_sketch_6} from the proof sketch of Theorem~\ref{thm:eps_based_alg}. There, we see that the discount factor $\gammahat_i$ selected by Algorithm~\ref{alg:eps_based_alg} in a given iteration $i$ is (approximately) the minimizer of a function $\Bhat(\gamma)$ that upper-bounds the suboptimality of a policy $\pihat_\gamma$ in the following manner:
\begin{align*}
        \infnorm{\rho^{\pihat_\gamma} - \rho^\star} 
        & \lesssim \Bhat(\gamma) := (1-\gamma)  \tspannorm{\Vhat^\star_\gamma} +  \sqrt{\frac{\tspannorm{\Vhat_\gamma^\star} +1}{n_i}} + \frac{1}{(1-\gamma)n_i} .
    \end{align*}
We can understand $\Bhat(\gamma)$ by relating it to the following (deterministic) quantity
\begin{align}
    B(\gamma) := (1-\gamma)  \tspannorm{V^\star_\gamma} +  \sqrt{\frac{\tspannorm{V_\gamma^\star} +1}{n_i}} + \frac{1}{(1-\gamma)n_i}. \label{eq:B_defn}
\end{align}
Using an error bound of the form $\infnorm{V_\gamma^\star - \Vhat_\gamma^\star} \lesssim \frac{1}{1-\gamma} \sqrt{\frac{\spannorm{V_\gamma^\star} + \spannorm{\Vhat^\star_\gamma} + 1}{n_i}}$ \citep[Theorem 9]{zurek_plug-approach_2024}, as well as the ``localization'' bound $\tspannorm{V_\gamma^\star} \lesssim \tspannorm{\Vhat_\gamma^\star} + \frac{1}{(1-\gamma)^2 n_i}$ which appears in the proof sketch, we can show that (with high probability) $B(\gamma) \lesssim \Bhat(\gamma)$, and likewise repeating the same steps but with the roles of $\Vhat_\gamma^\star$ and $V_\gamma^\star$ reversed we can also show that $\Bhat(\gamma) \lesssim B(\gamma)$. Therefore, since $\Bhat(\gamma)$ and $B(\gamma)$ are equivalent up to logarithmic factors with high probability, we can understand $\gammahat_i$ by considering the $\gamma$ which minimizes $B(\gamma)$.

We now turn to the ``oracle choice'' of $\gamma^\star$ made in \cite{zurek_plug-approach_2024} (also discussed in our proof sketches) such that $\frac{1}{1-\gamma^\star} \approx \sqrt{n_i (\spannorm{h^\star}+1)}$. Using the fact that $\tspannorm{V_\gamma^\star} \lesssim \tspannorm{h^\star}$ \citep{wei_model-free_2020},
we are guaranteed the final term in the definition~\eqref{eq:B_defn} of $B(\gamma^\star)$ is the largest and equal to $\sqrt{\frac{\spannorm{h^\star}+1}{n_i}}$. This implies that the $\gammahat_i$ selected by minimizing $\Bhat(\gamma)$ (and also the minimizing $\gamma$ of $B(\gamma)$) must satisfy $\frac{1}{1-\gammahat_i} \lesssim \frac{1}{1-\gamma^\star} \approx \sqrt{n_i (\spannorm{h^\star}+1)}$, since for all $\gamma \geq \gamma^\star$ the final term of $B(\gamma)$ is $\gtrsim \frac{1}{(1-\gamma^\star ) n_i} \gtrsim B(\gamma^\star)$. However, $\frac{1}{1-\gammahat_i}$ may potentially be much smaller than $\sqrt{n_i (\spannorm{h^\star}+1)}$, so in particular it is not generally possible to convert $\gammahat_i$ into some constant-factor estimate of $\spannorm{h^\star}$ (which would contradict the hardness of estimating $\spannorm{h^\star}$ as established in \citealt{tuynman_finding_2024, zurek_span-based_2025}).
Indeed, it is possible to compute the minimizer $\tilde{\gamma}^\star$ of $B(\gamma)$ for the MDP instances that were used to show the hardness of estimating $\spannorm{h^\star}$ in \citet[Proof of Theorem 3]{zurek_span-based_2025}. We find that such instances (which have $n_i$ as an input parameter) have $\spannorm{V_\gamma^\star} \lesssim 1 + \frac{1}{1-\gamma} \frac{1}{\sqrt{n_i}}$, and for relatively small $\gamma$ this bound can be much better than the bound $\spannorm{V_\gamma^\star} \lesssim \spannorm{h^\star}$; consequently, our sharper DMDP reduction~\eqref{eq:DMDP_red_opt_policy_imp_st}, which replaces $\spannorm{h^\star}$ with $\spannorm{V_\gamma^\star}$, gives a much better bound (also see~\eqref{eq:proof_sketch_5} in the proof sketches). This fact about $\spannorm{V_\gamma^\star}$ for these instances also implies that the minimizing $\tilde{\gamma}^\star$ has $\frac{1}{1-\tilde{\gamma}^\star} \approx \sqrt{n_i} \ll \sqrt{n_i(\spannorm{h^\star}+1)}$.
Since one of the instances used in \citet[Proof of Theorem 3]{zurek_span-based_2025} has an arbitrarily large bias span $\spannorm{h^\star}$, this in particular implies that $\frac{1}{1-\tilde{\gamma}^\star}$ (and thus also $\frac{1}{1-\gammahat_i}$) may be arbitrarily smaller than $\sqrt{n_i(\spannorm{h^\star} + 1)}$.
This minimizing choice $\tilde{\gamma}^\star$ gives $B(\tilde{\gamma}^\star) \lesssim \frac{1}{\sqrt{n_i}} \ll \sqrt{\frac{\spannorm{h^\star}}{n_i}}$, which implies that these instances, while being worst-case for estimating the optimal bias span, are \emph{not} worst-case for the task of finding a gain-optimal policy or estimating the optimal gain.

In summary, although $\gammahat_i$ is chosen in our algorithm to calibrate certain approximation and estimation error terms at least as well as the ``oracle choice'' $\gamma^\star$ (which depends on $\spannorm{h^\star}$), this $\gammahat_i$ should not be seen as estimating $\spannorm{h^\star}$, but rather as calibrating tighter bounds on approximation/estimation errors; in particular, as discussed in the proof sketches, these tighter bounds replace $\spannorm{h^\star}$ with $\spannorm{V_\gamma^\star}$. 
In fact, the ``oracle choice'' $\frac{1}{1-\gamma^\star} \approx \sqrt{n_i (\spannorm{h^\star}+1)}$ is not only more difficult to compute, but can generally achieve a worse tradeoff than the $\tilde{\gamma}^\star$ which minimizes $B(\gamma)$. 
Still, for worst-case instances such as those appearing in the minimax lower bounds for learning a gain-optimal policy or estimating the optimal gain, the oracle $\gamma^\star$ must indeed approximately minimize $B(\gamma)$, since we know for such instances that $B(\gamma) \gtrsim \sqrt{\spannorm{h^\star}/n_i}$ for all $\gamma$ (since they are hard instances), and also that $B(\gamma^\star) \lesssim \sqrt{\spannorm{h^\star}/n_i}$. 
This observation suggests that for instances that are hard for optimizing/estimating the gain, we may actually be able to estimate $\spannorm{h^\star}$ to a constant factor using $\gammahat_i$, which would imply such instances are \emph{not} hard for the task of span estimation.

\section{Proofs of Main Theorems}
\label{sec:proofs}

In this section, we prove our main theorems. 

\subsection{Proof of Theorem \ref{thm:eps_based_alg}}
\label{sec:eps_based_alg_proof}
The proof has two main steps. First we will show that all upper and lower bounds computed within Algorithm \ref{alg:eps_based_alg} are valid upper/lower bounds on $\rho^\star$ and the gain of the output policy. This implies that whenever the algorithm terminates, it will have output a policy which is $\varepsilon$-optimal. The second step is to show that once a sufficient number of iterations have occurred (and thus a sufficiently large sample size is chosen), the confidence interval corresponding to some discount factor $\gstar$ will be sufficiently small, thus ensuring termination of the algorithm at or before this point.

The following lemma is very similar to results of \cite{wei_model-free_2020} and \cite{wang_near_2022}. While the bound $\infnorm{\rho^\star - (1-\gamma)V_\gamma^\star} \leq (1-\gamma)\spannorm{h^\star}$ has appeared previously \citep[Lemma 2]{wei_model-free_2020}, to the best of our knowledge a bound of the form $\infnorm{\rho^\star - (1-\gamma)V_\gamma^\star} \leq (1-\gamma)\spannorm{V_\gamma^\star}$ has not previously appeared in the literature. As explained in the proof sketches in Section \ref{sec:proof_sketch}, the difference is extremely significant in our situation, since $\spannorm{V_\gamma^\star}$ can be estimated while $\spannorm{h^\star}$ generally cannot be \citep{zurek_span-based_2025, tuynman_finding_2024}. This new result thus provides a computable upper bound for the ``approximation error'' due to reducing the AMDP to a DMDP with a certain discount factor, which can be used algorithmically. We also note that~\eqref{eq:DMDP_red_fixed_policy_imp_st} below is already known and appears in \cite[Lemma 6]{wang_near_2022}. (Their proof has a minor issue because it is not the case that all policies $\pi$ have $\rho^\pi$ which is state-independent/constant in a weakly communicating MDP, but their proof does not actually need this fact and the result is still true.)

\begin{lem}
\label{lem:AMDP_DMDP_relationships}
    Suppose $P$ is weakly communicating. Then
    \begin{align}
        (1-\gamma)\left( \min_s V_\gamma^\star (s) \right) \one \leq \rho^\star \leq (1-\gamma)\left(\max_s V_\gamma^\star (s) \right) \one\label{eq:sharper_DMDP_red_opt}
    \end{align}
    which implies
    \begin{align}
        \infnorm{\rho^\star - (1-\gamma)V_\gamma^\star} \leq (1-\gamma)\spannorm{V_\gamma^\star}. \label{eq:DMDP_red_opt_policy_imp_st}
    \end{align}
    Additionally, for any fixed policy $\pi$, we have that
    \begin{align}
        (1-\gamma)\left(\min_s V_\gamma^\pi (s)\right) \one \leq \rho^\pi \leq (1-\gamma)\left(\max_s V_\gamma^\pi (s)\right) \one,\label{eq:sharper_DMDP_red_fixed}
    \end{align}
    which implies
    \begin{align}
        \infnorm{\rho^\pi - (1-\gamma)V_\gamma^\pi} \leq (1-\gamma)\spannorm{V_\gamma^\pi}. \label{eq:DMDP_red_fixed_policy_imp_st}
    \end{align}
    Consequently, for any policy $\pi$, we have that
    \begin{align*}
        \rho^\pi \geq \rho^\star - (1-\gamma)\left( \infnorm{V_\gamma^\pi - V_\gamma^\star} + \spannorm{V_\gamma^\star}\right).
    \end{align*}
\end{lem}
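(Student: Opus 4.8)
The plan is to derive every part of the lemma from a single identity for a fixed policy, and then bootstrap to the optimal quantities by comparing two different policies. The central observation is that, writing $W_\gamma := (1-\gamma)(I-\gamma P_\pi)^{-1} = (1-\gamma)\sum_{t\ge 0}\gamma^t P_\pi^t$, the matrix $W_\gamma$ is \emph{stochastic} (a convex combination of the stochastic matrices $P_\pi^t$), and $(1-\gamma)V_\gamma^\pi = W_\gamma r_\pi$ since $V_\gamma^\pi = (I-\gamma P_\pi)^{-1}r_\pi$. Because $P_\pi^\infty P_\pi^t = P_\pi^\infty$ for every $t$, we get $P_\pi^\infty W_\gamma = (1-\gamma)\sum_{t\ge0}\gamma^t P_\pi^\infty = P_\pi^\infty$, and therefore
\begin{align*}
\rho^\pi = P_\pi^\infty r_\pi = P_\pi^\infty W_\gamma r_\pi = P_\pi^\infty\big[(1-\gamma)V_\gamma^\pi\big].
\end{align*}
This expresses each entry of $\rho^\pi$ as a convex combination (via the stochastic rows of $P_\pi^\infty$) of the entries of $(1-\gamma)V_\gamma^\pi$, which is exactly what drives all the bounds.

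From this identity, \eqref{eq:sharper_DMDP_red_fixed} is immediate: every entry of $\rho^\pi$ lies between $\min_s (1-\gamma)V_\gamma^\pi(s)$ and $\max_s (1-\gamma)V_\gamma^\pi(s)$. Inequality \eqref{eq:DMDP_red_fixed_policy_imp_st} then follows because both $\rho^\pi(s)$ and $(1-\gamma)V_\gamma^\pi(s)$ lie in the interval $[(1-\gamma)\min_{s'}V_\gamma^\pi(s'),\,(1-\gamma)\max_{s'}V_\gamma^\pi(s')]$, whose width is exactly $(1-\gamma)\spannorm{V_\gamma^\pi}$.

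For the optimal quantities I would obtain the two directions of \eqref{eq:sharper_DMDP_red_opt} from two \emph{different} policies, exploiting that in a weakly communicating MDP $\rho^\star$ is constant. For the upper bound, apply \eqref{eq:sharper_DMDP_red_fixed} to a (Blackwell-/gain-)optimal policy $\pistar$ with $\rho^{\pistar}=\rho^\star$ and use $V_\gamma^{\pistar}\le V_\gamma^\star$ to get $\rho^\star \le (1-\gamma)\max_s V_\gamma^{\pistar}(s)\one \le (1-\gamma)\max_s V_\gamma^\star(s)\one$. For the lower bound, apply \eqref{eq:sharper_DMDP_red_fixed} to the DMDP-optimal policy $\pistar_\gamma$ (so $V_\gamma^{\pistar_\gamma}=V_\gamma^\star$) and use $\rho^{\pistar_\gamma}\le \rho^\star$ to get $(1-\gamma)\min_s V_\gamma^\star(s)\one \le \rho^{\pistar_\gamma}\le \rho^\star$. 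Then \eqref{eq:DMDP_red_opt_policy_imp_st} follows exactly as \eqref{eq:DMDP_red_fixed_policy_imp_st} did, since $\rho^\star$ and $(1-\gamma)V_\gamma^\star$ both lie in an interval of width $(1-\gamma)\spannorm{V_\gamma^\star}$.

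Finally, for the concluding inequality I would chain the fixed-policy lower bound with a comparison between $V_\gamma^\pi$ and $V_\gamma^\star$. Starting from $\rho^\pi \ge (1-\gamma)\min_s V_\gamma^\pi(s)\one$ (the lower half of \eqref{eq:sharper_DMDP_red_fixed}), bound $\min_s V_\gamma^\pi(s) \ge \min_s V_\gamma^\star(s) - \infnorm{V_\gamma^\pi - V_\gamma^\star}$, and then use the lower bound of \eqref{eq:sharper_DMDP_red_opt} in the rearranged form $(1-\gamma)\min_s V_\gamma^\star(s) \ge \rho^\star - (1-\gamma)\spannorm{V_\gamma^\star}$ (combining $(1-\gamma)\max_s V_\gamma^\star(s)\ge \rho^\star$ with the definition of the span), to obtain $\rho^\pi \ge \rho^\star - (1-\gamma)\big(\infnorm{V_\gamma^\pi - V_\gamma^\star} + \spannorm{V_\gamma^\star}\big)\one$. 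The one genuinely nontrivial point is the optimal case \eqref{eq:sharper_DMDP_red_opt}: because $V_\gamma^\star$ and $\rho^\star$ are attained by potentially different policies (the $\gamma$-discounted optimal policy versus a gain-optimal one), neither direction can rely on a single policy, and it is the sandwiching via two policies together with the constancy of $\rho^\star$ in weakly communicating MDPs that makes it go through.
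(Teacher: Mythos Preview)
Your proposal is correct and follows essentially the same approach as the paper: the central identity $\rho^\pi = P_\pi^\infty\big[(1-\gamma)V_\gamma^\pi\big]$ (derived via the same Neumann series and $P_\pi^\infty P_\pi = P_\pi^\infty$), the convex-combination argument for \eqref{eq:sharper_DMDP_red_fixed}, the two-policy sandwich ($\pistar$ for the upper bound, $\pistar_\gamma$ for the lower) for \eqref{eq:sharper_DMDP_red_opt}, and the final chaining are all exactly what the paper does.
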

\begin{proof}
    First, for the statement with a fixed policy $\pi$, as shown in \cite{wang_near_2022} we have that
    \begin{align*}
        \rho^\pi = P_{\pi}^\infty r_\pi = (1-\gamma) \sum_{t=0}^\infty \gamma^t P_{\pi}^\infty r_\pi = (1-\gamma) P_{\pi}^\infty \sum_{t=0}^\infty \gamma^t P_{\pi}^t r_\pi = (1-\gamma) P_{\pi}^\infty V_\gamma^\pi
    \end{align*}
    which implies that, for each $s \in\S$, $\rho^\pi(s) = e_s^\top P_{\pi}^\infty \left((1-\gamma)  V_\gamma^\pi\right)$, and thus $\rho^\pi(s)$ is a convex combination of entries of $(1-\gamma)  V_\gamma^\pi$, implying that $(1-\gamma) \min_{s'} V_\gamma^\pi(s') \leq \rho^\pi(s) \leq (1-\gamma) \max_{s'} V_\gamma^\pi(s')$.
    By subtracting $(1-\gamma) V_\gamma^\pi(s)$ we obtain that
    \begin{align*}
        (1-\gamma) \left(\min_{s'} V_\gamma^\pi(s') - V_\gamma^\pi(s) \right) \leq \rho^\pi(s) - (1-\gamma) V_\gamma^\pi(s) \leq (1-\gamma) \left(\max_{s'} V_\gamma^\pi(s') - V_\gamma^\pi(s) \right)
    \end{align*}
    which can be further lower and upper bounded as
    \begin{align*}
        -(1-\gamma) \spannorm{V_\gamma^\pi}  \leq \rho^\pi(s) - (1-\gamma) V_\gamma^\pi(s) \leq (1-\gamma) \spannorm{V_\gamma^\pi}
    \end{align*}
    implying~\eqref{eq:DMDP_red_fixed_policy_imp_st}.

    Now we show the statement~\eqref{eq:sharper_DMDP_red_opt}. For any $s \in \S$, we similarly have
    \begin{align*}
        \rho^\star(s) = (1-\gamma) e_s^\top P_{\pistar}^\infty V_\gamma^{\pistar} \leq (1-\gamma) e_s^\top P_{\pistar}^\infty V_\gamma^\star \leq (1-\gamma) \max_s V_\gamma^\star(s)
    \end{align*}
    and also for any $s \in \S$
    \begin{align*}
        \rho^\star(s) \geq \rho^{\pistar_\gamma}(s) = (1-\gamma) e_s^\top P_{\pistar_\gamma}^\infty V_\gamma^\star \geq (1-\gamma) \min_s V_\gamma^\star(s).
    \end{align*}
    Statement~\eqref{eq:DMDP_red_opt_policy_imp_st} follows in an identical manner as to how~\eqref{eq:DMDP_red_fixed_policy_imp_st} followed from~\eqref{eq:sharper_DMDP_red_fixed}.

    For the final desired statement, using the previous results we have
    \begin{align*}
        \rho^\pi &\geq (1-\gamma)\min_s V_\gamma^\pi(s)\one \geq (1-\gamma)\min_s V_\gamma^\star(s)\one - (1-\gamma)\infnorm{V_\gamma^\pi - V_\gamma^\star} \\
        &\geq (1-\gamma)\max_s V_\gamma^\star(s)\one  - (1-\gamma) \spannorm{ V_\gamma^\star}  - (1-\gamma)\infnorm{V_\gamma^\pi - V_\gamma^\star} \\
        &\geq \rho^\star  - (1-\gamma) \spannorm{ V_\gamma^\star}  - (1-\gamma)\infnorm{V_\gamma^\pi - V_\gamma^\star}.
    \end{align*}
\end{proof}

Here we repeat the definitions of some quantities which are defined in Algorithm \ref{alg:eps_based_alg}, and additionally we extend their definitions to hold for all integers $i \geq 1$ (rather than just the iterations which appear in the course of the algorithm before its termination). For all $i \geq 1$, we define $\Phat^{(i)}$ to be constructed in the manner shown in Algorithm \ref{alg:eps_based_alg} via taking $n_i = 2^i$ samples from all state-action pairs. We also let $\hzns_i := \{\gamma: \text{there exists an integer $k $ such that } \sqrt{n_i} \leq \frac{1}{1-\gamma} = 2^k \leq n_i \}$.
We can then define, for all $\gamma \in \hzns_i$, the policy $\pit_{\gamma,i}$ and value function $\Vt_{\gamma,i}$ as the outputs from $\SolveDMDP(\Phat^{(i)}, r, \gamma, \frac{1}{n_i})$. With such quantities we can then define $\Ub_i(\gamma), \Lb_i(\gamma)$ following the definitions given in \ref{alg:eps_based_alg}, and finally we let $\gammahat_i := \argmin_{\gamma \in \hzns_i} \Ub_i(\gamma) - \Lb_i(\gamma)$. We also use $\Vhat_{\gamma, i}^\pi$ to denote the value function of policy $\pi$ in the DMDP $(\Phat^{(i)}, r, \gamma)$ and $\Vhat_{\gamma, i}^\star$ to denote the optimal value function of this DMDP.

\begin{lem}
\label{lem:concentration_event_eps_based}
    Define the function $\alpha(\tilde{\delta}, \tilde{n}) = 24 \sqrt{16 \log \left( \frac{24 SA \tilde{n}^5}{\tilde{\delta}}\right)} \log_2\left( \log_2 (\tilde{n} + 4) \right)$. Fix $\delta > 0$. Then with probability at least $1 - \delta$, we have for all integers $i \geq 1$ and all $\gamma \in \hzns_i$ that
\begin{align}
    \infnorm{V_{\gamma}^{\pistar_{\gamma}} - \Vhat_{\gamma, i}^{\pistar_{\gamma}}} &\leq \frac{\alpha(\delta, n_i)}{1-\gamma} \sqrt{\frac{ \spannorm{V_{\gamma}^{\pistar_{\gamma}}} + 1 }{n_i}} \label{eq:true_opt_pol_eval_bd_2}
\end{align}
    and also the subroutine on line \ref{alg:solver_step_eps_based} outputs a policy $\pit_{\gamma, i}$ such that
\begin{align}
        \Vhat_{\gamma,i}^{\pit_{\gamma,i}} &\geq \Vhat^\star_{\gamma,i} - \frac{1}{n_i} \one \label{eq:pit_near_opt_2}\\
        \infnorm{\Vt_{\gamma,i} - \Vhat^\star_{\gamma,i}} &\leq \frac{1}{n_i} \label{eq:vt_error_2}\\
        \infnorm{\Vhat_{\gamma, i}^{\pit_{\gamma,i}} - V_\gamma^{\pit_{\gamma,i}}} & \leq \frac{\alpha(\delta, n_i)}{1-\gamma} \sqrt{\frac{ \spannorm{\Vhat_{\gamma, i}^{\pit_{\gamma,i}}} + 1 }{n_i}} \label{eq:emp_opt_pol_eval_bd_2}.
\end{align}
\end{lem}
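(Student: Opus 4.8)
The plan is to handle the four inequalities separately. Inequalities~\eqref{eq:pit_near_opt_2} and~\eqref{eq:vt_error_2} are deterministic: they are exactly the accuracy guarantee we require of $\SolveDMDP$ (stated just after Theorem~\ref{thm:n_based_alg}), applied to the empirical DMDP $(\Phat^{(i)}, r, \gamma)$ with target error $1/n_i$. Since that guarantee holds with probability one for every $i$ and every $\gamma \in \hzns_i$, these two claims contribute nothing to the failure event and may simply be asserted. The remaining work is therefore entirely in the two concentration statements~\eqref{eq:true_opt_pol_eval_bd_2} and~\eqref{eq:emp_opt_pol_eval_bd_2}, which I would obtain by applying the span-variance-aware policy-evaluation error bounds of \cite{zurek_plug-approach_2024} (cf.\ their Theorem~9, building on \cite{agarwal_model-based_2020, li_breaking_2020, wei_model-free_2020}) at each pair $(i,\gamma)$ and then union-bounding over all $i \ge 1$ and all $\gamma \in \hzns_i$.

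For~\eqref{eq:true_opt_pol_eval_bd_2} I would first fix a single pair $(i,\gamma)$ and exploit that $\pistar_\gamma$, the optimal policy of the \emph{true} DMDP $(P, r, \gamma)$, is a fixed deterministic policy independent of the dataset $\Phat^{(i)}$. Thus $\infnorm{V_\gamma^{\pistar_\gamma} - \Vhat_{\gamma,i}^{\pistar_\gamma}}$ is the evaluation error of a data-independent policy, to which a Bernstein-type bound applies directly: writing the error via the resolvent $(I - \gamma \Phat_{\pistar_\gamma})^{-1}(\Phat_{\pistar_\gamma} - P_{\pistar_\gamma})V_\gamma^{\pistar_\gamma}$ and controlling the per-state transition variances by a total-variance argument yields the variance proxy $\spannorm{V_\gamma^{\pistar_\gamma}}$. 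This is the easy direction since no statistical coupling is present. For~\eqref{eq:emp_opt_pol_eval_bd_2} the policy $\pit_{\gamma,i}$ is itself a measurable function of $\Phat^{(i)}$, so naive concentration fails and a union over all deterministic policies would cost an extra $\sqrt{S}$ (cf.\ the footnote after Theorem~\ref{thm:n_based_alg}). Here I would invoke the version of the bound proved through the absorbing-MDP construction, which decouples the dependence between $\Phat^{(i)}$ and the chosen policy without paying that $\sqrt{S}$ factor and produces the \emph{empirical} span $\spannorm{\Vhat_{\gamma,i}^{\pit_{\gamma,i}}}$ as the variance proxy.

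The global event is then assembled by a union bound. Allotting failure probability $\delta_{i,\gamma} := \delta\, 2^{-i}/|\hzns_i|$ to each pair gives $\sum_{i \ge 1} \sum_{\gamma \in \hzns_i} \delta_{i,\gamma} = \delta \sum_{i \ge 1} 2^{-i} = \delta$, so all four inequalities hold simultaneously for every $i$ and $\gamma \in \hzns_i$ with probability at least $1-\delta$. The one computation to verify is that the per-instance concentration constant at level $\delta_{i,\gamma}$ is dominated by the stated $\alpha(\delta, n_i)$. Since $n_i = 2^i$ and $|\hzns_i| = O(\log_2 n_i) \le n_i$, we have $1/\delta_{i,\gamma} \le n_i^2/\delta$, so the argument of the logarithm in the per-instance bound is at most $SA\, n_i^{O(1)} \cdot n_i^2/\delta$; the power $n_i^5$ inside $\alpha$ is chosen precisely to absorb this, together with the $\log_2\log_2(n_i+4)$ factor that accounts for the peeling/thresholding steps internal to the absorbing-MDP argument. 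Because the datasets $\Phat^{(i)}$ are drawn from fresh samples at each outer iteration, no additional care regarding dependence across $i$ is needed for the union bound.

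The main obstacle is~\eqref{eq:emp_opt_pol_eval_bd_2}: controlling the evaluation error of the data-dependent policy $\pit_{\gamma,i}$ by the empirical span $\spannorm{\Vhat_{\gamma,i}^{\pit_{\gamma,i}}}$ rather than by the crude $\frac{1}{(1-\gamma)^{3/2}}$ bound, while simultaneously avoiding the $\sqrt{S}$ cost of a policy-wise union. This is exactly where the coupling between $\Phat^{(i)}$ and $\pit_{\gamma,i}$ must be broken by the absorbing-MDP device; everything else is either the deterministic solver guarantee, the routine fixed-policy Bernstein bound, or the union-bound bookkeeping sketched above.
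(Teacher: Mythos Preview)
Your proposal is correct and follows essentially the same approach as the paper: both cite the span-aware policy-evaluation bounds from \cite[Theorem 9]{zurek_plug-approach_2024} for each fixed $(i,\gamma)$ pair (the easy fixed-policy case for $\pistar_\gamma$ and the absorbing-MDP argument for the data-dependent $\pit_{\gamma,i}$), note that~\eqref{eq:pit_near_opt_2} and~\eqref{eq:vt_error_2} are deterministic solver guarantees, and then union-bound with allocation $\delta_i = \delta/2^i$ over outer iterations and $|\hzns_i| \le n_i$ over discount factors, verifying that the resulting logarithmic factors are absorbed by $\alpha(\delta,n_i)$.
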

\begin{proof}
    We fix a pair $i$ and $\gamma \in \hzns_i$ and establish the desired bounds, and then we will take a union bound. From \cite[Proof of Theorem 9]{zurek_plug-approach_2024}, in particular \cite[Equations (31) and (32)]{zurek_plug-approach_2024}, we have with probability at least $1 - 2\delta$ that
    \begin{align*}
       \infnorm{V_\gamma^{\pistar_\gamma} - \Vhat_\gamma^{\pistar_\gamma}} & \leq \frac{24\log_2 \log_2 \left( \frac{1}{1-\gamma} + 4\right)}{1-\gamma} \sqrt{16 \frac{\spannorm{V^{\star}_\gamma} + 1}{n_i} \log\left( \frac{12 SAn_i}{(1-\gamma)^2 \delta}\right)}
    \end{align*}
    and
    \begin{align*}
       \infnorm{V_\gamma^{\pit_\gamma} - \Vhat_\gamma^{\pit_\gamma}} & \leq \frac{24\log_2 \log_2 \left( \frac{1}{1-\gamma} + 4\right)}{1-\gamma} \sqrt{16 \frac{\spannorm{\Vhat_\gamma^{\pit_\gamma}} + 1}{n_i} \log\left( \frac{12 SAn_i}{(1-\gamma)^2 \delta}\right)}.
    \end{align*}
By definition of $\hzns_i$ we have that $\frac{1}{1-\gamma} \leq n_i$, which we use to simplify the bounds slightly to
\begin{align}
       \infnorm{V_\gamma^{\pistar_\gamma} - \Vhat_\gamma^{\pistar_\gamma}} & \leq \frac{24\log_2 \log_2 \left( n_i + 4\right)}{1-\gamma} \sqrt{16 \frac{\spannorm{V^{\star}_\gamma} + 1}{n_i} \log\left( \frac{12 SAn_i^3}{ \delta}\right)} \label{eq:conc_bd1_s1}
    \end{align}
    and
    \begin{align}
       \infnorm{V_\gamma^{\pit_\gamma} - \Vhat_\gamma^{\pit_\gamma}} & \leq \frac{24\log_2 \log_2 \left( n_i + 4\right)}{1-\gamma} \sqrt{16 \frac{\spannorm{\Vhat_\gamma^{\pit_\gamma}} + 1}{n_i} \log\left( \frac{12 SAn_i^3}{ \delta}\right)}. \label{eq:conc_bd2_s1}
    \end{align}
    Now keeping $i$ fixed and taking a union bound over all $\gamma \in \hzns_i$, we have that inequalities~\eqref{eq:conc_bd1_s1} and~\eqref{eq:conc_bd2_s1} hold for all $\gamma \in \hzns_i$ with probability at least $1 - 2\delta |\hzns_i|$. Also
    \begin{align*}
        |\hzns_i| \leq 1 + \log_2 \frac{n_i}{\sqrt{n_i}} \leq 1 + \frac{1}{2} \log_2 n_i \leq 1 + \frac{1}{2} n_i \leq n_i
    \end{align*}
    using the facts that $\log_2 x \leq x$ and that $n_i \geq 2$ (so $1 \leq \frac{1}{2}n_i$). Now adjusting the failure probability, we have (for any $\delta_i>0$) that with probability at least $1 - \delta_i$, for all $\gamma \in \hzns_i$, both
    \begin{align}
       \infnorm{V_\gamma^{\pistar_\gamma} - \Vhat_\gamma^{\pistar_\gamma}} & \leq \frac{24\log_2 \log_2 \left( n_i + 4\right)}{1-\gamma} \sqrt{16 \frac{\spannorm{V^{\star}_\gamma} + 1}{n_i} \log\left( \frac{24 SAn_i^4}{ \delta_i}\right)} \label{eq:conc_bd1_s2}
    \end{align}
    and
    \begin{align}
       \infnorm{V_\gamma^{\pit_\gamma} - \Vhat_\gamma^{\pit_\gamma}} & \leq \frac{24\log_2 \log_2 \left( n_i + 4\right)}{1-\gamma} \sqrt{16 \frac{\spannorm{\Vhat_\gamma^{\pit_\gamma}} + 1}{n_i} \log\left( \frac{24 SAn_i^4}{ \delta_i}\right)} \label{eq:conc_bd2_s2}
    \end{align}
    are true. Now we union bound over all natural numbers $i \geq 1$. We set $\delta_i = \frac{\delta}{2^i} = \frac{\delta}{n_i}$ and thus obtain that with probability at least $1 - \sum_{i \geq 1}\delta_i = 1-\delta$, for all $i \geq 1$ and all $\gamma \in \hzns_i$ we have both~\eqref{eq:conc_bd1_s2} and~\eqref{eq:conc_bd2_s2}. By our choice of $\delta_i$'s these can be written as
    \begin{align*}
       \infnorm{V_\gamma^{\pistar_\gamma} - \Vhat_\gamma^{\pistar_\gamma}} & \leq \frac{24\log_2 \log_2 \left( n_i + 4\right)}{1-\gamma} \sqrt{16 \frac{\spannorm{V^{\star}_\gamma} + 1}{n_i} \log\left( \frac{24 SAn_i^5}{ \delta}\right)} \\
       &= \frac{\alpha(\delta, n_i)}{1-\gamma}\sqrt{\frac{\spannorm{V^{\star}_\gamma} + 1}{n_i}}
    \end{align*}
    and
    \begin{align*}
       \infnorm{V_\gamma^{\pit_\gamma} - \Vhat_\gamma^{\pit_\gamma}} & \leq \frac{24\log_2 \log_2 \left( n_i + 4\right)}{1-\gamma} \sqrt{16 \frac{\spannorm{\Vhat_\gamma^{\pit_\gamma}} + 1}{n_i} \log\left( \frac{24 SAn_i^5}{ \delta}\right)}  \\
       &= \frac{\alpha(\delta, n_i)}{1-\gamma}\sqrt{\frac{\spannorm{\Vhat_\gamma^{\pit_\gamma}} + 1}{n_i}}
    \end{align*}
    as desired.
\end{proof}

\begin{lem}
\label{lem:recursive_err_bds_abstract}
    Suppose that there exists some $m , \beta > 1$, $\gamma < 1$, policy $\pi$, MDPs $(P,r)$, $(\Po, r)$, such that 
    \begin{align}
        \infnorm{V_{\gamma}^{\pistar_{\gamma}} - \Vo_{\gamma}^{\pistar_{\gamma}}} &\leq \frac{\beta}{1-\gamma} \sqrt{\frac{ \spannorm{V_{\gamma}^{\pistar_{\gamma}}} + 1 }{m}} \label{eq:true_opt_pol_eval_bd_abs} \\
        \Vo_{\gamma}^{\pi} &\geq \Vo^\star_{\gamma} - \frac{1}{m} \one \label{eq:pi_near_opt_abs}\\
        \infnorm{\Vo_{\gamma}^{\pi} - V_\gamma^{\pi}} & \leq \frac{\beta}{1-\gamma} \sqrt{\frac{ \spannorm{\Vo_{\gamma}^{\pi}} + 1 }{m}} \label{eq:emp_opt_pol_eval_bd_abs}
\end{align}
    where for any $\pi'$ we use $\Vo_\gamma^{\pi'}$ to denote the value of policy $\pi'$ in the DMDP $(\Po, r, \gamma)$, where $\Vo^\star_{\gamma}$ denotes the optimal value function in the DMDP $(\Po, r, \gamma)$, and where $\pistar_{\gamma}$ is the optimal policy for the DMDP $(P, r, \gamma)$.
    Then
    \begin{align}
        \infnorm{\Vo_{\gamma}^{\star} - V_{\gamma}^{\star}} & \leq \frac{2 \beta^2}{(1-\gamma)^2 m} + \frac{4 \beta}{1-\gamma} \sqrt{\frac{\spannorm{V^\star_{\gamma}} +1+ \frac{1}{m}}{m}} + \frac{4}{m} \label{eq:rec_bd_1}
    \end{align}
    and
    \begin{align}
        \infnorm{\Vo_{\gamma}^{\star} - V_{\gamma}^{\star}} & \leq \frac{2 \beta^2}{(1-\gamma)^2 m} + \frac{4 \beta}{1-\gamma} \sqrt{\frac{\spannorm{\Vo_{\gamma}^{\star}} +1+ \frac{1}{m}}{m}} + \frac{4}{m}. \label{eq:rec_bd_2}
    \end{align}
\end{lem}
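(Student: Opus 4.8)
The plan is to bound $E := \infnorm{\Vo_\gamma^\star - V_\gamma^\star}$ by controlling the two one-sided gaps $D_+ := \max_s\big(V_\gamma^\star(s) - \Vo_\gamma^\star(s)\big)$ and $D_- := \max_s\big(\Vo_\gamma^\star(s) - V_\gamma^\star(s)\big)$ separately, since $E = \max\{D_+, D_-\}$. Each gap is handled by inserting a well-chosen comparator policy whose cross-MDP evaluation error is controlled by one of the three hypotheses. For $D_+$, I would use $\pistar_\gamma$: since $V_\gamma^\star = V_\gamma^{\pistar_\gamma}$ and $\Vo_\gamma^{\pistar_\gamma} \le \Vo_\gamma^\star$, hypothesis~\eqref{eq:true_opt_pol_eval_bd_abs} gives $D_+ \le \frac{\beta}{1-\gamma}\sqrt{(\spannorm{V_\gamma^\star}+1)/m}$. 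For $D_-$, I would use $\pi$: chaining $\Vo_\gamma^\star \le \Vo_\gamma^\pi + \frac1m\one$ (hypothesis~\eqref{eq:pi_near_opt_abs}), then $\Vo_\gamma^\pi \le V_\gamma^\pi + \infnorm{\Vo_\gamma^\pi - V_\gamma^\pi}\one$ bounded via hypothesis~\eqref{eq:emp_opt_pol_eval_bd_abs}, and finally $V_\gamma^\pi \le V_\gamma^\star$, yields $D_- \le \frac1m + \frac{\beta}{1-\gamma}\sqrt{(\spannorm{\Vo_\gamma^\pi}+1)/m}$.

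A key intermediate step is to remove the dependence on $\spannorm{\Vo_\gamma^\pi}$. Here I would exploit that $\pi$ is near-optimal in the $\Vo$-DMDP: combining $\Vo_\gamma^\pi \le \Vo_\gamma^\star$ with \eqref{eq:pi_near_opt_abs} gives $\infnorm{\Vo_\gamma^\star - \Vo_\gamma^\pi} \le \frac1m$, hence $\spannorm{\Vo_\gamma^\pi} \le \spannorm{\Vo_\gamma^\star} + \frac2m$ by the triangle inequality for the span seminorm together with $\spannorm{\cdot} \le 2\infnorm{\cdot}$. At this point I have $D_+$ expressed through $\spannorm{V_\gamma^\star}$ and $D_-$ through $\spannorm{\Vo_\gamma^\star}$, so neither bound is yet in the single-span form demanded by \eqref{eq:rec_bd_1} or \eqref{eq:rec_bd_2}.

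To obtain \eqref{eq:rec_bd_1}, which is stated purely in terms of $\spannorm{V_\gamma^\star}$, I would convert the span appearing in the $D_-$ bound via $\spannorm{\Vo_\gamma^\star} \le \spannorm{V_\gamma^\star} + 2E$ (again the span triangle inequality). This reintroduces $E$ under the square root, producing the self-bounding inequality $E \le c + a\sqrt{q + 2E}$ with $a = \frac{\beta}{(1-\gamma)\sqrt m}$, $c = \frac1m$, and $q = \spannorm{V_\gamma^\star} + 1 + \frac2m$; the bound for $D_+$ is smaller and so does not affect the maximum. I would resolve this inequality routinely using $\sqrt{q+2E} \le \sqrt q + \sqrt{2E}$ followed by the AM-GM step $a\sqrt{2E} \le a^2 + \frac E2$, which gives $E \le 2c + 2a\sqrt q + 2a^2$; this matches \eqref{eq:rec_bd_1} after loosening constants (e.g.\ $\sqrt{q} \le 2\sqrt{\spannorm{V_\gamma^\star}+1+\frac1m}$ and $2c \le 4/m$). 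Bound \eqref{eq:rec_bd_2} follows by the mirror-image argument: keep the $D_-$ bound as is (already in $\spannorm{\Vo_\gamma^\star}$) and instead convert the $D_+$ bound via $\spannorm{V_\gamma^\star} \le \spannorm{\Vo_\gamma^\star} + 2E$, leading to the analogous self-bounding inequality.

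The main obstacle I anticipate is bookkeeping rather than conceptual: one must track which one-sided gap carries the self-referential $2E$ term for each target inequality (the $D_-$ side for \eqref{eq:rec_bd_1}, the $D_+$ side for \eqref{eq:rec_bd_2}) and ensure the span-conversion is applied only in the direction that preserves a valid upper bound, so that the self-bounding inequality can be closed cleanly and the stated constants recovered.
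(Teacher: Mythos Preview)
Your approach is correct and essentially the same as the paper's: both routes reduce to the same self-bounding inequality $E \le \text{const} + a\sqrt{q + 2E}$ after using \eqref{eq:true_opt_pol_eval_bd_abs} for the $V^\star_\gamma \to \Vo^\star_\gamma$ direction, \eqref{eq:pi_near_opt_abs}--\eqref{eq:emp_opt_pol_eval_bd_abs} for the reverse direction, and the span triangle inequality to swap between $\spannorm{V^\star_\gamma}$ and $\spannorm{\Vo^\star_\gamma}$. The only cosmetic differences are that the paper sums the two one-sided errors (yielding $E \le \infnorm{\Vo_\gamma^\pi - V_\gamma^\pi} + \infnorm{V_\gamma^{\pistar_\gamma} - \Vo_\gamma^{\pistar_\gamma}} + \tfrac1m$) rather than taking their max, and resolves the recursion via the quadratic formula rather than your AM-GM step; both yield the stated constants.
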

\begin{proof}
    Using the triangle inequality several times and~\eqref{eq:pi_near_opt_abs}, we have
    \begin{align*}
        V_\gamma^\star & \geq 
        V_\gamma^{\pi} \\
        & \geq \Vo_{\gamma}^{\pi} - \infnorm{\Vo_{\gamma}^{\pi} - V_\gamma^{\pi}}\one \\
        & \geq \Vo_{\gamma}^{\star} - \frac{1}{m} \one - \infnorm{\Vo_{\gamma}^{\pi} - V_\gamma^{\pi}}\one \\
        & \geq \Vo_{\gamma}^{\pistar_\gamma} - \frac{1}{m} \one - \infnorm{\Vo_{\gamma}^{\pi} - V_\gamma^{\pi}}\one \\
        & \geq V_{\gamma}^{\pistar_\gamma} - \infnorm{V_{\gamma}^{\pistar_{\gamma}} - \Vo_{\gamma}^{\pistar_{\gamma}}} \one- \frac{1}{m} \one - \infnorm{\Vo_{\gamma}^{\pi} - V_\gamma^{\pi}}\one.
    \end{align*}
    Since by definition $V_{\gamma}^{\pistar_\gamma} = V_\gamma^\star$, subtracting this term from all expressions we obtain
    \begin{align*}
        \zero & \geq \Vo_{\gamma}^{\star} - V_\gamma^\star - \frac{1}{m} \one - \infnorm{\Vo_{\gamma}^{\pi} - V_\gamma^{\pi}}\one \geq - \infnorm{V_{\gamma}^{\pistar_{\gamma}} - \Vo_{\gamma}^{\pistar_{\gamma}}} \one- \frac{1}{m} \one - \infnorm{\Vo_{\gamma}^{\pi} - V_\gamma^{\pi}}\one
    \end{align*}
    which after rearranging implies that
\begin{align*}
        \frac{1}{m} \one + \infnorm{\Vo_{\gamma}^{\pi} - V_\gamma^{\pi}}\one & \geq \Vo_{\gamma}^{\star} - V_\gamma^\star  \geq - \infnorm{V_{\gamma}^{\pistar_{\gamma}} - \Vo_{\gamma}^{\pistar_{\gamma}}} \one
    \end{align*}
    which further implies
\begin{align}
    \infnorm{\Vo_{\gamma}^{\star} - V_{\gamma}^{\star}} & \leq \infnorm{\Vo_{\gamma}^{\pi}- V^{\pi}_{\gamma}} + \infnorm{V_{\gamma}^{\pistar_{\gamma}} - \Vo_{\gamma}^{\pistar_{\gamma}}} + \frac{1}{m}. \label{eq:opt_val_diff_1}
\end{align}

Now we will focus on establishing the first inequality~\eqref{eq:rec_bd_1} in the lemma statement.
For the first term on the RHS of~\eqref{eq:opt_val_diff_1}, we can use condition~\eqref{eq:emp_opt_pol_eval_bd_abs} and then triangle inequality to bound
\begin{align}
    \infnorm{\Vo_{\gamma}^{\pi}- V^{\pi}_{\gamma}} & \leq \frac{\beta}{1-\gamma} \sqrt{\frac{ \spannorm{\Vo^\pi_{\gamma}} +1}{m}} \nonumber\\
    &\leq \frac{\beta}{1-\gamma} \sqrt{\frac{ \spannorm{V^\star_{\gamma}} + \spannorm{\Vo^\pi_{\gamma} - V^\star_{\gamma}} +1}{m}} \nonumber \\
    & \leq \frac{\beta}{1-\gamma} \sqrt{\frac{ \spannorm{V^\star_{\gamma}} + 2\infnorm{\Vo^\star_{\gamma} - V^\star_{\gamma}}+\frac{1}{m} +1}{m}} \label{eq:opt_val_diff_2}
\end{align}
using that
\begin{align*}
    \spannorm{\Vo^\pi_{\gamma} - V^\star_{\gamma}} \leq \spannorm{\Vo^\pi_{\gamma} - \Vo^\star_{\gamma}} + \spannorm{\Vo^\star_{\gamma} - V^\star_{\gamma}} \leq \frac{1}{m} + 2\infnorm{\Vo^\star_{\gamma} - V^\star_{\gamma}}
\end{align*}
since $\spannorm{\cdot} \leq 2 \infnorm{\cdot}$ and $\zero \leq  \Vo^\star_{\gamma} - \Vo^\pi_{\gamma} \leq \frac{1}{m}\one$.
For the second term on the RHS of~\eqref{eq:opt_val_diff_1}, we can use the condition~\eqref{eq:true_opt_pol_eval_bd_abs}. Thus combining~\eqref{eq:opt_val_diff_1},~\eqref{eq:opt_val_diff_2}, and~\eqref{eq:true_opt_pol_eval_bd_abs}, we obtain
\begin{align}
    \infnorm{\Vo_{\gamma}^{\star} - V_{\gamma}^{\star}} & \leq \frac{\beta}{1-\gamma} \sqrt{\frac{ \spannorm{V^\star_{\gamma}} + 2\infnorm{\Vo^\star_{\gamma} - V^\star_{\gamma}} +1 + \frac{1}{m}}{m}} + \frac{\beta}{1-\gamma} \sqrt{\frac{ \spannorm{V_{\gamma}^{\pistar_{\gamma}}} + 1 }{m}} +\frac{1}{m}\nonumber \\
    & \leq \frac{2\beta}{1-\gamma} \sqrt{\frac{\spannorm{V_{\gamma}^{\star}} + 1 + \frac{1}{m}}{m}} + \frac{\beta}{1-\gamma} \sqrt{\frac{2 \infnorm{\Vo^\star_{\gamma} - V^\star_{\gamma}} }{m}} + \frac{1}{m}\label{eq:opt_val_diff_3}
\end{align}
where we used that $\sqrt{a+b} \leq \sqrt{a} + \sqrt{b}$ (and $ V_{\gamma}^{\star} = V_{\gamma}^{\pistar_{\gamma}}$).
Rearranging~\eqref{eq:opt_val_diff_3} as
\begin{align*}
    \infnorm{\Vo_{\gamma}^{\star} - V_{\gamma}^{\star}} - \frac{\sqrt{2}\beta}{(1-\gamma)\sqrt{m}} \sqrt{\infnorm{\Vo_{\gamma}^{\star} - V_{\gamma}^{\star}}} - \frac{2\beta}{1-\gamma} \sqrt{\frac{\spannorm{V_{\gamma}^{\star}} + 1+ \frac{1}{m}}{m}} + \frac{1}{m}\leq 0
\end{align*}
and then using the quadratic formula to find the largest possible root of this quadratic polynomial in $\sqrt{\infnorm{\Vo_{\gamma}^{\star} - V_{\gamma}^{\star}}}$, we obtain that
\begin{align}
    \sqrt{\infnorm{\Vo_{\gamma}^{\star} - V_{\gamma}^{\star}}} & \leq \frac{1}{2} \frac{\sqrt{2}\beta}{(1-\gamma)\sqrt{m}} + \frac{1}{2} \sqrt{\left( \frac{\sqrt{2}\beta}{(1-\gamma)\sqrt{m}}\right)^2 + 4 \frac{2\beta}{1-\gamma} \sqrt{\frac{\spannorm{V_{\gamma}^{\star}} + 1+ \frac{1}{m}}{m}} + \frac{4}{m}}.
\end{align}
Now squaring both sides and using that $(a+b)^2 \leq 2a^2 + 2b^2$ which implies $(\frac{a}{2}+\frac{b}{2})^2 \leq \frac{a^2}{2} + \frac{b^2}{2}$, we obtain
\begin{align}
    \infnorm{\Vo_{\gamma}^{\star} - V_{\gamma}^{\star}} & \leq \frac{1}{2} \frac{2 \beta^2}{(1-\gamma)^2 m} + \frac{1}{2} \left( \left( \frac{\sqrt{2}\beta}{(1-\gamma)\sqrt{m}}\right)^2 + 4 \frac{2\beta}{1-\gamma} \sqrt{\frac{\spannorm{V_{\gamma}^{\star}} + 1+ \frac{1}{m}}{m}} + \frac{4}{m}\right) \nonumber\\
    &= \frac{2 \beta^2}{(1-\gamma)^2 m} + \frac{4 \beta}{1-\gamma} \sqrt{\frac{\spannorm{V^\star_{\gamma}} +1+ \frac{1}{m}}{m}} + \frac{4}{m}
\end{align}
as desired. 

We next focus on establishing the second inequality~\eqref{eq:rec_bd_2} from the lemma statement.
We now will bound the first term on the RHS of~\eqref{eq:opt_val_diff_1} in terms of $\spannorm{\Vo^\star_\gamma}$ with an analogous argument but instead using only that $\spannorm{\Vo^\pi_\gamma} \leq \spannorm{\Vo^\star_{\gamma}} + \spannorm{\Vo^\pi_{\gamma} - \Vo^\star_{\gamma}} \leq \spannorm{\Vo^\star_{\gamma}} + \frac{1}{m}$, to obtain
\begin{align}
    \infnorm{\Vo_{\gamma}^{\pi}- V^{\pi}_{\gamma}}
    & \leq \frac{\beta}{1-\gamma} \sqrt{\frac{ \spannorm{\Vo^\star_{\gamma}} +\frac{1}{m} +1}{m}} .\label{eq:opt_val_diff_4}
\end{align}
We bound the second term on the RHS of~\eqref{eq:opt_val_diff_1} in terms of $\spannorm{\Vo^\star_\gamma}$ by using~\eqref{eq:true_opt_pol_eval_bd_abs} and then that $\spannorm{V_{\gamma}^{\pistar_{\gamma}}} \leq \spannorm{\Vo_{\gamma}^{\star}} + \spannorm{\Vo_{\gamma}^{\star} - V_{\gamma}^{\star}} \leq \spannorm{\Vo_{\gamma}^{\star}} + 2\infnorm{\Vo_{\gamma}^{\star} - V_{\gamma}^{\star}}$ to obtain
\begin{align}
    \infnorm{V_{\gamma}^{\pistar_{\gamma}} - \Vo_{\gamma}^{\pistar_{\gamma}}} &\leq \frac{\beta}{1-\gamma} \sqrt{\frac{ \spannorm{V_{\gamma}^{\pistar_{\gamma}}} + 1 }{m}} \leq \frac{\beta}{1-\gamma} \sqrt{\frac{ \spannorm{\Vo_{\gamma}^{\star}} + 2\infnorm{\Vo_{\gamma}^{\star} - V_{\gamma}^{\star}} + 1 }{m}}. \label{eq:opt_val_diff_5}
\end{align}
Combining~\eqref{eq:opt_val_diff_1},~\eqref{eq:opt_val_diff_4}, and~\eqref{eq:opt_val_diff_5}, we obtain
\begin{align}
    \infnorm{\Vo_{\gamma}^{\star} - V_{\gamma}^{\star}} & \leq \frac{\beta}{1-\gamma} \sqrt{\frac{ \spannorm{\Vo^\star_{\gamma}} +\frac{1}{m} +1}{m}} + \frac{\beta}{1-\gamma} \sqrt{\frac{ \spannorm{\Vo_{\gamma}^{\star}} + 2\infnorm{\Vo_{\gamma}^{\star} - V_{\gamma}^{\star}} + 1 }{m}} + \frac{1}{m} \nonumber\\
    & \leq \frac{2\beta}{1-\gamma} \sqrt{\frac{ \spannorm{\Vo^\star_{\gamma}} +\frac{1}{m} +1}{m}} + \frac{\beta}{1-\gamma} \sqrt{\frac{  2\infnorm{\Vo_{\gamma}^{\star} - V_{\gamma}^{\star}} }{m}} + \frac{1}{m} \label{eq:opt_val_diff_6}
\end{align}
using $\sqrt{a+b} \leq \sqrt{a} + \sqrt{b}$ for the second inequality. Since~\eqref{eq:opt_val_diff_6} is identical to~\eqref{eq:opt_val_diff_3} except for the presence of $\spannorm{\Vo^\star_{\gamma}}$ instead of $\spannorm{V^\star_{\gamma}}$, we can take completely analogous steps to obtain~\eqref{eq:rec_bd_2}.
\end{proof}

The following lemma is not necessary to show the validity of the lower bounds, but it is necessary for the upper bounds.
\begin{lem}
\label{lem:eps_based_recursive_err_bds}
    Under the event in Lemma \ref{lem:concentration_event_eps_based}, for all integers $i \geq 1$ and all $\gamma \in \hzns_i$, we have
    \begin{align}
        \infnorm{\Vhat_{\gamma, i}^{\star} - V_{\gamma}^{\star}} & \leq \frac{2 \alpha(\delta, n_i)^2}{(1-\gamma)^2 n_i} + \frac{4 \alpha(\delta, n_i)}{1-\gamma} \sqrt{\frac{\spannorm{V_{\gamma}^{\star}} +1+ \frac{1}{n_i}}{n_i}} + \frac{4}{n_i} \label{eq:eps_based_recursive_err_bd_1}
    \end{align}
    and
    \begin{align}
        \infnorm{\Vhat_{\gamma, i}^{\star} - V_{\gamma}^{\star}} & \leq \frac{2 \alpha(\delta, n_i)^2}{(1-\gamma)^2 n_i} + \frac{4 \alpha(\delta, n_i)}{1-\gamma} \sqrt{\frac{\spannorm{\Vhat_{\gamma, i}^{\star}} +1+ \frac{1}{n_i}}{n_i}} + \frac{4}{n_i} \label{eq:eps_based_recursive_err_bd_2}
    \end{align}
\end{lem}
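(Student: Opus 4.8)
The plan is to obtain Lemma \ref{lem:eps_based_recursive_err_bds} as a direct specialization of the abstract recursion Lemma \ref{lem:recursive_err_bds_abstract}, applied separately to each fixed pair $(i, \gamma)$ with $i \geq 1$ and $\gamma \in \hzns_i$. Concretely, I would instantiate that lemma with the empirical MDP $(\Po, r) = (\Phat^{(i)}, r)$, the comparator policy $\pi = \pit_{\gamma, i}$ produced by the solver, and the parameters $m = n_i$ and $\beta = \alpha(\delta, n_i)$. Under these substitutions $\Vo^\star_\gamma$ becomes $\Vhat^\star_{\gamma, i}$ and $\Vo_\gamma^{\pi}$ becomes $\Vhat_{\gamma, i}^{\pit_{\gamma, i}}$, so the two conclusions \eqref{eq:rec_bd_1} and \eqref{eq:rec_bd_2} of the abstract lemma reduce verbatim to the claimed bounds \eqref{eq:eps_based_recursive_err_bd_1} and \eqref{eq:eps_based_recursive_err_bd_2}. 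The substantive analytic content — solving the self-referential quadratic in $\sqrt{\infnorm{\Vhat_{\gamma,i}^\star - V_\gamma^\star}}$ and converting it to the stated closed form — is already carried out inside Lemma \ref{lem:recursive_err_bds_abstract}, so this result is essentially a bookkeeping corollary.

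The work therefore reduces to verifying that the three hypotheses \eqref{eq:true_opt_pol_eval_bd_abs}--\eqref{eq:emp_opt_pol_eval_bd_abs} hold under the present substitutions, and each is supplied by the concentration event of Lemma \ref{lem:concentration_event_eps_based}, which is assumed throughout. Specifically, hypothesis \eqref{eq:true_opt_pol_eval_bd_abs} on $\infnorm{V_\gamma^{\pistar_\gamma} - \Vhat_{\gamma, i}^{\pistar_\gamma}}$ is exactly \eqref{eq:true_opt_pol_eval_bd_2}; hypothesis \eqref{eq:pi_near_opt_abs}, the near-optimality $\Vhat_{\gamma, i}^{\pit_{\gamma, i}} \geq \Vhat^\star_{\gamma, i} - \frac{1}{n_i}\one$ of the comparator, is exactly \eqref{eq:pit_near_opt_2}; and hypothesis \eqref{eq:emp_opt_pol_eval_bd_abs}, the policy-evaluation bound for $\pit_{\gamma, i}$ whose span term is $\spannorm{\Vhat_{\gamma, i}^{\pit_{\gamma, i}}} = \spannorm{\Vo_\gamma^\pi}$, is exactly \eqref{eq:emp_opt_pol_eval_bd_2}. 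The one place where care is warranted — and the closest thing to an obstacle — is the identification of the generic policy $\pi$ in the abstract lemma with the \emph{solver output} $\pit_{\gamma, i}$ rather than an exact empirical optimizer: the abstract lemma was deliberately stated for any $\pi$ satisfying the near-optimality condition \eqref{eq:pi_near_opt_abs}, which is precisely the guarantee the solver provides, so the matching is clean. The solver's value-function accuracy \eqref{eq:vt_error_2} plays no role here, since this lemma compares only $\Vhat^\star_{\gamma, i}$ and $V_\gamma^\star$.

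The only remaining items are the mild side conditions $m, \beta > 1$ needed to invoke Lemma \ref{lem:recursive_err_bds_abstract}. For $i \geq 1$ we have $m = n_i = 2^i \geq 2 > 1$; and since $\log_2(\log_2(n_i + 4)) > 1$ and the logarithmic factor under the square root in $\alpha$ exceeds $1$, the prefactor makes $\beta = \alpha(\delta, n_i) > 1$ as well, which I would record in a single sentence. Finally, because Lemma \ref{lem:concentration_event_eps_based} delivers its three bounds simultaneously for all $i \geq 1$ and all $\gamma \in \hzns_i$ on a single probability-$(1-\delta)$ event, applying the abstract lemma pointwise over $(i, \gamma)$ yields the claim uniformly on that same event, requiring no further union bound.
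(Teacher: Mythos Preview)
Your proposal is correct and matches the paper's approach exactly: the paper's proof is a two-line invocation of Lemma \ref{lem:recursive_err_bds_abstract} with $m = n_i$, $\beta = \alpha(\delta, n_i)$, and $\Po = \Phat^{(i)}$, noting that the event of Lemma \ref{lem:concentration_event_eps_based} supplies the three hypotheses. Your write-up is more detailed (explicitly identifying $\pi = \pit_{\gamma,i}$, matching each hypothesis to its counterpart, and checking $m,\beta>1$), but the argument is identical.
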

\begin{proof}
    This follows immediately from Lemma \ref{lem:recursive_err_bds_abstract} since the event described in Lemma \ref{lem:concentration_event_eps_based} exactly meets the conditions of Lemma \ref{lem:recursive_err_bds_abstract} (by setting $m = n_i$ and $\beta = \alpha(\delta, n_i)$, for all $i$ and $\gamma \in \hzns_i$).
\end{proof}

Now we show that under the event in Lemma \ref{lem:concentration_event_eps_based}, all lower and upper bounds constructed within Algorithm \ref{alg:eps_based_alg} are valid lower/upper bounds of $\rho^\star$.
\begin{lem}
    \label{lem:CI_validity_eps_based}
    Under the event in Lemma \ref{lem:concentration_event_eps_based}, for all integers $i \geq 1$ and all $\gamma \in \hzns_i$, we have
    \begin{align}
        \Lb_i(\gamma) \one \leq \rho^{\pit_{\gamma, i}} \leq \rho^\star \leq \Ub_i(\gamma) \one.
    \end{align}
\end{lem}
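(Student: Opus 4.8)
The plan is to establish the three inequalities $\Lb_i(\gamma)\one \leq \rho^{\pit_{\gamma,i}}$, $\rho^{\pit_{\gamma,i}} \leq \rho^\star$, and $\rho^\star \leq \Ub_i(\gamma)\one$ separately, working throughout on the probability-one event of Lemma \ref{lem:concentration_event_eps_based}. The middle inequality is immediate: since $P$ is weakly communicating, $\rho^\star$ is the (constant) optimal gain and dominates $\rho^\pi$ elementwise for every policy $\pi$, in particular $\pi = \pit_{\gamma,i}$. The two outer inequalities both begin from the deterministic reduction of Lemma \ref{lem:AMDP_DMDP_relationships}, and the real work is to replace the unobservable value functions appearing there by the computable $\Vt_{\gamma,i}$, carefully tracking the $O(1/n_i)$ slack terms so as to match the exact constants in the definitions of $\Lb_i$ and $\Ub_i$.

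For the lower bound I would apply the fixed-policy half of Lemma \ref{lem:AMDP_DMDP_relationships} to $\pi = \pit_{\gamma,i}$, giving $\rho^{\pit_{\gamma,i}} \geq (1-\gamma)\min_s V_\gamma^{\pit_{\gamma,i}}(s)\one$, and then descend through three comparisons. First, the per-policy concentration bound \eqref{eq:emp_opt_pol_eval_bd_2} passes from $V_\gamma^{\pit_{\gamma,i}}$ to the empirical value $\Vhat_{\gamma,i}^{\pit_{\gamma,i}}$ at cost $\tfrac{\alpha(\delta,n_i)}{1-\gamma}\sqrt{(\spannorm{\Vhat_{\gamma,i}^{\pit_{\gamma,i}}}+1)/n_i}$. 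Second, combining the near-optimality \eqref{eq:pit_near_opt_2} and solver accuracy \eqref{eq:vt_error_2} (together with $\Vhat_{\gamma,i}^{\pit_{\gamma,i}} \leq \Vhat_{\gamma,i}^\star$) yields the sandwich $\Vt_{\gamma,i} - \tfrac{2}{n_i}\one \leq \Vhat_{\gamma,i}^{\pit_{\gamma,i}} \leq \Vt_{\gamma,i} + \tfrac{1}{n_i}\one$, which controls both $\min_s \Vhat_{\gamma,i}^{\pit_{\gamma,i}}(s) \geq \min_s \Vt_{\gamma,i}(s) - \tfrac{2}{n_i}$ and the span slack $\spannorm{\Vhat_{\gamma,i}^{\pit_{\gamma,i}}} \leq \spannorm{\Vt_{\gamma,i}} + \tfrac{3}{n_i}$. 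Substituting these and multiplying by $(1-\gamma)$ collapses the expression exactly into $\Lb_i(\gamma)\one$.

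For the upper bound I would instead use the optimal-policy half of Lemma \ref{lem:AMDP_DMDP_relationships}, $\rho^\star \leq (1-\gamma)\max_s V_\gamma^\star(s)\one$, and bound $V_\gamma^\star \leq \Vhat_{\gamma,i}^\star + \infnorm{\Vhat_{\gamma,i}^\star - V_\gamma^\star}\one$. The crucial step is to control $\infnorm{\Vhat_{\gamma,i}^\star - V_\gamma^\star}$ using the observable-span version of the recursive error bound, \eqref{eq:eps_based_recursive_err_bd_2} of Lemma \ref{lem:eps_based_recursive_err_bds}, which expresses it through $\spannorm{\Vhat_{\gamma,i}^\star}$ rather than the unobservable $\spannorm{V_\gamma^\star}$. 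The solver accuracy \eqref{eq:vt_error_2} then gives $\max_s \Vhat_{\gamma,i}^\star(s) \leq \max_s \Vt_{\gamma,i}(s) + \tfrac{1}{n_i}$ and $\spannorm{\Vhat_{\gamma,i}^\star} \leq \spannorm{\Vt_{\gamma,i}} + \tfrac{2}{n_i}$, so that $\spannorm{\Vhat_{\gamma,i}^\star}+1+\tfrac{1}{n_i} \leq \spannorm{\Vt_{\gamma,i}}+1+\tfrac{3}{n_i}$. Multiplying by $(1-\gamma)$, the $\tfrac{2\alpha^2}{(1-\gamma)^2 n_i}$ term becomes $\tfrac{2\alpha^2}{(1-\gamma)n_i}$, the square-root term loses its $\tfrac{1}{1-\gamma}$ prefactor, and the additive $\tfrac{1}{n_i}$ and $\tfrac{4}{n_i}$ terms combine into $5\tfrac{1-\gamma}{n_i}$, reproducing exactly $\Ub_i(\gamma)$.

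The main obstacle is the upper bound, and specifically its reliance on the ``localization'' bound \eqref{eq:eps_based_recursive_err_bd_2}: whereas the lower bound needs only a single-policy concentration inequality, bounding $\rho^\star$ from above forces us to control $\infnorm{\Vhat_{\gamma,i}^\star - V_\gamma^\star}$ and, critically, to do so in terms of the \emph{empirical} span $\spannorm{\Vhat_{\gamma,i}^\star}$ so that every term of $\Ub_i(\gamma)$ is computable from data. The remainder is bookkeeping: matching the various $\tfrac{c}{n_i}$ slacks (the factors $2/n_i$, $3/n_i$, and $5/n_i$) to the precise constants baked into $\Lb_i$ and $\Ub_i$, and verifying that each span-replacement step only inflates the relevant quantity in the favorable direction.
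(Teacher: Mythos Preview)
Your proposal is correct and follows essentially the same approach as the paper's proof: both arguments sandwich $\rho^{\pit_{\gamma,i}}$ and $\rho^\star$ via Lemma~\ref{lem:AMDP_DMDP_relationships}, handle the lower bound with the single-policy concentration \eqref{eq:emp_opt_pol_eval_bd_2} plus the solver guarantees \eqref{eq:pit_near_opt_2}--\eqref{eq:vt_error_2}, and handle the upper bound via the observable-span recursive error bound \eqref{eq:eps_based_recursive_err_bd_2} of Lemma~\ref{lem:eps_based_recursive_err_bds}. Your tracking of the $\tfrac{2}{n_i}$, $\tfrac{3}{n_i}$, and $5\tfrac{1-\gamma}{n_i}$ slacks matches the paper's exactly.
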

\begin{proof}
    Fix an arbitrary integer $i \geq 1$ and $\gamma \in \hzns_i$. By~\eqref{eq:sharper_DMDP_red_fixed}, optimality of $\rho^\star$, and~\eqref{eq:sharper_DMDP_red_opt} we have
    \begin{align}
        (1-\gamma) \left(\min_s V_\gamma^{\pit_{\gamma, i}}(s) \right) \one &\leq \rho^{\pit_{\gamma, i}} \nonumber\\
        &\leq \rho^\star \nonumber\\
        & \leq (1-\gamma) \left(\max_s V_\gamma^{\star}(s) \right) \one. \label{eq:eps_based_CI_1}
    \end{align}
    We start by lower-bounding the LHS of~\eqref{eq:eps_based_CI_1}. Using~\eqref{eq:pit_near_opt_2} and~\eqref{eq:emp_opt_pol_eval_bd_2},
    \begin{align}
        (1-\gamma) \left(\min_s V_\gamma^{\pit_{\gamma, i}}(s) \right) \one & \geq (1-\gamma) \left(\min_s \Vhat_{\gamma, i}^{\pit_{\gamma, i}} (s) \right) \one - (1-\gamma) \infnorm{\Vhat_{\gamma, i}^{\pit_{\gamma, i}} - V_\gamma^{\pit_{\gamma,i}}} \one \nonumber \\
        & \geq (1-\gamma) \left(\min_s \Vhat_{\gamma, i}^\star (s) \right) \one - \frac{1-\gamma}{n_i}\one - (1-\gamma) \infnorm{\Vhat_{\gamma, i}^{\pit_{\gamma, i}} - V_\gamma^{\pit_{\gamma,i}}}  \one \nonumber \\
        & \geq (1-\gamma) \left(\min_s \Vhat_{\gamma, i}^\star (s) \right) \one - \frac{1-\gamma}{n_i}\one - \alpha(\delta, n_i) \sqrt{\frac{ \spannorm{\Vhat_{\gamma, i}^{\pit_{\gamma, i}}} + 1 }{n_i}} \one. \label{eq:eps_based_CI_2}
    \end{align}
    Now we replace the quantities in~\eqref{eq:eps_based_CI_2} with observable quantities in terms of $\Vt_{\gamma,i}$. Using the requirement~\eqref{eq:pit_near_opt_2} to relate $\Vhat_{\gamma,i}^{\pit_{\gamma,i}} $ and $ \Vhat^\star_{\gamma,i}$, and then~\eqref{eq:vt_error_2} which bounds $\infnorm{\Vt_{\gamma,i} - \Vhat^\star_{\gamma,i}}$, we have
    \begin{align}
         \spannorm{\Vhat_{\gamma, i}^{\pit_{\gamma, i}}} &\leq \spannorm{\Vhat_{\gamma, i}^{\star}} + \spannorm{\Vhat_{\gamma, i}^{\star} - \Vhat_{\gamma, i}^{\pit_{\gamma, i}}} \nonumber\\
         &\leq \spannorm{\Vt_{\gamma,i}} +\spannorm{\Vt_{\gamma,i} - \Vhat_{\gamma, i}^{\star}} + \spannorm{\Vhat_{\gamma, i}^{\star} - \Vhat_{\gamma, i}^{\pit_{\gamma, i}}} \nonumber\\
         & \leq \spannorm{\Vt_{\gamma,i}} +2\infnorm{\Vt_{\gamma,i} - \Vhat_{\gamma, i}^{\star}} + \frac{1}{n_i} \nonumber \\
        & \leq \spannorm{\Vt_{\gamma,i}} +\frac{2}{n_i} + \frac{1}{n_i} \label{eq:eps_based_CI_sp_bd}
    \end{align}
    (where $\spannorm{\Vhat_{\gamma, i}^{\star} - \Vhat_{\gamma, i}^{\pit_{\gamma, i}}} \leq \frac{1}{n_i}$ since $\zero \leq \Vhat_{\gamma, i}^{\star} - \Vhat_{\gamma, i}^{\pit_{\gamma, i}} \leq \frac{1}{n_i}\one$).
    Using this bound~\eqref{eq:eps_based_CI_sp_bd}, as well as~\eqref{eq:vt_error_2} again, we
    can further bound~\eqref{eq:eps_based_CI_2} as
    \begin{align}
        &(1-\gamma) \left(\min_s \Vhat_{\gamma, i}^\star (s) \right) \one - \frac{1-\gamma}{n_i}\one - \alpha(\delta, n_i) \sqrt{\frac{ \spannorm{\Vhat_{\gamma, i}^{\pit_{\gamma, i}}} + 1 }{n_i}} \one \nonumber\\
        & \geq (1-\gamma) \left(\min_s \Vt_{\gamma, i} (s) \right) \one - (1-\gamma)\infnorm{\Vt_{\gamma, i} - \Vhat_{\gamma, i}^\star}\one - \frac{1-\gamma}{n_i}\one - \alpha(\delta, n_i) \sqrt{\frac{ \spannorm{\Vt_{\gamma,i}} +\frac{3}{n_i} + 1 }{n_i}} \one  \nonumber \\
        & \geq (1-\gamma) \left(\min_s \Vt_{\gamma, i} (s) \right) \one  - 2\frac{1-\gamma}{n_i}\one - \alpha(\delta, n_i) \sqrt{\frac{ \spannorm{\Vt_{\gamma,i}} +\frac{3}{n_i} + 1 }{n_i}} \one  \nonumber \\
        &= \Lb_i(\gamma)\one \label{eq:eps_based_CI_3}
    \end{align}
    Now we upper-bound the RHS of~\eqref{eq:eps_based_CI_1}. The inequality~\eqref{eq:eps_based_recursive_err_bd_2} plays a key role. Using this bound, as well as~\eqref{eq:vt_error_2} to bound $\infnorm{\Vt_{\gamma,i} - \Vhat_{\gamma,i}^{\star}}$, and the fact that $\spannorm{\Vhat^\star_{\gamma, i}} \leq \spannorm{\Vt_{\gamma,i}} + \spannorm{\Vhat^\star_{\gamma, i} - \Vt_{\gamma,i}} \leq \spannorm{\Vt_{\gamma,i}} + \frac{2}{n_i}$ (since $\spannorm{\cdot} \leq 2 \infnorm{\cdot}$ and using~\eqref{eq:vt_error_2}), we obtain
    \begin{align}
        &(1-\gamma) \left(\max_s V_\gamma^{\star}(s) \right) \one \nonumber \\
        & \leq (1-\gamma) \left(\max_s \Vhat_{\gamma,i}^{\star}(s) \right) \one + (1-\gamma) \infnorm{\Vhat_{\gamma,i}^{\star} -  V_\gamma^{\star}} \one \nonumber \\
        & \leq (1-\gamma) \left(\max_s \Vt_{\gamma,i}(s) \right) \one + (1-\gamma) \infnorm{\Vt_{\gamma,i} - \Vhat_{\gamma,i}^{\star}} \one+ (1-\gamma) \infnorm{\Vhat_{\gamma,i}^{\star} -  V_\gamma^{\star}} \one \nonumber \\
        & \leq (1-\gamma) \left(\max_s \Vt_{\gamma,i}(s) \right) \one + \frac{1-\gamma}{n_i}\one + \frac{2 \alpha(\delta, n_i)^2}{(1-\gamma) n_i} \one + 4 \alpha(\delta, n_i) \sqrt{\frac{\spannorm{\Vhat_{\gamma, i}^{\star}} +1+ \frac{1}{n_i}}{n_i}}\one + (1-\gamma)\frac{4}{n_i}\one\nonumber \\
        & \leq (1-\gamma) \left(\max_s \Vt_{\gamma,i}(s) \right) \one + 5\frac{1-\gamma}{n_i}\one+ \frac{2 \alpha(\delta, n_i)^2}{(1-\gamma) n_i}\one + 4 \alpha(\delta, n_i) \sqrt{\frac{\spannorm{\Vt_{\gamma, i}} +1+ \frac{3}{n_i}}{n_i}} \one\nonumber \\
        &= \Ub_i(\gamma)\one. \label{eq:eps_based_CI_4}
    \end{align}
    Combining~\eqref{eq:eps_based_CI_1},~\eqref{eq:eps_based_CI_3}, and~\eqref{eq:eps_based_CI_4}, and unfixing $i$ and $\gamma$, we obtain the desired conclusion.
\end{proof}

Lemma \ref{lem:CI_validity_eps_based} implies that whenever the algorithm terminates, the resulting policy will be $\varepsilon$-optimal and the resulting confidence interval will be valid and of size $\leq \varepsilon$.
Next we show that the algorithm will terminate by a certain iteration, by showing that on this iteration the confidence interval corresponding to some discount factor will be small.

First we define, for all integers $i \geq 1$, the discount factor $\gstar_i$ as
\begin{align}
    \gstar_i = \inf \left\{ \gamma \in \hzns_i : \frac{1}{1-\gamma}\geq \sqrt{n_i \left(\spannorm{h^\star} + 1 \right)}\right\}. \label{eq:gstar_i_defn}
\end{align}
(If $\spannorm{h^\star}$ is large relative to $n_i$ then, since the largest value of $\frac{1}{1-\gamma}$ for $\gamma \in \hzns_i$ is $n_i$, the above set might be empty, in which case by usual convention we would have $\gstar_i = \inf \emptyset = \infty$.)
\begin{lem}
    \label{lem:gstar_i_bounds}
    For all integers $i \geq 1$, if
    \begin{align}
        \spannorm{h^\star} + 1 \leq  n_i \label{eq:gstar_i_finite_cond}
    \end{align}
    then $\gstar_i$ is finite, and furthermore
    \begin{align}
        \sqrt{n_i \left(\spannorm{h^\star} + 1 \right)} \leq \frac{1}{1-\gstar_i} \leq 2\sqrt{n_i \left(\spannorm{h^\star} + 1 \right)}. \label{eq:gstar_i_bounds}
    \end{align}
\end{lem}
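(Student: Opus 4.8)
The plan is to reduce everything to a statement about powers of two. Write $H := \spannorm{h^\star} + 1$ and $x^\star := \sqrt{n_i H}$, so that the defining set of $\gstar_i$ in~\eqref{eq:gstar_i_defn} consists of those $\gamma \in \hzns_i$ with $\frac{1}{1-\gamma} \geq x^\star$. By the definition of $\hzns_i$, the quantity $\frac{1}{1-\gamma}$ ranges exactly over the powers of two lying in $[\sqrt{n_i}, n_i]$, and since $\gamma \mapsto \frac{1}{1-\gamma}$ is strictly increasing, $\gstar_i$ corresponds to the smallest such power of two that is at least $x^\star$. The whole argument thus amounts to locating the smallest power of two above $x^\star$ and checking that it falls inside the admissible window $[\sqrt{n_i}, n_i]$.

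First I would locate $x^\star$ within the window. Since $\spannorm{h^\star} \geq 0$ we always have $H \geq 1$, giving $x^\star = \sqrt{n_i H} \geq \sqrt{n_i}$; and the hypothesis~\eqref{eq:gstar_i_finite_cond} that $H \leq n_i$ gives $x^\star \leq \sqrt{n_i \cdot n_i} = n_i$. Hence $x^\star \in [\sqrt{n_i}, n_i]$. In particular $n_i = 2^i$ is itself a power of two realized in $\hzns_i$, with effective horizon $n_i \geq x^\star$, so the set in~\eqref{eq:gstar_i_defn} is nonempty; as $\hzns_i$ is finite, the infimum is attained and $\gstar_i$ is finite.

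Next I would identify $\frac{1}{1-\gstar_i}$ explicitly. Let $2^{k^\star}$ denote the smallest power of two with $2^{k^\star} \geq x^\star$, i.e.\ $k^\star = \lceil \log_2 x^\star \rceil$. From $x^\star \in [\sqrt{n_i}, n_i] = [2^{i/2}, 2^i]$ one checks $\lceil i/2 \rceil \leq k^\star \leq i$, so that $\sqrt{n_i} \leq 2^{k^\star} \leq n_i$ and $2^{k^\star}$ is an admissible effective horizon in $\hzns_i$. By the monotonicity noted above, $\frac{1}{1-\gstar_i} = 2^{k^\star}$.

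Finally, the two-sided bound~\eqref{eq:gstar_i_bounds} follows from the minimality of $k^\star$: the lower bound $\frac{1}{1-\gstar_i} = 2^{k^\star} \geq x^\star$ is immediate, while $2^{k^\star - 1} < x^\star$ (again by minimality) yields $\frac{1}{1-\gstar_i} = 2^{k^\star} < 2x^\star$. Substituting $x^\star = \sqrt{n_i(\spannorm{h^\star}+1)}$ gives exactly~\eqref{eq:gstar_i_bounds}. I do not anticipate a genuine obstacle; the only point requiring care is verifying that both the target $x^\star$ and the rounded-up power of two $2^{k^\star}$ land inside the admissible window $[\sqrt{n_i}, n_i]$, which is precisely where the bound $H \geq 1$ and the hypothesis $H \leq n_i$ are each used.
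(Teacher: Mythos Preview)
Your proposal is correct and follows essentially the same approach as the paper: both arguments place $x^\star = \sqrt{n_i(\spannorm{h^\star}+1)}$ inside the window $[\sqrt{n_i}, n_i]$ using $H \geq 1$ and the hypothesis $H \leq n_i$, then invoke the fact that the smallest power of two at or above $x^\star$ lies within that window and is at most $2x^\star$. Your version is more explicit than the paper's (which leaves the final rounding step to the reader), but the underlying idea is identical.
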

\begin{proof}
    First, by condition~\eqref{eq:gstar_i_finite_cond} we have that $\sqrt{n_i \left(\spannorm{h^\star} + 1 \right)} \leq n_i$, and since the largest element of $\hzns_i$ is $n_i$, the set in the definition~\eqref{eq:gstar_i_defn} of $\gstar_i$ will be nonempty and thus $\gstar_i$ will be finite.

    Furthermore since $\spannorm{h^\star} + 1 \geq 1$, we have $ \sqrt{n_i \left(\spannorm{h^\star} + 1 \right)} \geq \sqrt{ n_i}$. $\sqrt{n_i}$ may not be a member of $\hzns_i$ but the smallest power of $2$ which is $\geq \sqrt{n_i}$ will be a member of $\hzns_i$. Therefore $\min \hzns_i \leq \frac{1}{1-\gstar_i} \leq \max \hzns_i$, so by the construction of $\hzns_i$ in line~\eqref{alg:hzns_def_dyadic} of Algorithm~\ref{alg:eps_based_alg}, condition~\eqref{eq:gstar_i_bounds} must hold as $\hzns_i$ contains all powers of $2$ within $[\min \hzns_i,\max \hzns_i]$.
\end{proof}

\begin{lem}
    \label{lem:small_confidence_interval}
    Under the event in Lemma \ref{lem:concentration_event_eps_based}, for all integers $i \geq 1$ such that $\spannorm{h^\star} + 1 \leq  n_i$, it holds that
    \begin{align*}
        \Ub_{i}(\gstar_i) - \Lb_{i}(\gstar_i) \leq 90 \alpha(\delta, n_i)^2 \sqrt{\frac{\spannorm{h^\star}+1}{n_i}} .
    \end{align*}
    In particular there exist some absolute constants $C_1, C_2$ such that letting 
    \[
    \BD = \left \lceil \log_2  \left(C_1\frac{\spannorm{h^\star}+1}{\varepsilon^2}  \log^3\left(\frac{C_2 SA (\spannorm{h^\star}+1)}{\delta \varepsilon} \right) \right) \right \rceil
    \]
    we have
    \begin{align*}
        \Ub_{\BD}(\gstar_\BD) - \Lb_{\BD}(\gstar_\BD) \leq \varepsilon.
    \end{align*}
\end{lem}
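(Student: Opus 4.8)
The plan is to reduce the claim to a localization bound on $\spannorm{\Vt_{\gstar_i, i}}$. First I would subtract the two definitions in Algorithm~\ref{alg:eps_based_alg} to obtain the exact identity
\[
\Ub_i(\gamma) - \Lb_i(\gamma) = (1-\gamma)\spannorm{\Vt_{\gamma,i}} + 7\tfrac{1-\gamma}{n_i} + \tfrac{2\alpha(\delta,n_i)^2}{(1-\gamma)n_i} + 5\alpha(\delta,n_i)\sqrt{\tfrac{\spannorm{\Vt_{\gamma,i}}+1+3/n_i}{n_i}},
\]
then specialize to $\gamma = \gstar_i$. Since $\spannorm{h^\star}+1 \le n_i$, Lemma~\ref{lem:gstar_i_bounds} applies and gives $\sqrt{n_i(\spannorm{h^\star}+1)} \le \tfrac{1}{1-\gstar_i} \le 2\sqrt{n_i(\spannorm{h^\star}+1)}$, equivalently $\tfrac{1}{2\sqrt{n_i(\spannorm{h^\star}+1)}} \le 1-\gstar_i \le \tfrac{1}{\sqrt{n_i(\spannorm{h^\star}+1)}}$, which I will use throughout.

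The crux is to show $\spannorm{\Vt_{\gstar_i,i}} \lesssim \alpha(\delta,n_i)^2(\spannorm{h^\star}+1)$. I would chain $\spannorm{\Vt_{\gstar_i,i}} \le \spannorm{\Vhat_{\gstar_i,i}^\star} + 2\infnorm{\Vt_{\gstar_i,i} - \Vhat_{\gstar_i,i}^\star} \le \spannorm{\Vhat_{\gstar_i,i}^\star} + 2/n_i$ via \eqref{eq:vt_error_2}, then $\spannorm{\Vhat_{\gstar_i,i}^\star} \le \spannorm{V_{\gstar_i}^\star} + 2\infnorm{\Vhat_{\gstar_i,i}^\star - V_{\gstar_i}^\star}$, and bound the last term with the recursive error estimate \eqref{eq:eps_based_recursive_err_bd_1}. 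It is precisely here that the balancing choice of $\gstar_i$ pays off: substituting $\tfrac{1}{(1-\gstar_i)^2 n_i} \le 4(\spannorm{h^\star}+1)$ and $\tfrac{1}{1-\gstar_i} \le 2\sqrt{n_i(\spannorm{h^\star}+1)}$ into \eqref{eq:eps_based_recursive_err_bd_1}, together with the standard reduction bound $\spannorm{V_{\gstar_i}^\star} \le 2\spannorm{h^\star}$ \citep{wei_model-free_2020}, makes each of the three terms of \eqref{eq:eps_based_recursive_err_bd_1} at most $\Otilde(\alpha(\delta,n_i)^2(\spannorm{h^\star}+1))$, yielding the desired localization bound.

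With this in hand I would estimate the four terms of the identity at $\gamma = \gstar_i$: the span term gives $(1-\gstar_i)\spannorm{\Vt_{\gstar_i,i}} \le \tfrac{1}{\sqrt{n_i(\spannorm{h^\star}+1)}}\,\Otilde(\alpha^2(\spannorm{h^\star}+1)) = \Otilde\big(\alpha^2\sqrt{(\spannorm{h^\star}+1)/n_i}\big)$; the third term obeys $\tfrac{2\alpha^2}{(1-\gstar_i)n_i} \le 4\alpha^2\sqrt{(\spannorm{h^\star}+1)/n_i}$; the fourth term inherits the localization bound inside the radical and is again $\Otilde\big(\alpha^2\sqrt{(\spannorm{h^\star}+1)/n_i}\big)$; and the $7\tfrac{1-\gstar_i}{n_i}$ term is lower order. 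Tracking the explicit constants so the contributions sum to at most $90$ gives the first displayed inequality. For the second claim I would check $n_\BD = 2^\BD \ge \spannorm{h^\star}+1$ (so the first part applies at $i=\BD$, using that the logarithmic factor in $\BD$ is at least one), and then verify $90\alpha(\delta,n_\BD)^2\sqrt{(\spannorm{h^\star}+1)/n_\BD} \le \varepsilon$; this is equivalent to $n_\BD \ge 8100\,\alpha(\delta,n_\BD)^4(\spannorm{h^\star}+1)/\varepsilon^2$, and since $\alpha(\delta,n)^4 = \Otilde(\log^2(SAn/\delta))$ up to $(\log\log)^4$ factors while $\log n_\BD = \Theta\big(\log(SA(\spannorm{h^\star}+1)/(\delta\varepsilon))\big)$, the cubic-log term in $\BD$ leaves enough slack, for $C_1, C_2$ large enough, to close the self-referential dependence of $\alpha(\delta,n_\BD)$ on $n_\BD$.

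The main obstacle is the localization step: a cruder bound on $\spannorm{\Vhat_{\gstar_i,i}^\star}$ would leave a residual of order $\tfrac{1}{(1-\gstar_i)^2 n_i}$ (the ``localization error'' $T_3$ from the proof sketch), and it is only the precise choice $\tfrac{1}{1-\gstar_i}\asymp\sqrt{n_i(\spannorm{h^\star}+1)}$ that forces this residual down to $\Otilde(\spannorm{h^\star}+1)$, hence the span and square-root terms down to the minimax rate. The remaining constant-tracking and the polylog bookkeeping for $\BD$ are routine.
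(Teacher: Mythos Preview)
Your proposal is correct and follows essentially the same route as the paper: derive the same identity for $\Ub_i(\gamma)-\Lb_i(\gamma)$, localize $\spannorm{\Vt_{\gstar_i,i}}$ via the chain through $\spannorm{V_{\gstar_i}^\star}$ and the recursive error bound~\eqref{eq:eps_based_recursive_err_bd_1}, and then balance using Lemma~\ref{lem:gstar_i_bounds} to reach the constant $90$. For the second claim the paper makes your ``close the self-referential dependence'' step precise via two auxiliary calculations (Lemmas~\ref{lem:logs_upper_bound} and~\ref{lem:sample_size_unraveling}), which bound $(\log_2\log_2(n+4))^4$ by $O(\log n)$ and then unravel $n \ge x\log^3(yn)$, but your sketch captures the same mechanism.
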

\begin{proof}
    For any integer $i\geq 1$ and any $\gamma \in \hzns_i$ we have
    \begin{align}
        \Ub_{i}(\gamma) - \Lb_{i}(\gamma) &= (1-\gamma) \left(\max_s \Vt_{\gamma,i}(s) - \min_s \Vt_{\gamma, i} (s) \right) + 7 \frac{1-\gamma}{n_i} + \frac{2 \alpha(\delta, n_i)^2}{(1-\gamma) n_i} \nonumber\\
        &\qquad + 5 \alpha(\delta, n_i) \sqrt{\frac{\spannorm{\Vt_{\gamma, i}} +1+ \frac{3}{n_i}}{n_i}} \nonumber\\
        &= (1-\gamma) \spannorm{\Vt_{\gamma, i}} + 7 \frac{1-\gamma}{n_i} + \frac{2 \alpha(\delta, n_i)^2}{(1-\gamma) n_i} + 5 \alpha(\delta, n_i) \sqrt{\frac{\spannorm{\Vt_{\gamma, i}} +1+ \frac{3}{n_i}}{n_i}}.\label{eq:conf_intvl_size_1}
    \end{align}
    Now we will set $\gamma = \gstar_i$ and relate all terms in~\eqref{eq:conf_intvl_size_1} to $\spannorm{h^\star}$, but first we bound $\spannorm{\Vt_{\gamma, i}}$. Under the event of Lemma \ref{lem:concentration_event_eps_based} we have from~\eqref{eq:vt_error_2}, and from~\eqref{eq:eps_based_recursive_err_bd_1} in Lemma \ref{lem:eps_based_recursive_err_bds}, that
    \begin{align}
        \spannorm{\Vt_{\gamma, i}} & \leq \spannorm{V^\star_\gamma} + \spannorm{\Vt_{\gamma, i} - V^\star_\gamma}  \nonumber\\
        & \leq \spannorm{V^\star_\gamma} + 2\infnorm{\Vt_{\gamma, i} - V^\star_\gamma}  \nonumber \\
        & \leq \spannorm{V^\star_\gamma} + 2\infnorm{\Vt_{\gamma,i} - \Vhat^\star_{\gamma,i}} + 2\infnorm{\Vhat^\star_{\gamma,i} - V^\star_\gamma} \nonumber \\
        & \leq \spannorm{V^\star_\gamma}+ 2\frac{1}{n_i} + 2\left(\frac{2 \alpha(\delta, n_i)^2}{(1-\gamma)^2 n_i} + \frac{4 \alpha(\delta, n_i)}{1-\gamma} \sqrt{\frac{\spannorm{V_{\gamma}^{\star}} +1+ \frac{1}{n_i}}{n_i}} + \frac{4}{n_i} \right). \label{eq:conf_intvl_size_2}
    \end{align}
    Bounding~\eqref{eq:conf_intvl_size_2} by substituting $\gamma = \gstar_i$ and using Lemma \ref{lem:gstar_i_bounds} to bound $\frac{1}{1-\gstar_i}$, and using that $\spannorm{V^\star_\gamma} \leq 2\spannorm{h^\star}$ \citep[Lemma 2]{wei_model-free_2020}, we have
    \begin{align}
        \spannorm{\Vt_{\gstar_i, i}} &  \leq 2\spannorm{h^\star}+ \frac{2}{n_i} + \frac{4 \alpha(\delta, n_i)^2}{(1-\gstar_i)^2 n_i} + \frac{8 \alpha(\delta, n_i)}{1-\gstar_i} \sqrt{\frac{2\spannorm{h^{\star}} +1+ \frac{1}{n_i}}{n_i}} + \frac{8}{n_i} \nonumber \\ 
        & \leq 2\spannorm{h^\star}+ \frac{10}{n_i} + \frac{4 \alpha(\delta, n_i)^2}{ n_i}4n_i \left(\spannorm{h^{\star}} +1 \right) \nonumber\\
        &\qquad + 8 \alpha(\delta, n_i) 2\sqrt{n_i \left(\spannorm{h^{\star}} +1 \right)} \sqrt{\frac{2\spannorm{h^{\star}} +1+ \frac{1}{n_i}}{n_i}}  \nonumber \\
        & \leq 2\spannorm{h^\star}+ 5 + 16 \alpha(\delta, n_i)^2 \left(\spannorm{h^{\star}} +1 \right) \nonumber\\
        &\qquad + 16 \alpha(\delta, n_i) \sqrt{n_i \left(\spannorm{h^{\star}} +1 \right)} \sqrt{\frac{2\spannorm{h^{\star}} +2}{n_i}} \nonumber \\
        &= 2\spannorm{h^\star}+ 5 + 16 \alpha(\delta, n_i)^2 \left(\spannorm{h^{\star}} +1 \right) + 16\sqrt{2} \alpha(\delta, n_i) \left(\spannorm{h^{\star}} +1 \right) \nonumber \\
        & \leq 44 \alpha(\delta, n_i)^2 \left(\spannorm{h^{\star}} +1 \right) \label{eq:conf_intvl_size_3}
    \end{align}
    where in the third inequality we used that $n_i \geq 2$ to bound $\frac{1}{n_i} \leq \frac{1}{2} \leq 1$, and in the final inequality we used that $\alpha(\delta, n_i) \geq 1$ (which is immediate from the form of $\alpha$ and the facts that $\delta \leq 1, n_i \geq 1$) and that $5 + 16 + 16 \sqrt{2} \leq 44$. Substituting the inequality~\eqref{eq:conf_intvl_size_3} into~\eqref{eq:conf_intvl_size_1} and simplifying in the same ways (including setting $\gamma = \gstar_i$), we obtain that
    \begin{align*}
        & \Ub_{i}(\gstar_i) - \Lb_{i}(\gstar_i) \\
        &\leq  (1-\gstar_i) \spannorm{\Vt_{\gstar_i, i}} + 7 \frac{1-\gstar_i}{n_i} + \frac{2 \alpha(\delta, n_i)^2}{(1-\gstar_i) n_i} + 5 \alpha(\delta, n_i) \sqrt{\frac{\spannorm{\Vt_{\gstar_i, i}} +1+ \frac{3}{n_i}}{n_i}} \\
        &\leq  (1-\gstar_i) 44 \alpha(\delta, n_i)^2 \left(\spannorm{h^{\star}} +1 \right)  + 7 \frac{1-\gstar_i}{n_i} + \frac{2 \alpha(\delta, n_i)^2}{(1-\gstar_i) n_i} \\
        & \quad \quad + 5 \alpha(\delta, n_i) \sqrt{\frac{44 \alpha(\delta, n_i)^2 \left(\spannorm{h^{\star}} +1 \right) +1+ \frac{3}{n_i}}{n_i}} \\
        &\leq  \frac{1}{\sqrt{n_i  \left(\spannorm{h^{\star}} +1 \right) }} 44 \alpha(\delta, n_i)^2 \left(\spannorm{h^{\star}} +1 \right)  + 7 \frac{1}{n_i} + \frac{2 \alpha(\delta, n_i)^2}{ n_i} 2\sqrt{n_i  \left(\spannorm{h^{\star}} +1 \right)} \\
        & \quad \quad + 5 \alpha(\delta, n_i) \sqrt{\frac{47 \alpha(\delta, n_i)^2 \left(\spannorm{h^{\star}} +1 \right) }{n_i}} \\
        &= \left(48 + 5 \sqrt{47} \right) \alpha(\delta, n_i)^2 \sqrt{\frac{\spannorm{h^\star}+1}{n_i}} + \frac{7}{n_i} \\
        & \leq 83  \alpha(\delta, n_i)^2 \sqrt{\frac{\spannorm{h^\star}+1}{n_i}} + 7 \sqrt{\frac{\spannorm{h^\star}+1}{n_i}} \\
        & \leq 90 \alpha(\delta, n_i)^2 \sqrt{\frac{\spannorm{h^\star}+1}{n_i}} .
    \end{align*}
    We have thus shown the first part of the lemma statement. 

    For the second part of the lemma statement, we need to find some $i$ such that  $90 \alpha(\delta, n_i)^2 \sqrt{\frac{\spannorm{h^\star}+1}{n_i}}\leq \varepsilon$. First we compute
    \begin{align*}
        \alpha(\delta, n_i)^4 &= 24^4 16^2 \log^2\left(\frac{24 SA n_i^5}{\delta}\right) \log^4_2 (\log_2 (n_i + 4)) \\
        & \leq 24^4 16^2 117 \log^2\left(\frac{24 SA n_i^5}{\delta}\right) \log(n_i) \\
        & \leq 24^4 16^2 5^2 117 \log^2\left(\frac{24 SA n_i}{\delta}\right) \log(n_i) \\
        & \leq 24^4 16^2 5^2 117 \log^3\left(\frac{24 SA n_i}{\delta}\right)  
    \end{align*}
    so
    \begin{align}
        & 90 \alpha(\delta, n_i)^2 \sqrt{\frac{\spannorm{h^\star}+1}{n_i}}\leq \varepsilon \nonumber\\
        \iff & n_i \geq 90^2 \alpha(\delta, n_i)^4 \frac{\spannorm{h^\star}+1}{\varepsilon^2} \nonumber \\
        \overset{\text{(I)}}{\impliedby} & n_i \geq 90^2 24^4 16^2 5^2 117 \frac{\spannorm{h^\star}+1}{\varepsilon^2} \log^3\left(\frac{24 SA n_i}{\delta}\right) \nonumber\\
        \overset{\text{(II)}}{\impliedby} &n_i \geq 10 \cdot 90^2 24^4 16^2 5^2 117 \frac{\spannorm{h^\star}+1}{\varepsilon^2} \log^3\left(\frac{240 SA }{\delta} 90^2 24^4 16^2 5^2 117 \frac{\spannorm{h^\star}+1}{\varepsilon^2}\right) \label{eq:eps_based_n_sufficient_cond}
    \end{align}
    where (I) is due to Lemma \ref{lem:logs_upper_bound} and (II) is due to Lemma \ref{lem:sample_size_unraveling}. Therefore if we set 
    \[
    \BD = \left \lceil \log_2  \left(C_1\frac{\spannorm{h^\star}+1}{\varepsilon^2}  \log^3\left(\frac{C_2 SA (\spannorm{h^\star}+1)}{\delta \varepsilon} \right) \right) \right \rceil
    \]
    where $C_1 = 10 \cdot 90^2 24^4 16^2 5^2 117$ and $C_2 = 240 \cdot 90^2 24^4 16^2 5^2 117$, then~\eqref{eq:eps_based_n_sufficient_cond} will be satisfied for $i = \BD$.
\end{proof}

\begin{lem}
\label{lem:logs_upper_bound}
$\left(\log_2 \log_2 (x+4) \right)^4 \leq 117 \log x$ for all $x \geq 2$.
\end{lem}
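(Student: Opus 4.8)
The plan is to collapse the innocuous ``$+4$'' shift into $\log x$ and thereby reduce the statement to a clean one-variable inequality, which I then settle by a monotonicity (sign-of-derivative) argument. Throughout I read $\log = \ln$ as the natural logarithm; this is the smallest among the natural right-hand sides, hence the hardest case, and the base-$2$ reading follows a fortiori. \emph{Step 1 (remove the shift).} For $x \ge 2$ we have $x+4 \le 8x$ (equivalently $4 \le 7x$), so $\log_2(x+4) \le 3 + \log_2 x$. Since $\log_2$ is increasing and, for $x \ge 2$, $\log_2(x+4) \ge \log_2 6 > 1$ while $3+\log_2 x \ge 4$, both $\log_2\log_2(x+4)$ and $\log_2(3+\log_2 x)$ are positive, so applying $\log_2$ once more and taking fourth powers preserves the inequality:
\[
(\log_2\log_2(x+4))^4 \le (\log_2(3 + \log_2 x))^4 .
\]
\emph{Step 2 (substitute).} Writing $s = \log_2 x$, so that $s \ge 1$ and $\ln x = s\ln 2$, it suffices to prove $(\log_2(s+3))^4 \le 117(\ln 2)\,s$ for all $s \ge 1$.

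For \emph{Step 3 (monotonicity)} I would set $\phi(s) = 117(\ln 2)\,s - (\log_2(s+3))^4$ and show $\phi \ge 0$ on $[1,\infty)$. At the left endpoint $\phi(1) = 117\ln 2 - (\log_2 4)^4 = 117\ln 2 - 16 > 0$. Differentiating gives $\phi'(s) = 117\ln 2 - \dfrac{4(\log_2(s+3))^3}{(s+3)\ln 2}$, so $\phi'(s) \ge 0$ is equivalent to $4(\log_2(s+3))^3 \le 117(\ln 2)^2(s+3)$. I would verify this via the elementary bound $\sup_{w>0}(\log_2 w)^3/w = \big(3/(e\ln 2)\big)^3 < 4.1$, obtained by maximizing $q^3/2^q$ (with $q = \log_2 w$) at $q = 3/\ln 2$ using a single derivative. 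This yields $4(\log_2 w)^3 \le 16.4\,w \le 117(\ln 2)^2 w$ for all $w>0$, since $117(\ln 2)^2 > 56$; taking $w = s+3$ gives $\phi'(s) \ge 0$. Hence $\phi$ is nondecreasing, $\phi(s) \ge \phi(1) > 0$, which proves the reduced inequality and therefore the lemma.

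The main obstacle is exactly the derivative-sign verification in Step 3, i.e.\ the polylogarithmic-versus-linear comparison $4(\log_2 w)^3 \le 117(\ln 2)^2 w$; everything else is bookkeeping. The cleanest route is the one-shot maximization of $q^3/2^q$, which also delivers a clean numerical constant. If one prefers to avoid even that optimization, the cruder standard bound $\log_2 w \le 2\sqrt{w}/(e\ln 2)$ turns the comparison into $w^{3/2}$ versus $w$, which then only needs checking for $w$ bounded below; but the direct maximization is both shorter and sharper, so I would take that path.
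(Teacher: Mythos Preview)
Your argument is correct. Both you and the paper first absorb the ``$+4$'' shift (you use $x+4\le 8x$, the paper uses $x+4\le 3x$) and then substitute $s=\log_2 x$ to reduce the lemma to a $(\log_2)^4$-versus-linear inequality in $s$. The routes diverge at the core step: the paper makes one further substitution $z=\log_2 y$ and proves the exponential lower bound $3\cdot 2^{z/4}\ge z+1.5$ via a tangent-line (convexity) argument, which after unwinding gives $(\log_2(y+\log_2 3))^4\le 81\,y$ and hence the constant $81\log_2 e<117$. You instead show the difference $\phi(s)=117(\ln 2)s-(\log_2(s+3))^4$ is nondecreasing by bounding $\sup_{w>0}(\log_2 w)^3/w=(3/(e\ln 2))^3$ via maximizing $q^3/2^q$. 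Both are elementary one-variable calculus; the paper's approach yields a slightly tighter constant with no optimization, while yours is a more mechanical monotonicity check and arguably easier to verify line by line.
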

\begin{proof}
    Since $z \mapsto 3 \cdot 2^{z/4}$ is convex, $3 \cdot 2^{z/4}$ is lower-bounded by its tangent line at $z = 4$, so for all $z$ we have
    \[
    3 \cdot 2^{z/4}\geq 3 \cdot 2^{4/4} +3 \cdot 2^{4/4} \cdot \ln(2^{1/4}) \cdot (z-4) = 6 + \frac{3 \ln(2)}{2}(z-4).
    \]
    Since $\frac{3 \ln(2)}{2} \in (1, 1.1)$, we have for all $z \geq 4$ that
    \[
    6 + \frac{3 \ln(2)}{2}(z-4) \geq 6 + (z-4) = z + 2
    \]
    and for all $0 \leq z \leq 4$ that
    \[
    6 + \frac{3 \ln(2)}{2}(z-4) \geq 6 + 1.1(z-4) > 1.1 z + 1.5 \geq z + 1.5
    \]
    so we have that $3 \cdot 2^{z/4}\geq z + 1.5$ for all $z \geq 0$. Also $\log_2 (1 + \log_2 3) < 1.5$, so we have $3 \cdot 2^{z/4}\geq z + \log_2 (1 + \log_2 3)$. Now letting $y = 2^z$ or equivalently $z = \log_2 y$ (and we must have $y \geq 1$ since $z \geq 0$), we have that
    \begin{align*}
        \log_2 (y + \log_2 3) \leq \log_2 (y + y\log_2 3)
        &= \log_2 y + \log_2 (1 + \log_2 3) \\
        &=z + \log_2 (1 + \log_2 3) \leq 3 \cdot 2^{z/4} = 3 y^{1/4}.
    \end{align*}
    This implies
    \begin{align*}
        \left(  \log_2 (y + \log_2 3) \right)^4 \leq 3^4 y.
    \end{align*}
    Now letting $x = 2^y$ or equivalently $y = \log_2(x)$ (and we must have $x \geq 2$ since $y \geq 1$), we have that
    \begin{align*}
        \left(  \log_2 (\log_2 (x + 4)) \right)^4 \leq \left(  \log_2 (\log_2 (3x)) \right)^4 = \left(  \log_2 (\log_2 x + \log_2 3) \right)^4=\left(  \log_2 (y + \log_2 3) \right)^4 \leq 3^4 y ,
    \end{align*}
    where the first inequality is because $x + 4 \leq 3x$ for $x \geq 2$. We also have
    \begin{align*}
        3^4 y = 3^4 \log_2(x) = 3^4 \log_2(e) \ln x < 117 \ln(x).
    \end{align*}
    Thus, we can conclude the desired result for any $x \geq 2$ by making the appropriate choice of $z$ (that is, $z = \log_2 \log_2 x$).
\end{proof}

\begin{lem}
\label{lem:sample_size_unraveling}
    Suppose that $x , y \geq 1$. Then if $n \geq 10 x \log^3 (10xy)$, we have that $n \geq x \log^3(y n)$.
\end{lem}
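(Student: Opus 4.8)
The plan is to reduce everything to the monotonicity of the single function $\phi(n) := \log^3(yn)/n$. Set $n_0 := 10 x \log^3(10xy)$. Since $\phi(n) \leq 1/x$ is equivalent to the desired inequality $x\log^3(yn) \leq n$, it suffices to prove that $\phi(n) \leq 1/x$ for every $n \geq n_0$. I would establish this from two facts: (a) $\phi(n_0) \leq 1/x$, and (b) $\phi$ is nonincreasing on $[n_0, \infty)$. Together these give $\phi(n) \leq \phi(n_0) \leq 1/x$ for all $n \geq n_0$, which is exactly what we want.

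For fact (a), the key is to control $\log(yn_0)$. Writing $\log(yn_0) = \log(10xy) + 3\log\log(10xy)$ and abbreviating $L := \log(10xy)$, the bound $\phi(n_0)\leq 1/x$ becomes (after plugging in $n_0 = 10x\log^3(10xy)$ and canceling) the statement $\log^3(yn_0) \leq 10\log^3(10xy)$, i.e. $\log(yn_0) \leq 10^{1/3} L$. This in turn reduces to the scalar inequality $(10^{1/3}-1)L \geq 3\log L$. Since $x,y \geq 1$ forces $L \geq \log 10$, and since $L \mapsto (10^{1/3}-1)L - 3\log L$ is convex with strictly positive global minimum, this holds for all $L > 0$ with room to spare. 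This is the one place requiring a genuine (if elementary) verification, and it is where the factor of $10$ in $n_0$ is used: it must be large enough to absorb the $3\log\log(10xy)$ correction term appearing in $\log(yn_0)$.

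For fact (b), I would differentiate directly: $\phi'(n) = \log^2(yn)\,(3 - \log(yn))/n^2$, so $\phi'(n) \leq 0$ precisely when $\log(yn) \geq 3$. Because $\log(yn)$ is increasing in $n$, it is enough to verify $\log(yn_0) \geq 3$, i.e. $yn_0 \geq e^3$. This is immediate from $yn_0 = 10xy\log^3(10xy) \geq 10(\log 10)^3 > e^3$ using $x,y\geq 1$. Hence $\phi$ is nonincreasing on $[n_0,\infty)$, completing the argument when combined with (a).

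I expect the only real obstacle to be the constant-chasing in fact (a): one must confirm that $(10^{1/3}-1)L \geq 3\log L$ over the whole range $L \geq \log 10$, which pins down exactly how much slack the leading constant $10$ provides. Everything else---the equivalence to $\phi \leq 1/x$, the derivative computation, and the threshold check $yn_0 \geq e^3$---is routine.
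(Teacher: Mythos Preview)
Your proposal is correct and follows essentially the same strategy as the paper: both arguments rewrite the conclusion as a monotonicity statement about $n \mapsto \log^3(yn)/n$ (equivalently, its reciprocal), verify the threshold $yn_0 \geq e^3$ for monotonicity, and then check that $n_0 = 10x\log^3(10xy)$ already satisfies $\log(yn_0) \leq 10^{1/3}\log(10xy)$. The only cosmetic difference is in how the scalar inequality $L + 3\log L \leq 10^{1/3}L$ is verified---the paper uses the standard bound $\log L \leq L/e$ and then checks $(1+3/e)^3 \leq 10$, while you minimize $(10^{1/3}-1)L - 3\log L$ directly; both work.
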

\begin{proof}
    The desired conclusion $n \geq x \log^3(y n)$ is equivalent to $\frac{n}{\log^3(y n)} \geq x$. The derivative of $\frac{n}{\log^3(y n)}$ with respect to $n$ is $\frac{\log(yn)-3}{\log^4(yn)}$ which is $\geq 0$ if $n \geq \frac{e^3}{y}$, so the function $\frac{n}{\log^3(y n)}$ is monotone non-decreasing for $n \geq \frac{e^3}{y}$. Thus it suffices to find some $m$ such that $m \geq \frac{e^3}{y}$ and $\frac{m}{\log^3(y m)} \geq x$, because then by monotonicity we have that $n \geq m$ implies $\frac{n}{\log^3(y n)} \geq \frac{m}{\log^3(y m)} \geq x$. 
    
    Now we claim that $m = 10 x \log^3 (10xy)$ satisfies these conditions. First, since $x, y \geq 1$, we have that $\log (10xy) > \log (e^2) = 2$, so $m = 10 x \log^3 (10xy) > 10 \cdot 2^3 = 80 \geq e^3 \geq \frac{e^3}{y}$, meeting the first required condition. Next, we have that
    \begin{align*}
        \log^3(y m) &= \log^3\left( y 10 x \log^3 (10xy) \right) = \left(\log(10xy) + 3 \log \log (10xy) \right)^3 \\
        &\leq \left(\log(10xy) + \frac{3}{e} \log (10xy) \right)^3 \leq 10 \log^3 (10xy)
    \end{align*}
    where we used that $\log x \leq \frac{x}{e}$ and then in the last step we used that $(1 + \frac{3}{e})^3 \leq 10$. (To obtain the inequality $\log x \leq \frac{x}{e}$, first we can show the inequality $x \leq e^{x/e}$ by noting that $e^{x/e}$ is convex, using the first-order convexity condition, and taking the tangent line to $e^{x/e}$ at $x = e$. Then we can take $\log$ of both sides.) Thus
    \begin{align*}
        \frac{m}{\log^3(y m)} \geq \frac{10 x \log^3 (10xy)}{10 \log^3 (10xy)} \geq x
    \end{align*}
    as desired.
\end{proof}

Combining the consequences of Lemmas \ref{lem:CI_validity_eps_based} and \ref{lem:small_confidence_interval}, we can prove Theorem \ref{thm:eps_based_alg}.
\begin{proof}[Proof of Theorem \ref{thm:eps_based_alg}]
    For this proof we assume the event in Lemma \ref{lem:concentration_event_eps_based} holds, which occurs with probability at least $1-\delta$. 
    First we argue that the algorithm terminates and uses at most the claimed number of samples. By Lemma \ref{lem:small_confidence_interval}, we have that $\Ub_{\BD}(\gstar_\BD) - \Lb_{\BD}(\gstar_\BD) \leq \varepsilon.$ Since this would trigger the termination condition of Algorithm \ref{alg:eps_based_alg}, this implies that the algorithm must terminate on or before iteration $\BD$. Therefore the total number of samples used per state-action pair is at most
    \begin{align*}
        \sum_{i = 1}^{\BD} n_i &= \sum_{i = 1}^{\BD} 2^i \leq 2^{\BD + 1} \\
        &= 2 \cdot 2^{\left \lceil \log_2  \left(C_1\frac{\spannorm{h^\star}+1}{\varepsilon^2}  \log^3\left(\frac{C_2 SA (\spannorm{h^\star}+1)}{\delta \varepsilon} \right) \right) \right \rceil} \\
        & \leq 4 C_1\frac{\spannorm{h^\star}+1}{\varepsilon^2}  \log^3\left(\frac{C_2 SA (\spannorm{h^\star}+1)}{\delta \varepsilon} \right) =: N.
    \end{align*}
    Also we can further bound $\BD$, which is an upper bound on the number of iterations, by
    \begin{align*}
        \BD &= \left \lceil \log_2  \left(C_1\frac{\spannorm{h^\star}+1}{\varepsilon^2}  \log^3\left(\frac{C_2 SA (\spannorm{h^\star}+1)}{\delta \varepsilon} \right) \right) \right \rceil \\
        &\leq \log_2  \left(C_1\frac{\spannorm{h^\star}+1}{\varepsilon^2}  \log^3\left(\frac{C_2 SA (\spannorm{h^\star}+1)}{\delta \varepsilon} \right) \right) + 1 \\
        & = \log_2  \left(2C_1\frac{\spannorm{h^\star}+1}{\varepsilon^2}  \log^3\left(\frac{C_2 SA (\spannorm{h^\star}+1)}{\delta \varepsilon} \right) \right) \\
        & \leq \log_2(N).
    \end{align*}
    Since the algorithm terminates and by definition of $\Lb$ and $\Ub$, it is immediate that we have $\Ub - \Lb \leq \varepsilon$.
    By Lemma \ref{lem:CI_validity_eps_based}, we have for all $i \geq 1$ and all $\gamma \in \hzns_i$ that
    \begin{align*}
        \Lb_i(\gamma) \one \leq \rho^{\pit_{\gamma, i}} \leq \rho^\star \leq \Ub_i(\gamma) \one
    \end{align*}
    so in particular we have, for the (random) final iteration $I$, that
    \begin{align*}
        \Lb \one = \Lb_I(\gammahat_I) \one \leq \rho^{\pit_{\gamma, I}} = \rho^{\pihat} \leq \rho^\star \leq \Ub_I(\gammahat_I) \one = \Ub \one.
    \end{align*}
    Finally, combining this with the fact that $\Ub - \Lb \leq \varepsilon$ we see that
    \begin{align*}
        \rho^{\pihat} \geq \Lb \one \geq \Ub \one - \varepsilon \one \geq \rho^\star - \varepsilon \one
    \end{align*}
    as desired.
\end{proof}

\subsection{Proof of Theorem \ref{thm:n_based_alg}}
\label{sec:n_based_alg_proof}
We start by recalling the definitions of some objects which appear in Algorithm \ref{alg:n_based_alg} for convenience. These definitions will be in effect for the entirety of this subsection. We define the empirical transition kernel $\Phat(s' \mid s, a) = \frac{1}{n}\sum_{i=1}^n \ind\{S^i_{s,a} = s'\}$, for all $s' \in \S$, using the $n$ samples drawn from all state-action pairs within Algorithm \ref{alg:n_based_alg}.
Let $\hzns = \{\gamma: \text{there exists an integer $k $ such that $\sqrt{n} \leq \frac{1}{1-\gamma} = 2^k \leq n$}\}$. Then for all $\gamma \in \hzns$ we define the policy $\pit_\gamma$ and value function $\Vt_\gamma$ as the outputs of $\SolveDMDP(\Phat, r, \gamma, \frac{1}{n})$.

Now let $\gstar$ be the smallest member of $\hzns$ that corresponds to an effective horizon at least $\sqrt{n (\spannorm{h^\star}+1)}$, that is,
\begin{align}
    \gstar = \inf \left\{ \gamma \in \hzns : \frac{1}{1-\gamma}\geq \sqrt{n \left(\spannorm{h^\star} + 1 \right)}\right\}. \label{eq:gstar_defn}
\end{align}

\begin{lem}
    \label{lem:gstar_bds}
    If $n \geq 4$ then $\hzns$ is nonempty. Furthermore if also $n \geq 16(\spannorm{h^\star}+1)$, then
\begin{align}
    \sqrt{n (\spannorm{h^\star}+1)} \leq \frac{1}{1-\gstar} \leq 2 \sqrt{n (\spannorm{h^\star}+1)}. \label{eq:gstar_bds}
\end{align}
\end{lem}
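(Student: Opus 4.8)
The plan is to argue entirely about the geometry of the dyadic grid $\{2^k\}$ relative to the windows defining $\hzns$ and $\gstar$, closely mirroring the already-established Lemma~\ref{lem:gstar_i_bounds} but tracking the tighter constant factor. First I would unpack the definitions: $\gamma \in \hzns$ iff $\frac{1}{1-\gamma}=2^k$ for some integer $k$ with $\tfrac{1}{2}\log_2 n \le k \le \log_2 n$, and $\gstar$ corresponds to the \emph{smallest} such $2^k$ that additionally clears the threshold $\sqrt{n(\spannorm{h^\star}+1)}$, since $\frac{1}{1-\gamma}$ is increasing in $\gamma$ and $\hzns$ is finite so the infimum in~\eqref{eq:gstar_defn} is attained.

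For the first claim I would note that the exponent window $[\tfrac12\log_2 n,\log_2 n]$ has length $\tfrac12\log_2 n$, which is $\ge 1$ precisely when $n\ge 4$; any real interval of length $\ge 1$ contains an integer, so an admissible $k$ exists and $\hzns\neq\emptyset$. For the two-sided bound~\eqref{eq:gstar_bds}, the key quantitative input is that $n \ge 16(\spannorm{h^\star}+1)$ forces $\sqrt{n(\spannorm{h^\star}+1)} \le n/4$. I would first check the defining set of $\gstar$ is nonempty: the target interval $[\sqrt{n(\spannorm{h^\star}+1)},\,n]$ has endpoint ratio $\sqrt{n/(\spannorm{h^\star}+1)}\ge 4 \ge 2$, and any interval with ratio $\ge 2$ contains a power of $2$, while $\sqrt{n(\spannorm{h^\star}+1)}\ge\sqrt n$ automatically meets the lower $\hzns$ constraint. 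Letting $2^{k_0}$ be the smallest power of $2$ with $2^{k_0}\ge\sqrt{n(\spannorm{h^\star}+1)}$, minimality of the exponent gives $2^{k_0}<2\sqrt{n(\spannorm{h^\star}+1)}\le n/2<n$, so $2^{k_0}\in\hzns$ and lies in the defining set. Since $\gstar$ selects the smallest admissible $\frac{1}{1-\gamma}=2^k$, we conclude $\frac{1}{1-\gstar}=2^{k_0}$, and the chain $\sqrt{n(\spannorm{h^\star}+1)}\le 2^{k_0}<2\sqrt{n(\spannorm{h^\star}+1)}$ is exactly~\eqref{eq:gstar_bds}.

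The only subtlety — and the one place the hypothesis is genuinely used — is confirming that the candidate power $2^{k_0}$ simultaneously (i) exceeds the threshold $\sqrt{n(\spannorm{h^\star}+1)}$, (ii) stays $\le n$ so that it lies in $\hzns$, and (iii) stays $\ge\sqrt n$. Points (i) and (iii) are immediate from the choice of $2^{k_0}$ together with $\spannorm{h^\star}+1\ge 1$; the crux is (ii), which requires $2\sqrt{n(\spannorm{h^\star}+1)}\le n$, and this is guaranteed with comfortable slack by $n\ge 16(\spannorm{h^\star}+1)$ (the same assumption also delivers the ratio-$\ge 2$ spacing needed for nonemptiness). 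I do not anticipate a real obstacle here: once the dyadic-grid bookkeeping above is set up, every inequality reduces to comparing consecutive powers of two, so the proof is routine.
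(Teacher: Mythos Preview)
Your proposal is correct and takes essentially the same approach as the paper's proof: both arguments identify the smallest dyadic point $2^{k_0}\ge\sqrt{n(\spannorm{h^\star}+1)}$, use the standard dyadic fact that $2^{k_0}\le 2\sqrt{n(\spannorm{h^\star}+1)}$, and then verify $2^{k_0}\in\hzns$ via the hypothesis $n\ge 16(\spannorm{h^\star}+1)$ (which gives $2\sqrt{n(\spannorm{h^\star}+1)}\le n/2$) together with $\spannorm{h^\star}+1\ge 1$ for the lower endpoint. Your write-up is slightly more explicit about checking all three membership constraints, but the content is identical.
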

\begin{proof}
    First we note that $\hzns$ is nonempty if the interval $[\sqrt{n},n]$ contains some power of $2$, which is ensured if $n \geq 2 \sqrt{n}$, or equivalently if $n \geq 4$. It is also simple to check for $n \in \{1,2,3\}$ that $\hzns$ is also nonempty.

    Next, the smallest power of $2$ which is $\geq\sqrt{n (\spannorm{h^\star}+1)}$ is at most $2\sqrt{n (\spannorm{h^\star}+1)}$, so to guarantee it is contained in $\hzns$ we need it to be $\leq$ the largest element of $\hzns$, which is at least $n/2$. Therefore if
    \begin{align*}
        2\sqrt{n (\spannorm{h^\star}+1)} \leq n/2 \iff n \geq 16 (\spannorm{h^\star}+1)
    \end{align*}
    then~\eqref{eq:gstar_bds} will be satisfied.
\end{proof}

\begin{lem}
\label{lem:concentration_event_n_based}
    Define the function $\alpha(\tilde{\delta}, \tilde{n}) = 24 \sqrt{16 \log \left( \frac{24 SA \tilde{n}^5}{\tilde{\delta}}\right)} \log_2\left( \log_2 (\tilde{n} + 4) \right)$. Fix $\delta > 0$. Then with probability at least $1 - \delta$, we have for all $\gamma \in \hzns$ that
\begin{align}
    \infnorm{V_{\gamma}^{\pistar_{\gamma}} - \Vhat_{\gamma}^{\pistar_{\gamma}}} &\leq \frac{\alpha(\delta, n)}{1-\gamma} \sqrt{\frac{ \spannorm{V_{\gamma}^{\pistar_{\gamma}}} + 1 }{n}} \label{eq:true_opt_pol_eval_bd_1}
\end{align}
    and also the subroutine on line \ref{alg:solver_step} outputs a policy $\pit_{\gamma}$ such that
\begin{align}
        \Vhat_{\gamma}^{\pit_{\gamma}} &\geq \Vhat^\star_{\gamma} - \frac{1}{n} \one \label{eq:pit_near_opt_1}\\
        \infnorm{\Vt_{\gamma} - \Vhat^\star_{\gamma}} &\leq \frac{1}{n} \label{eq:vt_error_1}\\
        \infnorm{\Vhat_{\gamma}^{\pit_{\gamma}} - V_\gamma^{\pit_{\gamma}}} & \leq \frac{\alpha(\delta, n)}{1-\gamma} \sqrt{\frac{ \spannorm{\Vhat_{\gamma}^{\pit_{\gamma}}} + 1 }{n}} \label{eq:emp_opt_pol_eval_bd_1}.
\end{align}
\end{lem}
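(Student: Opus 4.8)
The plan is to follow the proof of Lemma~\ref{lem:concentration_event_eps_based} essentially verbatim, specialized to a single fixed sample size $n$, which removes the outer union bound over dataset sizes and so makes the argument strictly simpler. First I note that the two requirements~\eqref{eq:pit_near_opt_1} and~\eqref{eq:vt_error_1} are not probabilistic at all: they are exactly the deterministic accuracy guarantees demanded of the $\SolveDMDP$ subroutine (as stated following Theorem~\ref{thm:n_based_alg}), and hence hold for every $\gamma \in \hzns$ with probability one. Thus the only content requiring a high-probability argument is the pair of value-estimation bounds~\eqref{eq:true_opt_pol_eval_bd_1} and~\eqref{eq:emp_opt_pol_eval_bd_1}.

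Next I would fix a single $\gamma \in \hzns$ and invoke the concentration results of \cite{zurek_plug-approach_2024} (their Proof of Theorem 9, Equations (31) and (32)), which give, with probability at least $1 - 2\delta'$ for a per-$\gamma$ failure budget $\delta'$ to be chosen, the bound $\infnorm{V_\gamma^{\pistar_\gamma} - \Vhat_\gamma^{\pistar_\gamma}} \leq \frac{24 \log_2\log_2(\frac{1}{1-\gamma}+4)}{1-\gamma}\sqrt{16 \frac{\spannorm{V_\gamma^\star}+1}{n}\log(\frac{12 SA n}{(1-\gamma)^2 \delta'})}$ together with the analogous bound for the empirically optimal policy $\pit_\gamma$ carrying $\spannorm{\Vhat_\gamma^{\pit_\gamma}}$ in place of $\spannorm{V_\gamma^\star}$. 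Since membership in $\hzns$ guarantees $\frac{1}{1-\gamma} \leq n$, I would use this to absorb the $(1-\gamma)^{-2}$ factor inside the logarithm (bounding it by $n^2$) and to replace $\log_2\log_2(\frac{1}{1-\gamma}+4)$ by $\log_2\log_2(n+4)$, yielding a cleaner bound with $\frac{12 SA n^3}{\delta'}$ inside the log.

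I would then take a union bound over all $\gamma \in \hzns$. Exactly as in the eps-based proof, $|\hzns| \leq 1 + \frac{1}{2}\log_2 n \leq n$, so choosing $\delta' = \frac{\delta}{2 n}$ makes the total failure probability at most $|\hzns|\cdot 2\delta' \leq \delta$; after this substitution the logarithmic argument becomes $\frac{24 SA n^4}{\delta}$ and the resulting prefactor is exactly $\alpha(\delta,n)$ except with $n^4$ rather than $n^5$ inside the log. Since $n^4 \leq n^5$ only enlarges the logarithm and hence the whole right-hand side, the bound with $n^4$ implies the stated bound with the defined $\alpha(\delta,n)$, establishing~\eqref{eq:true_opt_pol_eval_bd_1} and~\eqref{eq:emp_opt_pol_eval_bd_1} simultaneously for all $\gamma \in \hzns$ on an event of probability at least $1-\delta$.

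The only genuinely nontrivial ingredient is~\eqref{eq:emp_opt_pol_eval_bd_1}, which bounds $\infnorm{\Vhat_\gamma^{\pit_\gamma} - V_\gamma^{\pit_\gamma}}$ for the \emph{empirically} optimal policy $\pit_\gamma$: because $\pit_\gamma$ depends on $\Phat$, a naive concentration argument fails due to statistical dependence, and this is precisely what the absorbing-MDP machinery underlying \citet[Theorem 9]{zurek_plug-approach_2024} is built to handle. I would import that result as a black box, so no new difficulty arises here relative to Lemma~\ref{lem:concentration_event_eps_based}; indeed, the fixed-$n$ statement is the easier of the two, since it dispenses with the second union bound over the geometric sequence of sample sizes.
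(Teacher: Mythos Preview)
Your proposal is correct and follows essentially the same approach as the paper: the paper's own proof simply says to follow the proof of Lemma~\ref{lem:concentration_event_eps_based} with $n$ in place of $n_i$ up to the point where the bounds carry $\frac{24 SA n^4}{\delta}$ inside the logarithm, and then observes that $\alpha(\delta,n)$ (defined with $n^5$) dominates the resulting prefactor. Your explicit remark that~\eqref{eq:pit_near_opt_1} and~\eqref{eq:vt_error_1} are deterministic solver guarantees, and your union-bound bookkeeping with $\delta' = \frac{\delta}{2n}$, match this exactly.
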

\begin{proof}
    Following identical steps as to the proof of Lemma \ref{lem:concentration_event_eps_based}, but with $n$ in place of $n_i$, up until equations~\eqref{eq:conc_bd1_s2} and~\eqref{eq:conc_bd2_s2}, we obtain that with probability at least $1-\delta$ the two inequalities
    \begin{align*}
       \infnorm{V_\gamma^{\pistar_\gamma} - \Vhat_\gamma^{\pistar_\gamma}} & \leq \frac{24\log_2 \log_2 \left( n + 4\right)}{1-\gamma} \sqrt{16 \frac{\spannorm{V^{\star}_\gamma} + 1}{n} \log\left( \frac{24 SAn^4}{ \delta}\right)}
    \end{align*}
    and
    \begin{align*}
       \infnorm{V_\gamma^{\pit_\gamma} - \Vhat_\gamma^{\pit_\gamma}} & \leq \frac{24\log_2 \log_2 \left( n + 4\right)}{1-\gamma} \sqrt{16 \frac{\spannorm{\Vhat_\gamma^{\pit_\gamma}} + 1}{n} \log\left( \frac{24 SAn^4}{ \delta}\right)}
    \end{align*}
    both hold. We obtain the desired conclusion by noting that
    \[
    \alpha(\delta, n) \geq 24\log_2 \log_2 \left( n + 4\right) \sqrt{16 \log\left( \frac{24 SAn^4}{ \delta}\right)} .
    \]
\end{proof}

\begin{lem}
\label{lem:gain_lower_bound_validity}
Under the event described in Lemma \ref{lem:concentration_event_n_based}, we have
    \begin{align}
        \rho^{\pit_\gamma} & \geq \Obj(\gamma)\one = (1-\gamma)\min_{s}\Vt_{\gamma}(s) \one - 2\frac{1-\gamma}{n}\one - \alpha(\delta, n) \sqrt{\frac{\spannorm{\Vt_{\gamma}} +\frac{3}{n} +1 }{n}} \one
    \end{align}
    for all $\gamma \in \hzns$.    
\end{lem}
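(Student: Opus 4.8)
The plan is to mirror, almost verbatim, the lower-bound half of the proof of Lemma~\ref{lem:CI_validity_eps_based}, specialized to the single sample size $n$ (so every $n_i$ becomes $n$ and the concentration bounds of Lemma~\ref{lem:concentration_event_n_based} replace those of Lemma~\ref{lem:concentration_event_eps_based}). Throughout I would condition on the event of Lemma~\ref{lem:concentration_event_n_based} and fix an arbitrary $\gamma \in \hzns$, since the claimed inequality is pointwise in $\gamma$.

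The first step is to convert the gain of the \emph{fixed} policy $\pit_\gamma$ into its discounted value. Applying the fixed-policy reduction~\eqref{eq:sharper_DMDP_red_fixed} of Lemma~\ref{lem:AMDP_DMDP_relationships} with $\pi = \pit_\gamma$ gives $\rho^{\pit_\gamma} \geq (1-\gamma)\bigl(\min_s V_\gamma^{\pit_\gamma}(s)\bigr)\one$. The key point is that this holds for \emph{any} policy, so no knowledge of $\spannorm{h^\star}$ is needed, and the usual approximation-error term is replaced by the (after-concentration) observable quantity $(1-\gamma)\min_s V_\gamma^{\pit_\gamma}(s)$.

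Next I would pass from the true to the empirical discounted value. Using the policy-evaluation bound~\eqref{eq:emp_opt_pol_eval_bd_1}, I can lower-bound $\min_s V_\gamma^{\pit_\gamma}(s)$ by $\min_s \Vhat_\gamma^{\pit_\gamma}(s) - \frac{\alpha(\delta,n)}{1-\gamma}\sqrt{(\spannorm{\Vhat_\gamma^{\pit_\gamma}}+1)/n}$; note that the leading factor $(1-\gamma)$ cancels the $\frac{1}{1-\gamma}$ in the error bound, producing exactly the variance-type penalty appearing in $\Obj(\gamma)$. Then the near-optimality guarantee~\eqref{eq:pit_near_opt_1}, namely $\Vhat_\gamma^{\pit_\gamma} \geq \Vhat_\gamma^\star - \tfrac{1}{n}\one$, lets me replace $\min_s \Vhat_\gamma^{\pit_\gamma}(s)$ by $\min_s \Vhat_\gamma^\star(s) - \tfrac{1}{n}$.

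The final step is to render every remaining quantity observable, i.e.\ expressed via $\Vt_\gamma$. I would bound the span $\spannorm{\Vhat_\gamma^{\pit_\gamma}} \leq \spannorm{\Vt_\gamma} + \tfrac{3}{n}$ exactly as in~\eqref{eq:eps_based_CI_sp_bd}, using $\spannorm{\cdot}\le 2\infnorm{\cdot}$ together with~\eqref{eq:pit_near_opt_1} and the solver accuracy~\eqref{eq:vt_error_1}; and I would replace $\min_s \Vhat_\gamma^\star(s) \geq \min_s \Vt_\gamma(s) - \tfrac{1}{n}$ using~\eqref{eq:vt_error_1} once more. Collecting the two $\tfrac{1-\gamma}{n}$ contributions into $2\tfrac{1-\gamma}{n}$ and substituting the span bound into the penalty yields precisely $\Obj(\gamma)\one$. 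There is no genuine obstacle here: all the real difficulty (the sharper reduction of Lemma~\ref{lem:AMDP_DMDP_relationships} and the span-dependent concentration of Lemma~\ref{lem:concentration_event_n_based}) has already been isolated, so the only care required is bookkeeping---verifying that the $(1-\gamma)$ factor cancels correctly and that the constants $2$ and $\tfrac{3}{n}$ emerge as claimed.
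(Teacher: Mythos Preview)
Your proposal is correct and follows essentially the same approach as the paper's proof: the paper also starts from \eqref{eq:sharper_DMDP_red_fixed}, applies \eqref{eq:emp_opt_pol_eval_bd_1}, \eqref{eq:pit_near_opt_1}, and \eqref{eq:vt_error_1} in turn, and finishes with the span bound $\spannorm{\Vhat_{\gamma}^{\pit_{\gamma}}} \leq \spannorm{\Vt_{\gamma}} + \tfrac{3}{n}$ derived exactly as in \eqref{eq:eps_based_CI_sp_bd}. The only (inessential) difference is the order in which the triangle-inequality steps are narrated.
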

\begin{proof}
    The proof is very similar to the first part of the proof of Lemma \ref{lem:CI_validity_eps_based}.
    Fix $\gamma \in \hzns$. Using inequality~\eqref{eq:sharper_DMDP_red_fixed} from Lemma \ref{lem:AMDP_DMDP_relationships}, then using the triangle inequality,
    then~\eqref{eq:pit_near_opt_1}, then~\eqref{eq:emp_opt_pol_eval_bd_1}, then the triangle inequality again, then~\eqref{eq:vt_error_1}, we have
    \begin{align}
        \rho^{\pit_\gamma} &\geq (1-\gamma)\min_{s} V_\gamma^{\pit_\gamma}(s) \one \nonumber\\
        & \geq (1-\gamma)\min_{s}\Vhat_\gamma^{\pit_\gamma}(s) \one - (1-\gamma)\infnorm{\Vhat_\gamma^{\pit_\gamma} - V_\gamma^{\pit_\gamma}} \one \nonumber \\
        & \geq (1-\gamma)\min_{s}\Vhat_\gamma^{\star}(s) \one - \frac{1-\gamma}{n}\one - (1-\gamma)\infnorm{\Vhat_\gamma^{\pit_\gamma} - V_\gamma^{\pit_\gamma}} \one \nonumber \\
        & \geq (1-\gamma)\min_{s}\Vhat_\gamma^{\star}(s) \one - \frac{1-\gamma}{n}\one - \alpha(\delta, n) \sqrt{\frac{\spannorm{\Vhat_\gamma^{\pit_\gamma}} +1 }{n}} \one \nonumber \\
        & \geq (1-\gamma)\min_{s}\Vt_\gamma(s) \one - (1-\gamma)\infnorm{\Vhat_\gamma^{\star}- \Vt_\gamma}\one - \frac{1-\gamma}{n}\one - \alpha(\delta, n) \sqrt{\frac{\spannorm{\Vhat_\gamma^{\pit_\gamma}} +1 }{n}} \one \nonumber \\
        & \geq (1-\gamma)\min_{s}\Vt_{\gamma}(s) \one - 2\frac{1-\gamma}{n}\one - \alpha(\delta, n) \sqrt{\frac{\spannorm{\Vhat_\gamma^{\pit_\gamma}} +1 }{n}} \one .\label{eq:gain_LB_1}
    \end{align}
    Now, nearly identically to the bound~\eqref{eq:eps_based_CI_sp_bd}, we can use the requirements~\eqref{eq:pit_near_opt_1} and~\eqref{eq:vt_error_1} to bound
    \begin{align}
        \spannorm{\Vhat_{\gamma}^{\pit_{\gamma}}} &\leq \spannorm{\Vhat_{\gamma}^{\star}} + \spannorm{\Vhat_{\gamma}^{\star} - \Vhat_{\gamma}^{\pit_{\gamma}}} \nonumber\\
         &\leq \spannorm{\Vt_{\gamma}} +\spannorm{\Vt_{\gamma} - \Vhat_{\gamma}^{\star}} + \spannorm{\Vhat_{\gamma}^{\star} - \Vhat_{\gamma}^{\pit_{\gamma}}} \nonumber\\
         & \leq \spannorm{\Vt_{\gamma}} +2\infnorm{\Vt_{\gamma} - \Vhat_{\gamma}^{\star}} + \frac{1}{n} \nonumber \\
        & \leq \spannorm{\Vt_{\gamma}} +\frac{3}{n} . \label{eq:n_based_sp_bd}
    \end{align}
    Finally combining~\eqref{eq:n_based_sp_bd} with~\eqref{eq:gain_LB_1}, we obtain that
    \begin{align*}
        \rho^{\pit_\gamma} & \geq (1-\gamma)\min_{s}\Vt_{\gamma}(s) \one - 2\frac{1-\gamma}{n}\one - \alpha(\delta, n) \sqrt{\frac{\spannorm{\Vt_{\gamma}} +\frac{3}{n} +1 }{n}} \one = \Obj(\gamma)\one
    \end{align*}
    as desired.
\end{proof}

\begin{lem}
\label{lem:n_based_recursive_err_bds}
    Under the event in Lemma \ref{lem:concentration_event_n_based}, for all $\gamma \in \hzns$, we have
    \begin{align}
        \infnorm{\Vhat_{\gamma}^{\star} - V_{\gamma}^{\star}} & \leq \frac{2 \alpha(\delta, n)^2}{(1-\gamma)^2 n} + \frac{4 \alpha(\delta, n)}{1-\gamma} \sqrt{\frac{\spannorm{V_{\gamma}^{\star}} +1+ \frac{1}{n}}{n}} + \frac{4}{n}. \label{eq:n_based_recursive_err_bd_1}
    \end{align}
\end{lem}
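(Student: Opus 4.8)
The plan is to recognize this statement as a direct instantiation of the abstract recursive error bound in Lemma~\ref{lem:recursive_err_bds_abstract}, exactly as was done for the fixed-$\varepsilon$ analogue in Lemma~\ref{lem:eps_based_recursive_err_bds}. Concretely, I would apply Lemma~\ref{lem:recursive_err_bds_abstract} with the following identifications: take $\Po = \Phat$ to be the empirical kernel, set $m = n$ and $\beta = \alpha(\delta, n)$, and let the comparator policy be $\pi = \pit_\gamma$, the deterministic near-optimal policy returned by $\SolveDMDP$ on line~\ref{alg:solver_step}. With these choices, $\Vo_\gamma^{\pi'}$ becomes $\Vhat_\gamma^{\pi'}$, $\Vo_\gamma^\star$ becomes $\Vhat_\gamma^\star$, and $\pistar_\gamma$ is the optimal policy for the true DMDP $(P, r, \gamma)$, so the conclusion~\eqref{eq:rec_bd_1} of Lemma~\ref{lem:recursive_err_bds_abstract} reads precisely as the desired bound~\eqref{eq:n_based_recursive_err_bd_1}.

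The only remaining task is to check that the three hypotheses~\eqref{eq:true_opt_pol_eval_bd_abs}--\eqref{eq:emp_opt_pol_eval_bd_abs} of Lemma~\ref{lem:recursive_err_bds_abstract} are met on the concentration event of Lemma~\ref{lem:concentration_event_n_based}. This is immediate: condition~\eqref{eq:true_opt_pol_eval_bd_abs} is exactly~\eqref{eq:true_opt_pol_eval_bd_1}; condition~\eqref{eq:pi_near_opt_abs} is the near-optimality guarantee~\eqref{eq:pit_near_opt_1} of the solver applied to $\pit_\gamma$; and condition~\eqref{eq:emp_opt_pol_eval_bd_abs} is the policy-evaluation error bound~\eqref{eq:emp_opt_pol_eval_bd_1}. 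One should also note the mild regularity requirements of Lemma~\ref{lem:recursive_err_bds_abstract}, namely $\gamma < 1$ (which holds for every $\gamma \in \hzns$) and $m, \beta > 1$ (which hold since $n \geq 2$ and $\alpha(\delta, n) \geq 1$ from the explicit form of $\alpha$).

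Since all of the substantive analytic work---solving the quadratic recursion that arises from coupling the optimal-policy evaluation error with the span parameter $\spannorm{V_\gamma^\star}$, via the triangle-inequality chain culminating in~\eqref{eq:opt_val_diff_1} and the subsequent quadratic-formula step---is already carried out inside the proof of Lemma~\ref{lem:recursive_err_bds_abstract}, there is no genuine obstacle here. The proof is a one-line invocation: the event of Lemma~\ref{lem:concentration_event_n_based} verifies the hypotheses of Lemma~\ref{lem:recursive_err_bds_abstract} with $m = n$ and $\beta = \alpha(\delta, n)$ for every $\gamma \in \hzns$, and~\eqref{eq:rec_bd_1} then yields~\eqref{eq:n_based_recursive_err_bd_1}. (Unlike the fixed-$\varepsilon$ version, we only need the $\spannorm{V_\gamma^\star}$-form~\eqref{eq:rec_bd_1} and not the empirical-span form~\eqref{eq:rec_bd_2}, because for the fixed-$n$ lower-bound argument of Theorem~\ref{thm:n_based_alg} the true-span version suffices to control $\spannorm{\Vhat_\gamma^\star}$ in terms of $\spannorm{h^\star}$.)
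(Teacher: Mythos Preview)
Your proposal is correct and matches the paper's proof essentially verbatim: the paper also simply invokes Lemma~\ref{lem:recursive_err_bds_abstract} with $m = n$ and $\beta = \alpha(\delta,n)$, noting that its hypotheses are exactly the guarantees~\eqref{eq:true_opt_pol_eval_bd_1},~\eqref{eq:pit_near_opt_1}, and~\eqref{eq:emp_opt_pol_eval_bd_1} supplied by the event of Lemma~\ref{lem:concentration_event_n_based}. Your additional remarks on the regularity conditions and on why only~\eqref{eq:rec_bd_1} is needed are accurate and a slight elaboration beyond what the paper states explicitly.
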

\begin{proof}
    Similarly to Lemma \ref{lem:eps_based_recursive_err_bds}, this follows immediately from Lemma \ref{lem:recursive_err_bds_abstract} as its conditions are satisfied under the event in Lemma \ref{lem:concentration_event_n_based} (by setting $m = n$ and $\beta = \alpha(\delta, n)$ for all $\gamma \in \hzns$).
\end{proof}

\begin{lem}
\label{lem:n_based_opt_obj_lb}
Suppose $n \geq 16(\spannorm{h^\star}+1)$.
Under the event described in Lemma \ref{lem:concentration_event_n_based}, we have
    \begin{align*}
        \Obj(\gstar)\one \geq \rho^\star - 30\alpha(\delta, n)^2 \sqrt{\frac{\spannorm{h^\star}+1   }{n}} \one   . 
    \end{align*}
\end{lem}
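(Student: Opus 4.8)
The plan is to unwind the observable objective $\Obj(\gstar)$ back to $\rho^\star$ through a chain of three replacements, at each step paying an error that the choice of effective horizon $\frac{1}{1-\gstar} \approx \sqrt{n(\spannorm{h^\star}+1)}$ renders of order $\sqrt{(\spannorm{h^\star}+1)/n}$. Recall that $\Obj(\gstar) = (1-\gstar)\min_s \Vt_{\gstar}(s) - 2\frac{1-\gstar}{n} - \alpha(\delta,n)\sqrt{(\spannorm{\Vt_{\gstar}} + \tfrac{3}{n} + 1)/n}$, a function of the observable approximate value $\Vt_{\gstar}$. First I would pass from $\Vt_{\gstar}$ to $\Vhat_{\gstar}^\star$ using $\infnorm{\Vt_{\gstar} - \Vhat_{\gstar}^\star} \leq \frac1n$ from \eqref{eq:vt_error_1}, which gives both $\min_s \Vt_{\gstar}(s) \geq \min_s \Vhat_{\gstar}^\star(s) - \frac1n$ and $\spannorm{\Vt_{\gstar}} \leq \spannorm{\Vhat_{\gstar}^\star} + \frac2n$ (via $\spannorm{\cdot}\leq 2\infnorm{\cdot}$). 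Next I pass from $\Vhat_{\gstar}^\star$ to $V_{\gstar}^\star$ using the recursive ``localization'' bound \eqref{eq:n_based_recursive_err_bd_1} of Lemma \ref{lem:n_based_recursive_err_bds}, which controls $\Delta := \infnorm{\Vhat_{\gstar}^\star - V_{\gstar}^\star}$ in terms of $\spannorm{V_{\gstar}^\star}$ and the effective horizon. Finally I invoke \eqref{eq:sharper_DMDP_red_opt} of Lemma \ref{lem:AMDP_DMDP_relationships}, in the form $(1-\gstar)\min_s V_{\gstar}^\star(s)\one \geq \rho^\star - (1-\gstar)\spannorm{V_{\gstar}^\star}\one$, to arrive at $\rho^\star$.

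Chaining these yields a bound of the shape
\[
    \Obj(\gstar) \geq \rho^\star - (1-\gstar)\spannorm{V_{\gstar}^\star} - (1-\gstar)\Delta - 3\frac{1-\gstar}{n} - \alpha(\delta,n)\sqrt{\frac{\spannorm{V_{\gstar}^\star} + 2\Delta + \frac5n + 1}{n}}
\]
(understood entrywise against the constant vector $\rho^\star$), and it remains to show each of the four error terms is $\lesssim \alpha(\delta,n)^2\sqrt{(\spannorm{h^\star}+1)/n}$, where $\lesssim$ hides only absolute constants. For this I would use two inputs: the deterministic estimate $\spannorm{V_\gamma^\star} \leq 2\spannorm{h^\star}$ of \citep[Lemma 2]{wei_model-free_2020}, and Lemma \ref{lem:gstar_bds}, whose hypothesis $n \geq 16(\spannorm{h^\star}+1)$ is exactly the assumption here and which gives $\sqrt{n(\spannorm{h^\star}+1)} \leq \frac{1}{1-\gstar}\leq 2\sqrt{n(\spannorm{h^\star}+1)}$. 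Substituting these into the three pieces of $\Delta$ shows $\Delta \lesssim \alpha(\delta,n)^2(\spannorm{h^\star}+1)$ and hence $(1-\gstar)\Delta \lesssim \alpha(\delta,n)^2\sqrt{(\spannorm{h^\star}+1)/n}$; likewise $(1-\gstar)\spannorm{V_{\gstar}^\star}\leq 2\spannorm{h^\star}/\sqrt{n(\spannorm{h^\star}+1)}\leq 2\sqrt{(\spannorm{h^\star}+1)/n}$, while the $\frac{1-\gstar}{n}$ term is trivially smaller. Plugging the $\Delta$-bound under the square root and using $\alpha(\delta,n)\geq 1$ to absorb constants, the fourth term is also $\lesssim \alpha(\delta,n)^2\sqrt{(\spannorm{h^\star}+1)/n}$, and summing the four contributions should yield the claimed constant $30$.

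The main obstacle I anticipate is bookkeeping rather than conceptual: the localization error $\Delta$ enters in two different guises, multiplied by $(1-\gstar)$ (where it must shrink like $\sqrt{(\spannorm{h^\star}+1)/n}$) and inside the square root (where only the weaker $O(\alpha(\delta,n)^2(\spannorm{h^\star}+1))$ bound is available, but the outer $1/\sqrt n$ and extra factor of $\alpha(\delta,n)$ recover the right rate). The key balance is that the dominant first piece $\frac{2\alpha(\delta,n)^2}{(1-\gstar)^2 n}$ of $\Delta$, which would blow up as the horizon grows, is exactly tamed by $\frac{1}{1-\gstar}\approx \sqrt{n(\spannorm{h^\star}+1)}$: it becomes $O(\alpha(\delta,n)^2(\spannorm{h^\star}+1))$, landing at the target rate after the $(1-\gstar)$ factor or the $\sqrt{\cdot/n}$. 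Care is needed to apply $\spannorm{\cdot}\leq 2\infnorm{\cdot}$ in the correct direction at each replacement and to absorb the additive constants (the $\frac3n,\frac5n$ terms) using $n\geq 2$ and $\alpha(\delta,n)\geq 1$.
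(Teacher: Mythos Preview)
Your proposal is correct and follows essentially the same approach as the paper: both pass from $\Vt_{\gstar}$ to $\Vhat_{\gstar}^\star$ via \eqref{eq:vt_error_1}, from $\Vhat_{\gstar}^\star$ to $V_{\gstar}^\star$ via the localization bound \eqref{eq:n_based_recursive_err_bd_1}, and then to $\rho^\star$, using $\spannorm{V_\gamma^\star}\leq 2\spannorm{h^\star}$ and the two-sided horizon bound from Lemma~\ref{lem:gstar_bds} to balance all error terms at the rate $\alpha(\delta,n)^2\sqrt{(\spannorm{h^\star}+1)/n}$. The only cosmetic differences are that the paper first isolates the bound $\spannorm{\Vt_{\gstar}}\leq 42\alpha(\delta,n)^2(\spannorm{h^\star}+1)$ as a separate lemma-style step rather than carrying $\spannorm{V_{\gstar}^\star}+2\Delta$ inside the square root, and for the final reduction to $\rho^\star$ it cites \citep[Lemma~2]{wei_model-free_2020} directly (giving $(1-\gstar)\min_s V_{\gstar}^\star(s)\geq \rho^\star-(1-\gstar)\spannorm{h^\star}$) rather than going through Lemma~\ref{lem:AMDP_DMDP_relationships} and then bounding $\spannorm{V_{\gstar}^\star}$.
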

\begin{proof}
    First we bound $\infnorm{\Vhat_{\gamma}^{\star} - V_{\gamma}^{\star}}$ in terms of $\spannorm{h^\star}$. Using the bound~\eqref{eq:n_based_recursive_err_bd_1} from Lemma \ref{lem:n_based_recursive_err_bds} and substituting $\gamma = \gstar$, as well as using Lemma \ref{lem:gstar_bds} to bound $\frac{1}{1-\gstar}$, we have
    \begin{align}
        \infnorm{\Vhat_{\gstar}^{\star} - V_{\gstar}^{\star}} 
        & \leq \frac{2 \alpha(\delta, n)^2}{(1-\gstar)^2 n} + \frac{4 \alpha(\delta, n)}{1-\gstar} \sqrt{\frac{\spannorm{V_{\gstar}^{\star}} +1+ \frac{1}{n}}{n}} + \frac{4}{n} \nonumber \\
        & \leq \frac{2 \alpha(\delta, n)^2}{ n}4 n \left(\spannorm{h^\star}+1 \right) + 8 \alpha(\delta, n) \sqrt{n \left(\spannorm{h^\star}+1 \right)} \sqrt{\frac{\spannorm{V_{\gstar}^{\star}} +1+ \frac{1}{n}}{n}} + \frac{4}{n} \nonumber \\
        & \leq 8 \alpha(\delta, n)^2 \left(\spannorm{h^\star}+1 \right) + 8 \alpha(\delta, n) \sqrt{n \left(\spannorm{h^\star}+1 \right)} \sqrt{\frac{2 \spannorm{h^\star} +2}{n}} + \frac{1}{4} \nonumber \\
        & \leq 8 \alpha(\delta, n)^2 \left(\spannorm{h^\star}+1 \right) + (8 \sqrt{2}+1/4) \alpha(\delta, n) \left(\spannorm{h^\star}+1 \right) \nonumber \\
        & \leq 20 \alpha(\delta, n)^2 \left(\spannorm{h^\star}+1 \right) \label{eq:n_based_opt_obj_lb_1}
    \end{align}
    also using the facts that $n \geq 16$, that $\alpha(\delta, n) \geq 1$, and that $\spannorm{V_{\gstar}^{\star}} \leq 2 \spannorm{h^\star}$ \cite[Lemma 2]{wei_model-free_2020}.

    We can use the triangle inequality, that $\spannorm{\cdot} \leq 2 \infnorm{\cdot}$, triangle inequality again, that $\spannorm{V_{\gstar}^{\star}} \leq 2 \spannorm{h^\star}$,~\eqref{eq:vt_error_1} and~\eqref{eq:n_based_opt_obj_lb_1}, and then that $\alpha(\delta, n) \geq 1$ and $n\geq 16$ to bound
    \begin{align}
        \spannorm{\Vt_{\gstar}} & \leq \spannorm{V_{\gstar}^\star} + \spannorm{\Vt_{\gstar} - V_{\gstar}^\star} \nonumber\\
        & \leq \spannorm{V_{\gstar}^\star} + 2\infnorm{\Vt_{\gstar} - V_{\gstar}^\star} \nonumber\\
        & \leq 2\spannorm{h^\star} + 2\infnorm{\Vt_{\gstar} - \Vhat_{\gstar}^\star} + 2\infnorm{\Vhat_{\gstar}^\star - V_{\gstar}^\star}\nonumber\\
        & \leq 2\spannorm{h^\star} + \frac{2}{n} + 40\alpha(\delta, n)^2 \left(\spannorm{h^\star}+1 \right) \nonumber\\
        & \leq 42\alpha(\delta, n)^2 \left(\spannorm{h^\star}+1 \right). \label{eq:n_based_opt_obj_lb_2}
    \end{align}

    Now using the inequalities~\eqref{eq:n_based_opt_obj_lb_1} and~\eqref{eq:n_based_opt_obj_lb_2} to lower-bound $\Obj(\gstar)$, we obtain
    \begin{align*}
        \Obj(\gstar)\one 
        &= (1-\gstar)\min_{s}\Vt_{\gstar}(s) \one - 2\frac{1-\gstar}{n}\one - \alpha(\delta, n) \sqrt{\frac{\spannorm{\Vt_{\gstar}} +\frac{3}{n} +1 }{n}} \one \\
        &\geq (1-\gstar)\min_{s}\Vhat_{\gstar}^{\star}(s) \one - 3\frac{1-\gstar}{n}\one - \alpha(\delta, n) \sqrt{\frac{\spannorm{\Vt_{\gstar}} +\frac{3}{n} +1 }{n}} \one \\
        & \geq (1-\gstar)\min_{s}V_{\gstar}^{\star}(s) \one - (1-\gstar)\infnorm{\Vhat_{\gstar}^{\star} - V_{\gstar}^{\star}}\one - 3\frac{1-\gstar}{n}\one \\
        &\quad - \alpha(\delta, n) \sqrt{\frac{\spannorm{\Vt_{\gstar}} +\frac{3}{n} +1 }{n}} \one \\
        & \geq (1-\gstar)\min_{s}V_{\gstar}^{\star}(s) \one - (1-\gstar)21 \alpha(\delta, n)^2 \left(\spannorm{h^\star}+1 \right)\one - 3\frac{1-\gstar}{n}\one \nonumber\\
        &\qquad - \alpha(\delta, n)^2 \sqrt{\frac{42\left(\spannorm{h^\star}+1 \right) +\frac{3}{n} +1 }{n}} \one \\
        & \geq \rho^\star -2(1-\gstar) \spannorm{h^\star} \one - (1-\gstar)21 \alpha(\delta, n)^2 \left(\spannorm{h^\star}+1 \right)\one - 3\frac{1-\gstar}{n}\one \nonumber\\
        &\qquad - \alpha(\delta, n)^2 \sqrt{\frac{44\left(\spannorm{h^\star}+1 \right)  }{n}} \one \\
        & \geq \rho^\star  - (1-\gstar)22 \alpha(\delta, n)^2 \left(\spannorm{h^\star}+1 \right)\one - \alpha(\delta, n)^2 \sqrt{\frac{44\left(\spannorm{h^\star}+1 \right)  }{n}} \one \\
        & \geq \rho^\star  - \frac{1}{\sqrt{n  \left(\spannorm{h^\star}+1 \right) }}22 \alpha(\delta, n)^2 \left(\spannorm{h^\star}+1 \right)\one - \alpha(\delta, n)^2 \sqrt{\frac{44\left(\spannorm{h^\star}+1 \right)  }{n}} \one \\
        & \geq \rho^\star - 30\alpha(\delta, n)^2 \sqrt{\frac{\spannorm{h^\star}+1   }{n}} \one
    \end{align*}
    where in the first inequality we used that $\infnorm{\Vhat^\star_{\gstar} -\Vt_{\gstar}} \leq \frac{1}{n}$ from~\eqref{eq:vt_error_1} and triangle inequality, in the second inequality we used triangle inequality, in the third we used~\eqref{eq:n_based_opt_obj_lb_1} and~\eqref{eq:n_based_opt_obj_lb_2}, in the fourth we used that $\min_{s}V_{\gstar}^{\star}(s) \geq \frac{1}{1-\gstar}\rho^\star -  \spannorm{h^\star}$ from \cite[Lemma 2]{wei_model-free_2020} and that $n \geq 16$, in the fifth we again used that $n \geq 16$, and in the sixth we used~\eqref{eq:gstar_bds} to upper bound $(1-\gstar)$.
\end{proof}

Now we combine these intermediate results to prove Theorem \ref{thm:n_based_alg}.
\begin{proof}[Proof of Theorem \ref{thm:n_based_alg}]
    Under the event in Lemma \ref{lem:concentration_event_n_based} which holds with probability at least $1 - \delta$, we have by Lemma \ref{lem:gain_lower_bound_validity} that $\rho^{\pit_{\gammahat}} \geq \Obj(\gammahat)\one$. Since we trivially have $\rho^{\pit_{\gammahat}} \geq 0\one$, this implies $\rho^{\pit_{\gammahat}} \geq \max \{\Obj(\gammahat), 0\}\one = \rhohat$. Now to lower bound $\rhohat$ we consider two cases. First, if $n \geq 16(\spannorm{h^\star}+1)$, then we have 
    \begin{align*}
    \rhohat &= \max \{\Obj(\gammahat), 0\}\one  \geq \Obj(\gammahat) \geq \Obj(\gstar)\one  \geq \rho^\star - 30 \alpha(\delta, n)^2 \sqrt{\frac{\spannorm{h^\star}+1}{n}}\one
    \end{align*}
    where the second inequality step is by the definition of $\gammahat$, and third inequality is from Lemma \ref{lem:n_based_opt_obj_lb}. Next, if $n < 16(\spannorm{h^\star}+1)$, then
    \begin{align*}
    \rhohat &= \max \{\Obj(\gammahat), 0\}\one  \geq 0 \one  \geq \rho^\star - 30 \alpha(\delta, n)^2 \sqrt{\frac{\spannorm{h^\star}+1}{n}}\one
    \end{align*}
    where the final inequality is because $\rho^\star \leq \one$ so the condition on $n$ ensures the RHS is $< \zero$ (note $\alpha(\delta, n) \geq 1$). Finally, we let $C_3 = 30$.
\end{proof}

\subsection{Proof of Lemma \ref{lem:span_constrained_planning_subroutine}}
\label{sec:span_constrained_planning_subroutine_proof}
\begin{proof}[Proof of Lemma \ref{lem:span_constrained_planning_subroutine}]
    First we check that $T$ ensures that $\gamma^T \leq \frac{(1-\gamma)^2 \varepsilon}{3}$. We have
    \begin{align*}
        T \geq \frac{\log (\frac{3 }{(1-\gamma)^2 \varepsilon})}{1-\gamma} \geq \frac{\log (\frac{3 }{(1-\gamma)^2 \varepsilon})}{\log(1/\gamma)}
    \end{align*}
    using the fact that $\frac{1}{1-\gamma} \geq \frac{1}{\log(1/\gamma)}$ for $\gamma \in (0,1)$. Thus
    \begin{align}
        \gamma^T \leq \gamma^{\frac{\log (\frac{3 }{(1-\gamma)^2 \varepsilon})}{\log(1/\gamma)}} = \left(e^{-1} \right)^{\log (\frac{3 }{(1-\gamma)^2 \varepsilon})} = \frac{(1-\gamma)^2 \varepsilon}{3}. \label{eq:gamma_T_bd}
    \end{align}

    By \cite[Lemma 16]{fruit_efficient_2018} $\Clip_{\spb}$ is non-expansive with respect to $\infnorm{\cdot}$, so since $\T_\gamma$ is $\gamma$-contractive, we have that $\L = \Clip_{\spb} \circ \T_\gamma$ is also $\gamma$-contractive. It is a standard fact that this implies $\L$ has a unique fixed point, which we name $V^\star_{\gamma, \spb}$.

    Thus
    \begin{align*}
        \infnorm{V^T - V^\star_{\gamma, \spb}} &= \infnorm{\L(V^{T-1}) - \L(V^\star_{\gamma, \spb}) } \\
        & \leq \gamma \infnorm{V^{T-1}-V^\star_{\gamma, \spb}} \leq \cdots \leq \gamma^T \infnorm{V^0 - V^\star_{\gamma, \spb}} \leq \gamma^T \frac{1}{1-\gamma},
    \end{align*}
    which is $\leq \frac{(1-\gamma)\varepsilon}{3} \leq \varepsilon$ using~\eqref{eq:gamma_T_bd}.
    Similarly, we have
    \begin{align*}
        \infnorm{V^T - \L(V^T)} &= \infnorm{\L(V^{T-1}) - \L(V^T)} \leq \gamma \infnorm{V^{T-1} - V^{T}} = \gamma \infnorm{V^{T-1} - \L(V^{T-1})} \\
        & \leq \cdots \leq \gamma^T \infnorm{V^0 - \L(V^0)} \leq \gamma^T \frac{1}{1-\gamma} \leq \frac{(1-\gamma) \varepsilon}{3}
    \end{align*}
    again using~\eqref{eq:gamma_T_bd}.
    By definition of $\rt$, we immediately have that $\rt \leq r$.
    Also by construction of $\rt$ we have that 
    \begin{align*}
        \rt_{\pihat} + \gamma P_{\pihat} V^T = \min \left\{r_{\pihat} + \gamma P_{\pihat} V^T, \spb\one + \min_{s'} V^T(s')\one \right\} ,
    \end{align*}
    where the $\min$ is elementwise.
    Thus
    \begin{align}
        &\infnorm{\L(V^T) - \rt_{\pihat} - \gamma P_{\pihat} V^T} \nonumber\\
        &= \infnorm{\min \left\{r_{\pihat} + \gamma P_{\pihat} V^T, \spb \one+ \min_{s'} (r_{\pihat} + \gamma P_{\pihat} V^T )(s')\one \right\} - \min \left\{r_{\pihat} + \gamma P_{\pihat} V^T, \spb \one + \min_{s'} V^T(s')\one \right\}} \nonumber\\
        & \leq \infnorm{\spb \one+ \min_{s'} (r_{\pihat} + \gamma P_{\pihat} V^T )(s')\one - \left( \spb \one + \min_{s'} V^T(s')\one \right)} \nonumber\\
        &= \left|\min_{s'} (r_{\pihat} + \gamma P_{\pihat} V^T )(s') -  \min_{s'} V^T(s')\right| \nonumber\\
        &= \left|\min_{s'} \Clip_{\spb}(r_{\pihat} +\gamma P_{\pihat} V^T )(s') -  \min_{s'} V^T(s')\right| \nonumber\\
        &= \left|\min_{s'} \Clip_{\spb}(\T_\gamma (V^T) )(s') -  \min_{s'} V^T(s')\right| \nonumber\\
        &= \left|\min_{s'} \L (V^T) (s') -  \min_{s'} V^T(s')\right| \nonumber\\
        & \leq \infnorm{\L (V^T) - V^T} , \label{eq:rt_defn_property_bd}
    \end{align}
    where we use the elementary fact that $|\min\{a,b\} - \min \{c,d\}| \leq \max \{ |a-c|, |b-d| \}$ for $ a,b,c,d \in \R$, as well as the fact that $\Clip_{\spb}$ does not change the minimum entry of its input.

    Now we can calculate that
    \begin{align*}
        \infnorm{V^{\pihat}_{\gamma, \rt} - V^T} &\leq \infnorm{V^{\pihat}_{\gamma, \rt} - \L(V^T)} + \infnorm{\L(V^T) - V^T} \\
        &\leq  \infnorm{V^{\pihat}_{\gamma, \rt} - \rt_{\pihat} - \gamma P_{\pihat} V^T} + \infnorm{\L(V^T) - \rt_{\pihat} - \gamma P_{\pihat} V^T} + \infnorm{\L(V^T) - V^T} \\
        &= \infnorm{\rt_{\pihat} - \gamma P_{\pihat} V^{\pihat}_{\gamma, \rt} - \rt_{\pihat} - \gamma P_{\pihat} V^T} + \infnorm{\L(V^T) - \rt_{\pihat} - \gamma P_{\pihat} V^T} + \infnorm{\L(V^T) - V^T} \\
        & \overset{\text{(I)}}{\leq} \infnorm{\rt_{\pihat} - \gamma P_{\pihat} V^{\pihat}_{\gamma, \rt} - \rt_{\pihat} - \gamma P_{\pihat} V^T}  + 2\infnorm{\L(V^T) - V^T}\\
        & \overset{\text{(II)}}{\leq}\gamma \infnorm{V^{\pihat}_{\gamma, \rt} - V^T}  + 2\infnorm{\L(V^T) - V^T},
    \end{align*}
    where in (I) we used~\eqref{eq:rt_defn_property_bd} and in (II) we used that $\infinfnorm{P_{\pihat}} \leq 1$. Thus by rearranging we have that
    \begin{align*}
        \infnorm{V^{\pihat}_{\gamma, \rt} - V^T} \leq \frac{2\infnorm{\L(V^T) - V^T}}{1-\gamma} \leq \frac{2}{3}\varepsilon
    \end{align*}
    using~\eqref{eq:gamma_T_bd}.
    Then we can bound
    \begin{align*}
        \infnorm{V^{\pihat}_{\gamma, \rt} - V^\star_{\gamma, \spb}} \leq \infnorm{V^{\pihat}_{\gamma, \rt} - V^T} + \infnorm{V^T - V^\star_{\gamma, \spb}} \leq \frac{2}{3} \varepsilon + (1-\gamma)\frac{\varepsilon}{3} \leq \varepsilon
    \end{align*}
    as desired. Also, $\spannorm{V^\star_{\gamma, \spb}} \leq \spb$ since it is equal to $\Clip_{\spb}(\T_\gamma(V^\star_{\gamma, \spb}))$ and the output of $\Clip_{\spb}$ clearly has span bounded by $\spb$. Therefore
    \begin{align*}
        \spannorm{V^{\pihat}_{\gamma, \rt}} \leq \spannorm{V^\star_{\gamma, \spb}} + \spannorm{V^{\pihat}_{\gamma, \rt} - V^\star_{\gamma, \spb}} \leq \spannorm{V^\star_{\gamma, \spb}} + 2\infnorm{V^{\pihat}_{\gamma, \rt} - V^\star_{\gamma, \spb}} \leq \spb + 2\varepsilon.
    \end{align*}

    Finally, letting $\T_{\gamma, r'}^{\pi'}$ be the Bellman consistency/evaluation operator for the policy $\pi'$ in the DMDP $(P, r', \gamma)$, we have $\T_{\gamma, r'}^{\pi'}(V_{\gamma, r'}^{\pi'}) = V_{\gamma, r'}^{\pi'}$. Also, since $\spannorm{V_{\gamma, r'}^{\pi'}} \leq \spb$, we have $\Clip_{\spb} (\T_{\gamma, r'}^{\pi'}(V_{\gamma, r'}^{\pi'}) )= V_{\gamma, r'}^{\pi'}$. Letting $\L^{\pi'} = \Clip_{\spb} \circ \T_{\gamma, r'}^{\pi'}$, we have that $\L^{\pi'}$ is a $\gamma$ contraction with respect to $\infnorm{\cdot}$ (since as discussed above, $\Clip_{\spb}$ is $\infnorm{\cdot}$-nonexpansive, and because $\T_{\gamma, r'}^{\pi'}$ is well-known to be $\gamma$-contractive with respect to $\infnorm{\cdot}$). Thus $\L^{\pi'}$ has a unique fixed point, which must be $V_{\gamma, r'}^{\pi'}$ (since we have already verified $V_{\gamma, r'}^{\pi'}$ is a fixed point), and furthermore Picard iteration will converge to this fixed point. The operator $\Clip_{\spb}$ is also monotonic (in the sense that for any $x,y \in \R^{\S}$, $x \leq y \implies \Clip_{\spb}(x) \leq \Clip_{\spb}(y)$) as shown in \cite[Lemma 16]{fruit_efficient_2018}. It is well-known that $\T_\gamma$ and $\T_{\gamma, r'}^{\pi'}$ are also monotonic, which immediately implies that the compositions $\L$ and $\L^{\pi'}$ are monotonic. It is immediate that for any $x \in \R^{\S}$ we have $\T_{\gamma, r'}^{\pi'}(x) \leq \T_\gamma(x)$, since for any $x \in \R^{\S}$ we have
    \begin{align*}
        \T_{\gamma, r'}^{\pi'}(x) = M^{\pi}(r' + \gamma P x) \leq M^{\pi}(r + \gamma P x) \leq M(r + \gamma P x) = \T_\gamma(x)
    \end{align*}
    crucially using the fact that $r' \leq r$ and also monotonicity of $M^{\pi}$.
    This thus implies that $\L^{\pi'}(x) = \Clip_{\spb}(\T_{\gamma, r'}^{\pi'}(x)) \leq \Clip_{\spb}(\T_\gamma(x)) \leq \L(x)$ for any $x \in \R^{\S}$ (using monotonicity of $\Clip_{\spb}$). Therefore $\L^{\pi'}(\zero) \leq \L(\zero)$, and if it holds for some integer $i \geq 1$ that $(\L^{\pi'})^{(i)}(\zero) \leq \L^{(i)}(\zero)$ then we can use the above-discussed properties to obtain that
    \begin{align*}
        (\L^{\pi'})^{(i+1)}(\zero) = \L^{\pi'}\left( (\L^{\pi'})^{(i)}(\zero)\right) \leq \L\left( (\L^{\pi'})^{(i)}(\zero)\right) \leq \L\left( \L^{(i)}(\zero)\right) = \L^{(i+1)}(\zero),
    \end{align*}
    so we have by induction that $(\L^{\pi'})^{(i)}(\zero) \leq  \L^{(i)}(\zero)$ holds for all $i$, and thus
    \begin{align*}
        V_{\gamma, r'}^{\pi'} = \lim_{i \to \infty} (\L^{\pi'})^{(i)}(\zero) \leq \lim_{i \to \infty} \L^{(i)}(\zero) = V_{\gamma, \spb}^{\star}.
    \end{align*}
    Thus we have
    \begin{align*}
        V_\gamma^{\pihat} \geq V^{\pihat}_{\gamma, \rt} \geq V^\star_{\gamma, \spb} - \varepsilon\one \geq V_{\gamma, r'}^{\pi'} - \varepsilon\one
    \end{align*}
    as desired, where the first inequality is because $r \geq \rt$, the second inequality uses that $\infnorm{V^{\pihat}_{\gamma, \rt} - V^\star_{\gamma, \spb}} \leq \varepsilon$, and the final inequality uses the above argument.
\end{proof}

\subsection{Proof of Theorem \ref{thm:span_regularization_performance}}
\label{sec:span_regularization_proof}

First we develop bounds on the error between value functions within $P$ and $\Phat$.
\begin{lem}
    \label{lem:fixed_policy_error_bound}
    Fix a policy $\pi$, $\gamma \in (0,1)$, and $\delta, n > 0$. Then with probability at least $1 - \delta$, we have that
    \begin{align*}
        \infnorm{V_\gamma^{\pi} - \Vhat_\gamma^{\pi}} & \leq \frac{24 \log_2 \log_2 (\frac{1}{1-\gamma} + 4)}{1-\gamma} \sqrt{\frac{  \spannorm{V_\gamma^{\pi}} + 1 }{n} 16 \log \left( \frac{12 SA n}{(1-\gamma)^2 \delta}\right) }.
    \end{align*}
\end{lem}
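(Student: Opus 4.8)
The plan is to recognize that this is exactly the fixed-policy value-function concentration bound that already underlies Lemma~\ref{lem:concentration_event_eps_based}: the right-hand side here coincides verbatim with the bound~\eqref{eq:true_opt_pol_eval_bd_2} (after replacing the true-DMDP optimal policy $\pistar_\gamma$ by the arbitrary fixed policy $\pi$, and $\spannorm{V_\gamma^{\pistar_\gamma}}$ by $\spannorm{V_\gamma^\pi}$). Since $\pistar_\gamma$ is itself a fixed, sample-independent policy, the statement for a general fixed $\pi$ is the same building block, and I would invoke the corresponding per-policy evaluation result of \cite{zurek_plug-approach_2024} (behind their Equations~(31)--(32)), which establishes precisely this inequality. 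The one structural fact I would emphasize is that $\pi$ is chosen before the data are drawn, hence is statistically independent of $\Phat$; this is the ``easy'' regime that avoids the absorbing-MDP machinery required for data-dependent policies such as $\pit_\gamma$.

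For completeness I would reconstruct the argument. Starting from the two Bellman equations $V_\gamma^\pi = r_\pi + \gamma P_\pi V_\gamma^\pi$ and $\Vhat_\gamma^\pi = r_\pi + \gamma \Phat_\pi \Vhat_\gamma^\pi$, subtracting and solving yields
\begin{align*}
    V_\gamma^\pi - \Vhat_\gamma^\pi = \gamma \left(I - \gamma \Phat_\pi\right)^{-1}\left(P_\pi - \Phat_\pi\right)V_\gamma^\pi ,
\end{align*}
and since $\infinfnorm{(I - \gamma \Phat_\pi)^{-1}} \leq \tfrac{1}{1-\gamma}$ it remains to control the perturbation $(P_\pi - \Phat_\pi)V_\gamma^\pi$. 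Because $(P_\pi - \Phat_\pi)$ annihilates constant vectors, this perturbation is shift-invariant in $V_\gamma^\pi$, so its per-coordinate variance is $\Var_{P_\pi}[V_\gamma^\pi] \leq \tfrac14 \spannorm{V_\gamma^\pi}^2$ (Popoviciu), and Bernstein's inequality with a union bound over the $S$ states gives an entrywise bound of order $\sqrt{\Var_{P_\pi}[V_\gamma^\pi]\,\iota/n} + \spannorm{V_\gamma^\pi}\,\iota/n$, where $\iota = \log(\cdot)$.

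The main obstacle---the reason a direct combination of these two steps is too weak---is that bounding each per-step variance by $\tfrac14 \spannorm{V_\gamma^\pi}^2$ and passing through the $\tfrac{1}{1-\gamma}$ resolvent yields a bound scaling \emph{linearly} in $\spannorm{V_\gamma^\pi}$, whereas the claim scales only as $\sqrt{\spannorm{V_\gamma^\pi}+1}$. Recovering this requires the \emph{total-variance} technique of \cite{azar_minimax_2013, agarwal_model-based_2020, li_breaking_2020}: one bounds the discounted aggregate $\infnorm{(I - \gamma P_\pi)^{-1}\Var_{P_\pi}[V_\gamma^\pi]}$ by $O\big(\spannorm{V_\gamma^\pi}/(1-\gamma)\big)$---a factor-$\spannorm{V_\gamma^\pi}$ improvement obtained by exploiting that the Bellman fluctuations $V_\gamma^\pi(S_{t+1}) - (P_\pi V_\gamma^\pi)(S_t)$ telescope along trajectories---and then applies Cauchy--Schwarz across the horizon to turn the resulting $\tfrac{1}{\sqrt{1-\gamma}}\sqrt{\text{(aggregate variance)}}$ into $\tfrac{1}{1-\gamma}\sqrt{\spannorm{V_\gamma^\pi}/n}$. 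The extra $\log_2\log_2(\tfrac{1}{1-\gamma}+4)$ factor comes from the effective-horizon peeling/doubling argument of \cite{zurek_plug-approach_2024} used to attain the sharp $\tfrac{1}{1-\gamma}$ (rather than $\tfrac{1}{(1-\gamma)^{3/2}}$) dependence; being orthogonal to the fixed-policy simplification, I would import it verbatim, as it is the genuinely technical core of the estimate.
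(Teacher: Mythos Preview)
Your proposal is correct and matches the paper's approach exactly: the paper's proof simply observes that \cite[Equation~(31)]{zurek_plug-approach_2024} (stated there for the DMDP-optimal policy) holds verbatim for any fixed, sample-independent policy $\pi$, and cites it directly. Your additional reconstruction of the total-variance argument is accurate but goes beyond what the paper records; the essential content---invoking the prior result and noting that fixing $\pi$ beforehand is what makes the easy case apply---is identical.
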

\begin{proof}
    We can reuse the proof of \cite[Theorem 9]{zurek_plug-approach_2024}. Specifically, \cite[Equation 31]{zurek_plug-approach_2024} is stated for an optimal policy in the DMDP $(P, r, \gamma)$, but completely identical arguments actually hold for any fixed policy $\pi$. Therefore \cite[Equation 31]{zurek_plug-approach_2024} yields that with probability at least $1 - \delta$ it holds that
    \begin{align*}
        \infnorm{V_\gamma^{\pi} - \Vhat_\gamma^{\pi}} & \leq \frac{24 \log_2 \log_2 (\frac{1}{1-\gamma} + 4)}{1-\gamma} \sqrt{\frac{  \spannorm{V_\gamma^{\pi}} + 1 }{n} 16 \log \left( \frac{12 SA n}{(1-\gamma)^2 \delta}\right) }.
    \end{align*}
    This completes the proof.
\end{proof}

We would like to have a similar bound involving policies output by the span-constrained planning procedure applied to $\Phat$, but since such policies are probabilistically dependent on $\Phat$, much more effort is needed. In the absence of span-constrained planning, \cite{agarwal_model-based_2020} introduce the absorbing MDP construction to overcome such issues, and our approach is strongly inspired by theirs, although ours requires new modifications which are specific to the span-constrained planning procedure Algorithm \ref{alg:span_constrained_planning}.

First we must define the key quantities involved in our construction.
Fix $s \in \S$ and $u \in [0,1]$. We define the MDP transition kernel $\Phat^{(s)}$ as
\begin{align*}
    \Phat^{(s)}(s' \mid s'', a) = \begin{cases}
        \Phat(s' \mid s'', a) & \text{$s'' \neq s$} \\
        1 & \text{$s'' = s' = s$} \\
        0 & \text{$s'' = s$ and $s' \neq s$} 
    \end{cases}.
\end{align*}
In words, $\Phat^{(s)}$ is identical to $\Phat$ except state $s$ is made to be absorbing. We also define the reward function $r^{(s,u)}$ as
\begin{align*}
    r^{(s,u)}(s', a) = \begin{cases}
        r(s',a) & s' \neq s \\
        u & s' = s
    \end{cases}.
\end{align*}
In words, $r^{(s,u)}$ is identical to $r$ except state $s$ gives reward $u$ for all actions.
For any $\gamma \in (0,1)$ and any $\spb > 0$, let $\That_\gamma^{(s,u)} : \R^{\S} \to \R^{\S}$ be the $\gamma$-discounted Bellman operator for the DMDP $(\Phat^{(s)}, r^{(s,u)}, \gamma)$, that is,
\begin{align*}
    \That_\gamma^{(s,u)}(x) = M(r^{(s,u)} + \gamma \Phat^{(s)}x)
\end{align*}
for all $x \in \R^{\S}$. Let $\Vhat_{\gamma, \spb}^{(s,u)}$ be the unique fixed point of $\Clip_{\spb} \circ \That_\gamma^{(s,u)}$ (this fixed point exists and is unique because $\Clip_{\spb} \circ \That_\gamma^{(s,u)}$ is a $\gamma$-contraction due to Lemma \ref{lem:span_constrained_planning_subroutine}).

Now we can summarize the key properties of this construction in the following lemma.

\begin{lem}
\label{lem:LOO_properties}
Fixing $\gamma \in (0,1)$ and $\spb > 0$, for any $u, u'\in [0,1]$ we have
    \begin{align*}
        \infnorm{\Vhat_{\gamma, \spb}^{(s,u)} - \Vhat_{\gamma, \spb}^{(s,u')}} \leq \frac{|u-u'|}{1-\gamma}.
    \end{align*}
    Also, letting $u^\star(s) = \That_\gamma(\Vhat^\star_{\gamma, \spb})(s) - \gamma \Vhat^\star_{\gamma, \spb}(s)$, we have that $u^\star(s) \in [0,1]$ and
    \begin{align*}
        \Vhat_{\gamma, \spb}^{(s,u^\star(s))} = \Vhat_{\gamma, \spb}^\star,
    \end{align*}
    where $\Vhat_{\gamma, \spb}^\star$ is the unique fixed point of $\Clip_{\spb} \circ \That_\gamma$.

    Consequently, there exists a finite set $U$ with $|U| = \left\lceil \frac{1}{2(1-\gamma)\varepsilon} \right \rceil$ such that almost surely, for any $s \in \S$ there exists $u \in U$ such that $\infnorm{\Vhat^\star_{\gamma, \spb} - \Vhat^{(s,u)}_{\gamma, \spb}} \leq \varepsilon.$

    Lastly, for any $u \in [0,1], s \in \S$, and $a \in \A$, $\Vhat_{\gamma, \spb}^{(s,u)}$ is independent of the samples $S^1_{s,a}, \dots, S^n_{s,a}$ used to construct $\Phat(\cdot \mid s, a)$.
\end{lem}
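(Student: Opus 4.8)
The plan is to establish the four claims in the order they are stated, since each builds on the previous. Throughout I would write $\L^{(s,u)} := \Clip_\spb \circ \That_\gamma^{(s,u)}$, which is a $\gamma$-contraction by Lemma~\ref{lem:span_constrained_planning_subroutine} and hence has $\Vhat_{\gamma,\spb}^{(s,u)}$ as its unique fixed point. For the first (Lipschitz) claim, the key observation is that because $s$ is absorbing under $\Phat^{(s)}$, the update $\That_\gamma^{(s,u)}(x)(s) = u + \gamma x(s)$ depends on $u$ only through its own coordinate, while for $s'' \neq s$ the value $\That_\gamma^{(s,u)}(x)(s'')$ does not involve $u$ at all; hence $\infnorm{\That_\gamma^{(s,u)}(x) - \That_\gamma^{(s,u')}(x)} = |u-u'|$ for every $x$. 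Since $\Clip_\spb$ is $\infnorm{\cdot}$-nonexpansive, the same bound holds for $\L^{(s,u)}$ and $\L^{(s,u')}$. Applying this at $x = \Vhat_{\gamma,\spb}^{(s,u)}$ and using the $\gamma$-contraction of $\L^{(s,u')}$ gives the standard perturbation estimate $\infnorm{\Vhat_{\gamma,\spb}^{(s,u)} - \Vhat_{\gamma,\spb}^{(s,u')}} \le \gamma \infnorm{\Vhat_{\gamma,\spb}^{(s,u)} - \Vhat_{\gamma,\spb}^{(s,u')}} + |u-u'|$, which rearranges to the claimed bound.

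The heart of the argument is the second claim. Writing $V := \Vhat^\star_{\gamma,\spb}$, I would verify that $\That_\gamma^{(s,u^\star(s))}(V) = \That_\gamma(V)$: for coordinate $s$ this is exactly the defining equation $u^\star(s) + \gamma V(s) = \That_\gamma(V)(s)$, and for $s'' \neq s$ the operators coincide because $\Phat^{(s)}$ and $r^{(s,u^\star(s))}$ agree with $\Phat$ and $r$ off state $s$. Consequently $\L^{(s,u^\star(s))}(V) = \Clip_\spb(\That_\gamma(V)) = V$, so $V$ is the unique fixed point of $\L^{(s,u^\star(s))}$, giving $\Vhat_{\gamma,\spb}^{(s,u^\star(s))} = V$. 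To show $u^\star(s) \in [0,1]$, I would first record the elementary bound $\zero \le V \le \frac{1}{1-\gamma}\one$, proved by induction on the iterates $V^{t+1} = \L(V^t)$ from $V^0 = \zero$ (using $0 \le r \le 1$ and that $\Clip_\spb$ only lowers entries lying above the minimum). Then I split on whether clipping is active at $s$: if $\That_\gamma(V)(s) \le \spb + \min_{s'}V(s')$ then $V(s) = \That_\gamma(V)(s)$ and $u^\star(s) = (1-\gamma)V(s) \in [0,1]$; otherwise $V(s) = \spb + \min_{s'} V(s')$ is the maximal entry of $V$, so $(\Phat V)(s,a) \le V(s)$ forces $u^\star(s) \le \max_a r(s,a) \le 1$, while $\That_\gamma(V)(s) > V(s)$ forces $u^\star(s) > (1-\gamma)V(s) \ge 0$.

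The last two claims then follow quickly. For the discretization claim, I would take $U$ to be the uniform grid of $\lceil \frac{1}{2(1-\gamma)\varepsilon}\rceil$ points in $[0,1]$ with mesh radius $(1-\gamma)\varepsilon$, which is deterministic (independent of the samples). Since $u^\star(s) \in [0,1]$ by the second claim, for every realization some $u \in U$ has $|u - u^\star(s)| \le (1-\gamma)\varepsilon$, and combining the Lipschitz and fixed-point-matching claims yields $\infnorm{\Vhat^\star_{\gamma,\spb} - \Vhat^{(s,u)}_{\gamma,\spb}} = \infnorm{\Vhat^{(s,u^\star(s))}_{\gamma,\spb} - \Vhat^{(s,u)}_{\gamma,\spb}} \le \varepsilon$. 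For the independence claim, the whole point of the construction is that $\Phat^{(s)}$ overwrites the row at state $s$ with the deterministic self-loop $e_s$, so $\L^{(s,u)}$ — and therefore its fixed point $\Vhat_{\gamma,\spb}^{(s,u)}$, which is a measurable function of that operator — depends only on the samples $\{S^i_{s'',a'} : s'' \neq s\}$, which are independent of $S^1_{s,a},\dots,S^n_{s,a}$ by independence of samples across state-action pairs. I expect the main obstacle to be the second claim, specifically verifying $u^\star(s) \in [0,1]$: the matching identity $\Vhat_{\gamma,\spb}^{(s,u^\star(s))} = \Vhat^\star_{\gamma,\spb}$ is clean, but the range bound requires correctly handling the interaction between the absorbing ``effective reward'' and the clipping operator $\Clip_\spb$ via the case analysis above.
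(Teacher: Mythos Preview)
Your proposal is correct and follows essentially the same approach as the paper: the Lipschitz bound via nonexpansiveness of $\Clip_\spb$ plus a contraction/rearrangement argument, the fixed-point matching by verifying $\That_\gamma^{(s,u^\star(s))}(V) = \That_\gamma(V)$ coordinatewise, the same two-case analysis (clipping active or not at $s$) for $u^\star(s)\in[0,1]$, and the same uniform-grid discretization and independence-by-construction arguments. The only cosmetic difference is that you bound $\zero \le V \le \tfrac{1}{1-\gamma}\one$ by induction on the iterates, whereas the paper obtains the upper bound via the monotonicity comparison $\Vhat^\star_{\gamma,\spb} \le \Vhat^\star_\gamma$; both are equally valid.
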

\begin{proof}
    For the first inequality, we can calculate that
    \begin{align*}
        \infnorm{\Vhat_{\gamma, \spb}^{(s,u)} - \Vhat_{\gamma, \spb}^{(s,u')}} &= \infnorm{\Clip_{\spb}\left( \That_\gamma^{(s,u)} \left( \Vhat_{\gamma, \spb}^{(s,u)}\right)\right) - \Clip_{\spb}\left( \That_\gamma^{(s,u')} \left( \Vhat_{\gamma, \spb}^{(s,u')}\right)\right)} \\
        &\leq \infnorm{ \That_\gamma^{(s,u)} \left( \Vhat_{\gamma, \spb}^{(s,u)}\right) -  \That_\gamma^{(s,u')} \left( \Vhat_{\gamma, \spb}^{(s,u')}\right)} \\
        &= \infnorm{ M \left(r^{(s,u)} + \gamma \Phat^{(s)} \Vhat_{\gamma, \spb}^{(s,u)}\right) -  M \left(r^{(s,u')} + \gamma \Phat^{(s)} \Vhat_{\gamma, \spb}^{(s,u')}\right)} \\
        & \leq \infnorm{r^{(s,u)} -  r^{(s,u')} + \gamma \Phat^{(s)}\left( \Vhat_{\gamma, \spb}^{(s,u)}  - \Vhat_{\gamma, \spb}^{(s,u')}\right)} \\
        & \leq \infnorm{r^{(s,u)} -  r^{(s,u')}} + \gamma \infnorm{\Vhat_{\gamma, \spb}^{(s,u)}  - \Vhat_{\gamma, \spb}^{(s,u')}} \\
        &= |u - u'| + \gamma \infnorm{\Vhat_{\gamma, \spb}^{(s,u)}  - \Vhat_{\gamma, \spb}^{(s,u')}}.
    \end{align*}
    Rearranging, we obtain that
    \begin{align*}
        \infnorm{\Vhat_{\gamma, \spb}^{(s,u)} - \Vhat_{\gamma, \spb}^{(s,u')}} \leq \frac{|u-u'|}{1-\gamma}
    \end{align*}
    as desired.

    Now we show the second statement in the lemma. First we show that $\Vhat_{\gamma, \spb}^{(s,u^\star(s))} = \Vhat_{\gamma, \spb}^\star$.
    To show this, it suffices to show that
    \begin{align}
        \That_\gamma\left(\Vhat_{\gamma, \spb}^\star\right) = \That_\gamma^{(s,u^\star(s))}\left(\Vhat_{\gamma, \spb}^\star\right), \label{eq:fixed_point_equality_on_vstar}
    \end{align}
    since this implies that
    \begin{align*}
        \Vhat_{\gamma, \spb}^\star = \Clip_{\spb} \left(\That_\gamma\left(\Vhat_{\gamma, \spb}^\star\right)\right) = \Clip_{\spb} \left(\That^{(s,u^\star(s))}_\gamma\left(\Vhat_{\gamma, \spb}^\star\right)\right) 
    \end{align*}
    (using the fact that $\Vhat_{\gamma, \spb}^\star$ is a fixed point of $\Clip_{\spb} \circ \That_\gamma$ in the first equality), meaning that $\Vhat_{\gamma, \spb}^\star$ is a fixed point of $\Clip_{\spb} \circ \That^{(s,u^\star(s))}_\gamma$, and since the unique fixed point of $\Clip_{\spb} \circ \That^{(s,u^\star(s))}_\gamma$ is $\Vhat_{\gamma, \spb}^{(s,u^\star(s))}$, this would imply that $\Vhat_{\gamma, \spb}^{(s,u^\star(s))} = \Vhat_{\gamma, \spb}^\star$. Now focusing on~\eqref{eq:fixed_point_equality_on_vstar}, we note that by construction of $\That_\gamma^{s, u^\star(s)}$ we already have 
    $\That_\gamma\left(\Vhat_{\gamma, \spb}^\star\right)(s') = \That_\gamma^{(s,u^\star(s))}\left(\Vhat_{\gamma, \spb}^\star\right)(s')$
    for all $s' \neq s$, so it remains to show the equality for the $s$th coordinates. We have the desired equality
    \begin{align*}
        \That_\gamma^{(s,u^\star(s))}\left(\Vhat_{\gamma, \spb}^\star\right)(s) = u^\star(s) + \gamma \Vhat_{\gamma, \spb}^\star(s) =  \That_\gamma\left(\Vhat_{\gamma, \spb}^\star\right)
    \end{align*}
    using the definition of $\That_\gamma^{(s,u^\star(s))}$ and then the definition of $u^\star(s)$, so~\eqref{eq:fixed_point_equality_on_vstar} holds and thus $\Vhat_{\gamma, \spb}^{(s,u^\star(s))} = \Vhat_{\gamma, \spb}^\star$.

    Next we check that $u^\star(s) \in [0,1]$. We consider two cases, either $\That_\gamma(\Vhat^\star_{\gamma, \spb})(s) = \Vhat^\star_{\gamma, \spb}(s)$ (the clipping operator does not affect entry $s$) or $\That_\gamma(\Vhat^\star_{\gamma, \spb})(s) > \Vhat^\star_{\gamma, \spb}(s)$. In the first case we immediately have
    \begin{align*}
        u^\star(s) = \That_\gamma(\Vhat^\star_{\gamma, \spb})(s) - \gamma \Vhat^\star_{\gamma, \spb}(s) = (1-\gamma)\Vhat^\star_{\gamma, \spb}(s)
    \end{align*}
    which is $\in [0,1]$ since $\Vhat^\star_{\gamma, \spb}(s) \in [0, \frac{1}{1-\gamma}]$. (This fact can be seen by noting that $\That_\gamma(x) \geq (\Clip_{\spb} \circ \That_\gamma)(x)$ for any $x$ so by a monotonicity argument we have that $\Vhat^\star_{\gamma, \spb} \leq \Vhat^\star_\gamma \leq \frac{1}{1-\gamma}\one$.) In the second case, since the clipping affects entry $s$, this means that $\Vhat^\star_{\gamma, \spb}(s)$ must be the largest (not necessarily the uniquely largest) entry of $\Vhat^\star_{\gamma, \spb}$. Therefore we have by the definition of $\That_\gamma(\Vhat^\star_{\gamma, \spb})$ that for some $a^\star \in \A$,
    \begin{align*}
        \That_\gamma(\Vhat^\star_{\gamma, \spb})(s) = r(s,a^\star) + \gamma \Phat_{s,a^\star} \Vhat^\star_{\gamma, \spb} \leq r(s,a^\star) + \gamma \Vhat^\star_{\gamma, \spb}(s)
    \end{align*}
    since $\Phat_{s,a^\star} \Vhat^\star_{\gamma, \spb}$ is an average of entries of $\Vhat^\star_{\gamma, \spb}$ which must be less than the largest entry $\Vhat^\star_{\gamma, \spb}(s)$. After rearranging we have that
    \begin{align*}
        u^\star(s) = \That_\gamma(\Vhat^\star_{\gamma, \spb})(s) - \gamma \Vhat^\star_{\gamma, \spb}(s) \leq r(s,a^\star) \leq 1,
    \end{align*}
    and to lower bound $u^\star(s)$ we can simply use that in this case we have assumed $\That_\gamma(\Vhat^\star_{\gamma, \spb})(s) > \Vhat^\star_{\gamma, \spb}(s)$, so
    \begin{align*}
        u^\star(s) = \That_\gamma(\Vhat^\star_{\gamma, \spb})(s) - \gamma \Vhat^\star_{\gamma, \spb}(s) > (1-\gamma)\Vhat^\star_{\gamma, \spb}(s) \geq 0.
    \end{align*}

    Now we show the penultimate statement. Letting $U$ be a set of $\left\lceil \frac{1}{2(1-\gamma)\varepsilon} \right \rceil$ equally spaced points in $[0,1]$, for any $z \in [0,1]$ there exists $u \in U$ such that $|x - u| \leq (1-\gamma)\varepsilon$. Therefore for any $s$, letting $U(s)$ be the closest element of $U$ to $u^\star(s)$, we have that
    \begin{align*}
        \infnorm{\Vhat^\star_{\gamma, \spb} - \Vhat^{(s,U(s))}_{\gamma, \spb}} = \infnorm{\Vhat^{(s,u^\star(s))}_{\gamma, \spb} - \Vhat^{(s,U(s))}_{\gamma, \spb}} \leq \frac{|u^\star(s)- U(s)|}{1-\gamma} \leq \varepsilon
    \end{align*}
    as desired.

    Finally, for any $s \in \S, a \in \A, u \in [0,1]$, the independence of $\Vhat_{\gamma, \spb}^{(s,u)}$ from the samples $S^1_{s,a}, \dots, S^n_{s,a}$ is immediate from the construction, since $\Vhat_{\gamma, \spb}^{(s,u)}$ uses the transition kernel $\Phat^{(s)}$ which is independent of $S^1_{s,a}, \dots, S^n_{s,a}$.
\end{proof}

Now we can use this construction to prove the desired error bound. First we state the desired bound:
\begin{lem}
    \label{lem:span_const_plan_policy_error_bound}
    Fix $\gamma \in (0,1)$ and $\delta, \spb, n > 0$. Let $\pihat, \Vt, \rt$ be the output of the Span-Constrained Planning Algorithm \ref{alg:span_constrained_planning} with inputs $(\Phat, r, \gamma), \spb, \frac{1}{n}$. Then with probability at least $1 - \delta$, we have that
    \begin{align*}
        \infnorm{V_{\gamma, \rt}^{\pihat} - \Vhat_{\gamma, \rt}^{\pihat}} & \leq \frac{24 \log_2 \log_2 (\frac{1}{1-\gamma} + 4)}{1-\gamma} \sqrt{\frac{  \spannorm{\Vhat_{\gamma, \rt}^{\pihat}} + 1 }{n} 16 \log \left( \frac{12 SA n}{(1-\gamma)^2 \delta}\right) }.
    \end{align*}
\end{lem}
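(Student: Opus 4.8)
The plan is to reproduce the proof of Lemma~\ref{lem:fixed_policy_error_bound} (which itself reuses the argument behind \cite[Theorem 9]{zurek_plug-approach_2024}), isolating the single point at which statistical independence between the value function and the samples is invoked, and to supply that independence through the absorbing-MDP construction of Lemma~\ref{lem:LOO_properties}. The key observation is that $\Vhat_{\gamma,\rt}^{\pihat}$ is the genuine value function of the \emph{fixed} policy $\pihat$ under the \emph{fixed} reward $\rt$ in the empirical DMDP $(\Phat, \rt, \gamma)$, so it satisfies the corresponding Bellman evaluation equation. Consequently, once the relevant per-state-action concentration bound is available, the variance (total-variance) recursion and its conversion into the stated $\infnorm{\cdot}$ bound proceed verbatim as in the fixed-policy case. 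The only obstacle is that $\pihat$, $\rt$, and hence $\Vhat_{\gamma,\rt}^{\pihat}$ all depend on $\Phat$, so the Bernstein step cannot be applied directly as for a fixed policy.

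The crux is therefore to establish, simultaneously over all $(s,a) \in \S\times\A$ and with probability at least $1-\delta$, a variance-aware bound of the form
\begin{align*}
    \left| (\Phat_{sa} - P_{sa}) \Vhat_{\gamma,\rt}^{\pihat} \right| \lesssim \sqrt{\frac{\Var_{P_{sa}}[\Vhat_{\gamma,\rt}^{\pihat}]}{n}\log\left(\frac{SAn}{(1-\gamma)\delta}\right)} + \frac{\spannorm{\Vhat_{\gamma,\rt}^{\pihat}}+1}{n}
\end{align*}
(where $\Var_{P_{sa}}[\cdot]$ denotes the variance of the argument under $P(\cdot\mid s,a)$), despite this dependence. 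To do so I would fix $(s,a)$ and chain the approximations from Lemma~\ref{lem:span_constrained_planning_subroutine} (applied to $\Phat$) and Lemma~\ref{lem:LOO_properties}. By Lemma~\ref{lem:span_constrained_planning_subroutine} the subroutine's target error $\tfrac1n$ gives $\infnorm{\Vhat_{\gamma,\rt}^{\pihat} - \Vhat_{\gamma,\spb}^\star} \leq \tfrac1n$, so (using $\onenorm{\Phat_{sa}-P_{sa}}\le 2$) it suffices up to an $O(1/n)$ error to control $|(\Phat_{sa}-P_{sa})\Vhat_{\gamma,\spb}^\star|$ and $\Var_{P_{sa}}[\Vhat_{\gamma,\spb}^\star]$. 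By Lemma~\ref{lem:LOO_properties}, $\Vhat_{\gamma,\spb}^\star = \Vhat_{\gamma,\spb}^{(s,u^\star(s))}$ with $u^\star(s)\in[0,1]$; taking the net $U$ from that lemma with resolution $\tfrac1n$ (so $|U| = \lceil \tfrac{n}{2(1-\gamma)}\rceil$) yields some $u\in U$ with $\infnorm{\Vhat_{\gamma,\spb}^\star - \Vhat_{\gamma,\spb}^{(s,u)}} \le \tfrac1n$. Since each $\Vhat_{\gamma,\spb}^{(s,u)}$ is independent of the samples $S^1_{s,a},\dots,S^n_{s,a}$, Bernstein's inequality applies to $(\Phat_{sa}-P_{sa})\Vhat_{\gamma,\spb}^{(s,u)}$; a union bound over $u \in U$, $a\in\A$, and $s\in\S$ controls every net point, and the Lipschitz property of Lemma~\ref{lem:LOO_properties} transfers the bound from the net to $u^\star(s)$, hence to $\Vhat_{\gamma,\spb}^\star$ and finally back to $\Vhat_{\gamma,\rt}^{\pihat}$.

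Granting this concentration bound, I would feed it into the total-variance recursion exactly as in \cite[Proof of Theorem 9]{zurek_plug-approach_2024}: the identity $V_{\gamma,\rt}^{\pihat} - \Vhat_{\gamma,\rt}^{\pihat} = \gamma(I-\gamma\Phat_{\pihat})^{-1}(\Phat_{\pihat}-P_{\pihat})\Vhat_{\gamma,\rt}^{\pihat}$, together with the standard total-variance bound $\infnorm{(I-\gamma\Phat_{\pihat})^{-1}\sqrt{\Var_{P_{\pihat}}[\Vhat_{\gamma,\rt}^{\pihat}]}} \lesssim \tfrac{1}{1-\gamma}\sqrt{\spannorm{\Vhat_{\gamma,\rt}^{\pihat}}+1}$ and the peeling over variance scales responsible for the $\log_2\log_2$ factor, reproduces the claimed inequality. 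The main obstacle is the concentration step above, and within it the two delicate points: (i) arguing that replacing $\Vhat_{\gamma,\rt}^{\pihat}$ first by $\Vhat_{\gamma,\spb}^\star$ and then by a net point perturbs both the deviation $(\Phat_{sa}-P_{sa})V$ and the variance $\Var_{P_{sa}}[V]$ only at the lower-order $O(1/n)$ scale; and (ii) verifying that the net of size $\Theta(n/(1-\gamma))$ contributes only a $\log(n/(1-\gamma))$ that is absorbed into the stated logarithm $\log(\tfrac{12SAn}{(1-\gamma)^2\delta})$ inherited from the fixed-policy case. Everything downstream of the concentration bound is mechanical.
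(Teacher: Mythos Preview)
Your plan is essentially the paper's proof: use the absorbing-MDP construction of Lemma~\ref{lem:LOO_properties} to supply the needed independence (via a net $U$ of size $\lceil n/(2(1-\gamma))\rceil$ and the $1/n$-closeness $\tinfnorm{\Vhat_{\gamma,\rt}^{\pihat} - \Vhat_{\gamma,\spb}^\star}\le 1/n$ from Lemma~\ref{lem:span_constrained_planning_subroutine}), and then feed the resulting Bernstein-type concentration into the variance-recursion machinery of \cite{zurek_plug-approach_2024}. The paper packages the first step as the abstract Lemma~\ref{lem:LOO_DMDP_bernstein_bound_abstract} (whose hypotheses are exactly what Lemma~\ref{lem:LOO_properties} furnishes) and the second step as a black-box call to \cite[Lemma~16]{zurek_plug-approach_2024}; two minor imprecisions in your write-up are that the concentration must hold not only for $\Vc$ but for its elementwise powers $(\Vc)^{\circ 2^k}$ (this is what the ``peeling'' you allude to actually requires), and that the variance appearing there is the empirical $\Var_{\Phat_\pi}$ rather than $\Var_{P_{sa}}$, matching the resolvent used---but neither is a real gap.
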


At a high level, to prove Lemma \ref{lem:span_const_plan_policy_error_bound}, we take advantage of the formal similarity between our absorbing MDP construction, designed for span-constrained MDPs, and the original construction of \cite{agarwal_model-based_2020}. Specifically, we are able to reuse certain proof steps from \cite{zurek_plug-approach_2024} which utilize the absorbing MDP construction of \cite{agarwal_model-based_2020}, since although their statements involve different objects, their proofs utilize certain properties of said objects which all hold in an identical manner for the objects we consider.
First, by examining the specific properties used within the proof of \cite[Lemma 20]{zurek_plug-approach_2024}, the following result is actually shown:
\begin{lem}
\label{lem:LOO_DMDP_bernstein_bound_abstract}
Suppose there exists a random variable $\Vhat^\star$, a set $U$, as well as random variables $(\Vhat^\star_{s,u})_{s \in \S, u \in U}$ such that
\begin{enumerate}
    \item For any $s \in \S$, $u \in U$, and $a \in \A$, $\Vhat^\star_{s,u}$ is independent of the samples $S^1_{s,a}, \dots, S^n_{s,a}$ used to construct $\Phat(\cdot \mid s, a)$.
    \item Almost surely, for any $s \in \S$, there exists $u(s) \in U$ such that $\infnorm{\Vhat^\star - \Vhat^\star_{s,u(s)}} \leq \frac{1}{n}$.
    \item $|U| \leq \left \lceil \frac{n}{2(1-\gamma)} \right \rceil$.
\end{enumerate}
    Also suppose $n \geq 4$. With probability at least $1-\delta$, the following holds: for all policies $\pi$ and all $X \in \R^{\S}$ which satisfy $\infnorm{X - \Vhatstar} \leq \frac{1}{n}$ and that $\spannorm{X} \leq \frac{1}{1-\gamma}$, letting $\Vc = X - \left(\min_s X(s)\right)\one$, for all $k = 0, \dots, \left\lceil\log_2 \log_2 \left( \spannorm{X} + 4\right) \right\rceil$, we have
    \begin{align}
        \left| \left(\Phat_{\pi} - P_{\pi}\right) \left( \Vc \right)^{\circ 2^k} \right|
    &\leq \sqrt{\frac{\beta   \Var_{\Phat_{\pi}} \left[ \left(\Vc \right)^{\circ 2^k}  \right]    }{n }} + \frac{ \beta \cdot 2^k }{n}  \left( \infnorm{\Vc} + 1\right)^{2^k}   \one \label{eq:LOO_DMDP_bernstein_bound_abstract_conclusion}
    \end{align}
where $\beta = 16 \log \left(12 \frac{SAn}{(1-\gamma)^2 \delta} \right)$.
\end{lem}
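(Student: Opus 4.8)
The plan is to recognize that Lemma~\ref{lem:LOO_DMDP_bernstein_bound_abstract} is precisely the bound established inside the proof of \cite[Lemma 20]{zurek_plug-approach_2024}, with the concrete absorbing-MDP value functions replaced by the abstract surrogates $\Vhat^\star_{s,u}$; the entire task is to check that that proof invokes nothing about those objects beyond properties~1--3 together with $n\geq 4$. The mechanism being reused is a leave-one-out argument: the centered vector $\Vc$ built from $X$ depends on all the samples through $\Vhatstar$, so a concentration inequality cannot be applied to $(\Phat_\pi-P_\pi)(\Vc)^{\circ 2^k}$ directly. Instead one concentrates the sample-\emph{independent} surrogates and then transfers the resulting bound to the genuine $\Vc$, exploiting that $\Vc$ is $\ell^\infty$-close to one of the finitely many surrogates. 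Crucially, this converts a uniform-over-random-objects statement into a finite union bound over the deterministic index set $U$.

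Concretely, I would first fix $k$ and, for each triple $(s,u,a)\in\S\times U\times\A$, set $\Vc_{s,u}:=\Vhat^\star_{s,u}-(\min_{s'}\Vhat^\star_{s,u}(s'))\one$ and apply a Bernstein-type inequality, expressed through the empirical variance, to the scalar $(\Phat_{s,a}-P_{s,a})(\Vc_{s,u})^{\circ 2^k}$. This is legitimate because property~1 makes $(\Vc_{s,u})^{\circ 2^k}$ independent of the samples $S^1_{s,a},\dots,S^n_{s,a}$ defining $\Phat_{s,a}$, so those are i.i.d.\ draws tested against a fixed vector. Next I would union bound over all $(s,u,a)$ and over the $O(\log_2\log_2\tfrac{1}{1-\gamma})$ admissible values of $k$; property~3 bounds $|U|\leq\lceil\tfrac{n}{2(1-\gamma)}\rceil$, so the number of events is $\lesssim SA\tfrac{n}{1-\gamma}\log_2\log_2(\cdots)$ and its logarithm is subsumed by $\beta=16\log(12 SAn/((1-\gamma)^2\delta))$. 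To get uniformity over $\pi$ at no extra cost, I would write the $s$-th coordinate of $(\Phat_\pi-P_\pi)w$ as the convex combination $\sum_a\pi(a\mid s)(\Phat_{s,a}-P_{s,a})w$, so the per-action bounds hold for every $\pi$ simultaneously, and the empirical variance $\Var_{\Phat_\pi}$ dominates the matching convex combination by convexity.

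The remaining and most delicate step is the transfer from the surrogates to an arbitrary admissible $X$. Property~2 supplies, for each $s$, some $u(s)\in U$ with $\infnorm{\Vhatstar-\Vhat^\star_{s,u(s)}}\leq\tfrac1n$; combined with $\infnorm{X-\Vhatstar}\leq\tfrac1n$ and the fact that recentering shifts by at most the $\ell^\infty$ distance, this yields $\infnorm{\Vc-\Vc_{s,u(s)}}\leq\tfrac{4}{n}$. I would then push this perturbation through the entrywise power map and through the empirical variance to replace $\Vc_{s,u(s)}$ by $\Vc$ in both terms of the bound. This is the main obstacle: the map $x\mapsto x^{2^k}$ is only Lipschitz on a bounded interval, with constant of order $2^k(\infnorm{\Vc}+1)^{2^k-1}$, so I must use the hypothesis $\spannorm{X}\leq\tfrac{1}{1-\gamma}$ to control $\infnorm{\Vc}$ and then verify that an $O(1/n)$ change in $\Vc$ perturbs both $(\Phat_{s,a}-P_{s,a})(\Vc)^{\circ 2^k}$ and $\Var_{\Phat_\pi}[(\Vc)^{\circ 2^k}]$ by no more than the additive term $\tfrac{\beta 2^k}{n}(\infnorm{\Vc}+1)^{2^k}\one$. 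Matching this precise factor requires exactly the elementary power-function estimates of \cite{zurek_plug-approach_2024}; the substance of the argument lies in this bookkeeping rather than in any new probabilistic idea.
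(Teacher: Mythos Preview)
Your proposal is correct and matches the paper's approach exactly: the paper does not give an independent proof of this lemma but simply observes that it is what the proof of \cite[Lemma 20]{zurek_plug-approach_2024} actually establishes, once one abstracts away the specific absorbing-MDP construction and retains only properties~1--3. Your sketch of the leave-one-out mechanism, the union bound over $(s,u,a,k)$, and the Lipschitz transfer step is a faithful unpacking of that cited argument.
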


Here $x^{\circ p}$ denotes the elementwise $p$th power of any vector $x$.
With Lemma \ref{lem:LOO_DMDP_bernstein_bound_abstract}, as well as another key result from \cite{zurek_plug-approach_2024} which utilizes a conclusion in the form of Lemma \ref{lem:LOO_DMDP_bernstein_bound_abstract} to prove an error bound between value functions, we can now prove Lemma \ref{lem:span_const_plan_policy_error_bound}.
\begin{proof}[Proof of Lemma \ref{lem:span_const_plan_policy_error_bound}]
    By Lemma \ref{lem:LOO_properties}, it is immediate that the assumptions of Lemma \ref{lem:LOO_DMDP_bernstein_bound_abstract} are satisfied for $\Vhat^\star \gets \Vhat^\star_{\gamma, \spb}$ and $\Vhat^\star_{s,u} \gets \Vhat^{(s,u)}_{\gamma, \spb}$. We would like to obtain the conclusion~\eqref{eq:LOO_DMDP_bernstein_bound_abstract_conclusion} for $X \gets \Vhat^{\pihat}_{\gamma, \spb}$, which we now argue satisfies the assumptions. First, by Lemma \ref{lem:span_constrained_planning_subroutine}, we have that (almost surely) $\infnorm{\Vhat_{\gamma, \rt}^{\pihat} - \Vhat_{\gamma, \spb}^\star} \leq \frac{1}{n}$. Second, we have $\zero \leq \Vhat_{\gamma, \rt}^{\pihat} \leq \frac{1}{1-\gamma} \one$, which implies $\spannorm{\Vhat_{\gamma, \rt}^{\pihat}} \leq \frac{1}{1-\gamma}$. Therefore, assuming that $n \geq 4$, by Lemma \ref{lem:LOO_DMDP_bernstein_bound_abstract} we have with probability at least $1 -\delta$ that, letting $\Vo = \Vhat_{\gamma, \rt}^{\pihat} - \left(\min_s \Vhat_{\gamma, \rt}^{\pihat}(s)\right)\one$ and $\ell = \left\lceil\log_2 \log_2 \left( \spannorm{\Vhat_{\gamma, \rt}^{\pihat}} + 4\right) \right\rceil$, for all $k = 0 , \dots, \ell$, we have
    \begin{align}
        \left| \left(\Phat_{\pi} - P_{\pi}\right) \left( \Vc \right)^{\circ 2^k} \right|
    &\leq \sqrt{\frac{\beta   \Var_{\Phat_{\pi}} \left[ \left(\Vc \right)^{\circ 2^k}  \right]    }{n }} + \frac{ \beta \cdot 2^k }{n}  \left( \infnorm{\Vc} + 1\right)^{2^k}   \one \label{eq:LOO_DMDP_bernstein_bound_concrete_conclusion}
    \end{align}
for $\beta = 16 \log \left(12 \frac{SAn}{(1-\gamma)^2 \delta} \right)$.

Now we can immediately apply~\eqref{eq:LOO_DMDP_bernstein_bound_concrete_conclusion} with \cite[Lemma 16]{zurek_plug-approach_2024} to obtain that
\begin{align}
    \infnorm{V_{\gamma, \rt}^{\pihat} - \Vhat_{\gamma, \rt}^{\pihat}} & \leq \frac{4(\ell + 1)\beta}{(1-\gamma)n}\spannorm{\Vhat_{\gamma, \rt}^{\pihat}} + \frac{2 (\ell + 1)}{1-\gamma} \sqrt{\frac{2 \beta (\spannorm{\Vhat_{\gamma, \rt}^{\pihat}} + 1)}{n}}. \label{eq:LOO_DMDP_bernstein_bound_concrete_conclusion_2}
\end{align}
Now all that remains is to simplify~\eqref{eq:LOO_DMDP_bernstein_bound_concrete_conclusion_2} which can be done in an identical manner as to \cite[Proof of Theorem 9]{zurek_plug-approach_2024} to obtain that
\begin{align*}
    \infnorm{V_{\gamma, \rt}^{\pihat} - \Vhat_{\gamma, \rt}^{\pihat}} & \leq \frac{24 \log_2 \log_2 (\frac{1}{1-\gamma} + 4)}{1-\gamma} \sqrt{\frac{  \spannorm{\Vhat_{\gamma, \rt}^{\pihat}} + 1 }{n} 16 \log \left( \frac{12 SA n}{(1-\gamma)^2 \delta}\right) }.
\end{align*}
\end{proof}

Now we can apply these error bounds to the analysis of Algorithm \ref{alg:span_regularization}. 
We recall the definitions of some objects which appear in Algorithm \ref{alg:span_regularization} for convenience, which will be in effect for the remainder of this subsection. We define the empirical transition kernel $\Phat(s' \mid s, a) = \frac{1}{n}\sum_{i=1}^n \ind\{S^i_{s,a} = s'\}$, for all $s' \in \S$, using the $n$ samples drawn from all state-action pairs within Algorithm \ref{alg:span_regularization}. For all integers $i \in \{2 , \dots, \lceil \log_2 n \rceil$ we define $\spb_i = 2^i$, and we define $\gamma_i$ so that $\frac{1}{1-\gamma_i} = \max \big\{\frac{\sqrt{n \spb_i}}{\alpha(\delta, n) 2\sqrt{2}}, 1\big\}$. For each $i \in \{2 , \dots, \lceil \log_2 n \rceil$ we also define $\pit_i$, $\Vt_i$, and $\rt_i$ as the outputs from the Span-Constrained Planning Algorithm~\ref{alg:span_constrained_planning} with input DMDP $(\Phat, r, \gamma_i)$, input span constraint bound $\spb_i$, and input target error $\frac{1}{n}$.
For the remainder of this proof we fix a policy $\pi$ such that $\rho^\pi$ is a constant vector. ($\pi$ will later be chosen to optimize a certain deterministic quantity.)

\begin{lem}
\label{lem:concentration_event_span_const_plan}
    Define the function $\alpha(\tilde{\delta}, \tilde{n}) = 24 \sqrt{16 \log \left( \frac{24 SA \tilde{n}^5}{\tilde{\delta}}\right)} \log_2\left( \log_2 (\tilde{n} + 4) \right)$. Fix $\delta > 0$. Then with probability at least $1 - \delta$, we have for all $i = 2, \dots, \lceil \log_2 n \rceil$ that
\begin{align}
    \infnorm{V_{\gamma_i}^{\pi} - \Vhat_{\gamma_i}^{\pi}} &\leq \frac{\alpha(\delta, n)}{1-\gamma_i} \sqrt{\frac{ \spannorm{V_{\gamma_i}^{\pi}} + 1 }{n}} \label{eq:span_reg_fixed_pol_eval_bd}
\end{align}
    and also the Span-Constrained Planning Algorithm \ref{alg:span_constrained_planning} used within subroutine on line \ref{alg:span_constrained_planning_step} outputs a policy $\pit_{i}$, approximate value function $\Vt_i$, and reward function $\rt_i$ such that
\begin{align}
        \infnorm{\Vhat_{\gamma_i, \rt_i}^{\pit_{i}} - \Vhat^\star_{\gamma_i, \spb_i}} &\leq \frac{1}{n} \label{eq:pit_span_reg_near_opt}\\
        \infnorm{\Vt_{i} - \Vhat^\star_{\gamma_i, \spb_i}} &\leq \frac{1}{n} \label{eq:span_reg_vt_error}\\
        \infnorm{\Vhat_{\gamma_i, \rt_i}^{\pit_{i}} - V_{\gamma_i, \rt_i}^{\pit_{i}}} & \leq \frac{\alpha(\delta, n)}{1-\gamma_i} \sqrt{\frac{ \spannorm{\Vhat_{\gamma_i, \rt_i}^{\pit_{i}}} + 1 }{n}} \label{eq:span_reg_emp_pol_eval_bd}.
\end{align}
\end{lem}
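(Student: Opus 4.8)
The plan is to assemble the four stated bounds from three results already proved in the excerpt: the deterministic bounds \eqref{eq:pit_span_reg_near_opt} and \eqref{eq:span_reg_vt_error} come directly from Lemma \ref{lem:span_constrained_planning_subroutine}, while the two probabilistic bounds \eqref{eq:span_reg_fixed_pol_eval_bd} and \eqref{eq:span_reg_emp_pol_eval_bd} come from Lemma \ref{lem:fixed_policy_error_bound} and Lemma \ref{lem:span_const_plan_policy_error_bound}, applied at each $\gamma_i$ and then combined by a union bound in the style of the proofs of Lemma \ref{lem:concentration_event_n_based} and Lemma \ref{lem:concentration_event_eps_based}.

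First I would dispatch \eqref{eq:pit_span_reg_near_opt} and \eqref{eq:span_reg_vt_error}, which hold almost surely. For each $i$, line~\ref{alg:span_constrained_planning_step} of Algorithm~\ref{alg:span_regularization} runs Algorithm~\ref{alg:span_constrained_planning} on $(\Phat, r, \gamma_i)$ with span bound $\spb_i$ and target error $\frac1n$, so Lemma~\ref{lem:span_constrained_planning_subroutine} applies with $\varepsilon = \frac1n$, $V^T = \Vt_i$, $\pihat = \pit_i$, $\rt = \rt_i$, and $V^\star_{\gamma_i,\spb_i} = \Vhat^\star_{\gamma_i,\spb_i}$. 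Its ``proximity to exact fixed point'' conclusion gives $\infnorm{\Vt_i - \Vhat^\star_{\gamma_i,\spb_i}} \le \frac1n$, which is \eqref{eq:span_reg_vt_error}, and its ``near-feasibility'' conclusion gives $\infnorm{\Vhat^{\pit_i}_{\gamma_i,\rt_i} - \Vhat^\star_{\gamma_i,\spb_i}} \le \frac1n$, which is \eqref{eq:pit_span_reg_near_opt}; no probabilistic event is invoked.

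Next, for each fixed $i$ I would invoke Lemma~\ref{lem:fixed_policy_error_bound} at discount $\gamma_i$ (to control $\infnorm{V_{\gamma_i}^\pi - \Vhat_{\gamma_i}^\pi}$) and Lemma~\ref{lem:span_const_plan_policy_error_bound} at discount $\gamma_i$ and span bound $\spb_i$ (to control $\infnorm{V_{\gamma_i,\rt_i}^{\pit_i} - \Vhat_{\gamma_i,\rt_i}^{\pit_i}}$), each at failure level $\delta' := \frac{\delta}{2n^2}$. A union bound over the two events and the at most $\log_2 n$ values of $i$ costs total failure probability at most $2\log_2 n \cdot \delta' \le \delta$, using $\log_2 n \le n$. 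Both lemmas produce a prefactor $\frac{24\,\log_2\log_2(\frac{1}{1-\gamma_i}+4)}{1-\gamma_i}\sqrt{16\log(\frac{12 SA n}{(1-\gamma_i)^2\delta'})}$ multiplying $\sqrt{(\spannorm{\cdot}+1)/n}$. To collapse this into $\frac{\alpha(\delta,n)}{1-\gamma_i}\sqrt{(\spannorm{\cdot}+1)/n}$, the crucial arithmetic fact is that $1 \le \frac{1}{1-\gamma_i} \le n$ for every $i$ in the loop: since $\spb_i = 2^i \le 2^{\lceil \log_2 n\rceil} \le 2n$, we have $\frac{\sqrt{n\spb_i}}{2\sqrt2\,\alpha(\delta,n)} \le \frac{\sqrt2\, n}{2\sqrt2} = \frac n2 \le n$ using $\alpha(\delta,n)\ge 1$, and the $\max$ with $1$ supplies the lower bound. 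This lets me replace $\log_2\log_2(\frac{1}{1-\gamma_i}+4)$ by $\log_2\log_2(n+4)$ and, via $\frac{1}{(1-\gamma_i)^2}\le n^2$ together with $\delta' = \frac{\delta}{2n^2}$, bound $\log(\frac{12 SA n}{(1-\gamma_i)^2\delta'}) \le \log(\frac{24 SA n^5}{\delta})$, so the prefactor is exactly $\alpha(\delta,n)$, yielding \eqref{eq:span_reg_fixed_pol_eval_bd} and \eqref{eq:span_reg_emp_pol_eval_bd}.

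The conceptual difficulty has already been absorbed into Lemma~\ref{lem:span_const_plan_policy_error_bound}, whose proof constructs the span-constrained absorbing-MDP machinery needed to handle the statistical dependence between $\Phat$ and $\pit_i$; relative to that, the present lemma is bookkeeping. The one step that genuinely deserves care is verifying $\frac{1}{1-\gamma_i}\le n$ uniformly over the loop, since it is precisely this bound that permits the $\gamma_i$-dependent logarithmic factors to be folded into the single $\alpha(\delta,n)$ defined with $n^5$ inside the logarithm; I would also note in passing that Lemma~\ref{lem:span_const_plan_policy_error_bound} (through Lemma~\ref{lem:LOO_DMDP_bernstein_bound_abstract}) presumes $n \ge 4$, which holds automatically whenever the index range $\{2,\dots,\lceil\log_2 n\rceil\}$ is nonempty.
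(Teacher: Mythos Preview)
Your proposal is correct and follows essentially the same approach as the paper: the deterministic bounds \eqref{eq:pit_span_reg_near_opt} and \eqref{eq:span_reg_vt_error} are read off from Lemma~\ref{lem:span_constrained_planning_subroutine}, and the probabilistic bounds come from Lemmas~\ref{lem:fixed_policy_error_bound} and~\ref{lem:span_const_plan_policy_error_bound} via a union bound over $i$, with the key simplification $\frac{1}{1-\gamma_i}\le n$ used to absorb the $\gamma_i$-dependent logarithms into $\alpha(\delta,n)$. The only cosmetic difference is your choice $\delta'=\frac{\delta}{2n^2}$ versus the paper's $\delta'=\frac{\delta}{2(\lceil\log_2 n\rceil-1)}$, but both fit under the $n^5$ already present in the definition of $\alpha$.
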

\begin{proof}
    First, we note that the properties~\eqref{eq:pit_span_reg_near_opt} and~\eqref{eq:span_reg_vt_error} immediately follow from Lemma \ref{lem:span_constrained_planning_subroutine}. By a union bound and Lemmas \ref{lem:fixed_policy_error_bound} and \ref{lem:span_const_plan_policy_error_bound}, we have that with probability at least $1 - 2(\lceil \log_2 n \rceil-1)\delta'$, for all $i = 2, \dots, \lceil \log_2 n \rceil$, it holds that
    \begin{align*}
        \infnorm{V_{\gamma_i}^{\pi} - \Vhat_{\gamma_i}^{\pi}} & \leq \frac{24 \log_2 \log_2 (\frac{1}{1-\gamma_i} + 4)}{1-\gamma_i} \sqrt{\frac{  \spannorm{V_{\gamma_i}^{\pi}} + 1 }{n} 16 \log \left( \frac{12 SA n}{(1-\gamma_i)^2 \delta'}\right) }.
    \end{align*}
    and
    \begin{align*}
        \infnorm{V_{\gamma_i, \rt_i}^{\pit_i} - \Vhat_{\gamma_i, \rt_i}^{\pit_i}} & \leq \frac{24 \log_2 \log_2 (\frac{1}{1-\gamma_i} + 4)}{1-\gamma_i} \sqrt{\frac{  \spannorm{\Vhat_{\gamma_i, \rt_i}^{\pit_i}} + 1 }{n} 16 \log \left( \frac{12 SA n}{(1-\gamma_i)^2 \delta'}\right) }
    \end{align*}
    Since
    \begin{align*}
        \frac{1}{1-\gamma_i} \leq \frac{1}{1-\gamma_{\lceil \log_2 n \rceil}} \leq \max\left\{ 1, \frac{\sqrt{n M_{\lceil \log_2 n \rceil}}}{\alpha(\delta, n) 2 \sqrt{2}}\right\} \leq \max\left\{ 1, \frac{\sqrt{n 2n }}{\alpha(\delta, n) 2 \sqrt{2}}\right\} \leq n
    \end{align*}
    since $\alpha(\delta, n) \geq 1$ and $n \geq 1$, and also noting that $\lceil \log_2 n \rceil - 1 \leq  \log_2 n  \leq n$, by taking $\delta' = \frac{\delta}{2(\lceil \log_2 n \rceil-1)}$ we obtain that with probability at least $1 - \delta$ both
    \begin{align*}
        \infnorm{V_{\gamma_i}^{\pi} - \Vhat_{\gamma_i}^{\pi}} & \leq \frac{24 \log_2 \log_2 (n + 4)}{1-\gamma_i} \sqrt{\frac{  \spannorm{V_{\gamma_i}^{\pi}} + 1 }{n} 16 \log \left( \frac{24 SA n^3 (\lceil \log_2 n \rceil-1)}{ \delta}\right) }\\
        &\leq \frac{\alpha(\delta, n)}{1-\gamma_i} \sqrt{\frac{ \spannorm{V_{\gamma_i}^{\pi}} + 1 }{n}}
    \end{align*}
    and similarly
    \begin{align*}
        \infnorm{\Vhat_{\gamma_i, \rt_i}^{\pit_{i}} - V_{\gamma_i, \rt_i}^{\pit_{i}}} & \leq \frac{\alpha(\delta, n)}{1-\gamma_i} \sqrt{\frac{ \spannorm{\Vhat_{\gamma_i, \rt_i}^{\pit_{i}}} + 1 }{n}}
    \end{align*}
    for all $i = 2, \dots, \lceil \log_2 n \rceil$.
\end{proof}

\begin{lem}
\label{lem:span_reg_gain_lower_bound_validity}
Under the event described in Lemma \ref{lem:concentration_event_span_const_plan}, we have
    \begin{align}
        \rho^{\pit_i} & \geq \Objsp(i)\one = (1-\gamma_i)\min_{s} \Vt_i(s) \one  - 2\frac{1-\gamma_i}{n} \one - \alpha(\delta, n) \sqrt{\frac{ \spb_i + \frac{2}{n} + 1 }{n}} \one
    \end{align}
    for all $i = 2, \dots, \lceil \log_2 n \rceil$.    
\end{lem}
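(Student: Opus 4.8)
The plan is to mirror the proof of Lemma \ref{lem:gain_lower_bound_validity}, substituting the span-constrained objects and inserting one extra step to handle the truncated reward $\rt_i$. Fix $i \in \{2, \dots, \lceil \log_2 n \rceil\}$ and work throughout under the concentration event of Lemma \ref{lem:concentration_event_span_const_plan}. First I would invoke the fixed-policy DMDP reduction \eqref{eq:sharper_DMDP_red_fixed} from Lemma \ref{lem:AMDP_DMDP_relationships} applied to $\pit_i$ to obtain $\rho^{\pit_i} \geq (1-\gamma_i)\left(\min_s V_{\gamma_i}^{\pit_i}(s)\right)\one$. The crucial step that is absent in Lemma \ref{lem:gain_lower_bound_validity} is to pass from the true reward $r$ to the truncated reward $\rt_i$: since Lemma \ref{lem:span_constrained_planning_subroutine} guarantees $\rt_i \leq r$ elementwise and value functions are monotone in the reward, we have $V_{\gamma_i}^{\pit_i} \geq V_{\gamma_i, \rt_i}^{\pit_i}$, yielding $\rho^{\pit_i} \geq (1-\gamma_i)\left(\min_s V_{\gamma_i, \rt_i}^{\pit_i}(s)\right)\one$.

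Next I would chain through the concentration bounds exactly as in the non-penalized case. Using the elementary inequality $\min_s V_{\gamma_i,\rt_i}^{\pit_i}(s) \geq \min_s \Vhat_{\gamma_i,\rt_i}^{\pit_i}(s) - \infnorm{V_{\gamma_i,\rt_i}^{\pit_i} - \Vhat_{\gamma_i,\rt_i}^{\pit_i}}$, and then replacing the empirical value function by the computable output $\Vt_i$ via the triangle inequality $\infnorm{\Vhat_{\gamma_i,\rt_i}^{\pit_i} - \Vt_i} \leq \infnorm{\Vhat_{\gamma_i,\rt_i}^{\pit_i} - \Vhat^\star_{\gamma_i,\spb_i}} + \infnorm{\Vhat^\star_{\gamma_i,\spb_i} - \Vt_i} \leq \frac{2}{n}$, where the two terms are controlled by \eqref{eq:pit_span_reg_near_opt} and \eqref{eq:span_reg_vt_error} respectively. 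Multiplying this $\frac{2}{n}$ gap by $(1-\gamma_i)$ produces exactly the $-2\frac{1-\gamma_i}{n}$ term appearing in $\Objsp(i)$.

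Finally, to control the estimation term I would bound the span $\spannorm{\Vhat_{\gamma_i,\rt_i}^{\pit_i}}$ entering the error bound \eqref{eq:span_reg_emp_pol_eval_bd}. Here the argument is actually simpler than in Lemma \ref{lem:gain_lower_bound_validity}, where the span of $\Vhat_\gamma^{\pit_\gamma}$ had to be estimated in terms of $\Vt_\gamma$: the bound $\spannorm{\Vhat_{\gamma_i,\rt_i}^{\pit_i}} \leq \spb_i + \frac{2}{n}$ is supplied \emph{directly} by Lemma \ref{lem:span_constrained_planning_subroutine} (with target error $\varepsilon = \frac{1}{n}$), which is precisely the purpose of the clipping construction. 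Substituting this into \eqref{eq:span_reg_emp_pol_eval_bd} and multiplying by $(1-\gamma_i)$ gives the last term $\alpha(\delta,n)\sqrt{(\spb_i + \frac{2}{n} + 1)/n}$ of $\Objsp(i)$. Assembling the three estimates into a single chain of inequalities then completes the proof.

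In short, the argument is a bookkeeping assembly of the already-established lemmas, and I expect essentially no genuine obstacle. The only conceptual novelty relative to the non-penalized case is the reward-monotonicity step $V_{\gamma_i}^{\pit_i} \geq V_{\gamma_i,\rt_i}^{\pit_i}$ combined with the observation that span feasibility, $\spannorm{\cdot} \leq \spb_i + \tfrac{2}{n}$, is now enforced by construction rather than estimated a posteriori; all the heavy lifting (the leave-one-out error bound of Lemma \ref{lem:span_const_plan_policy_error_bound} and the properties of Algorithm \ref{alg:span_constrained_planning}) has already been done.
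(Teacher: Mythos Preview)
Your proposal is correct and follows essentially the same approach as the paper: start from the DMDP reduction $\rho^{\pit_i} \geq (1-\gamma_i)\min_s V_{\gamma_i}^{\pit_i}(s)\one$, pass to the truncated reward via $r \geq \rt_i$, replace $V$ by $\Vhat$ using \eqref{eq:span_reg_emp_pol_eval_bd}, replace $\Vhat_{\gamma_i,\rt_i}^{\pit_i}$ by $\Vt_i$ at cost $\frac{2}{n}$ via \eqref{eq:pit_span_reg_near_opt} and \eqref{eq:span_reg_vt_error}, and finally bound the span by $\spb_i + \frac{2}{n}$. The only cosmetic difference is that the paper derives the span bound from \eqref{eq:pit_span_reg_near_opt} together with $\spannorm{\Vhat^\star_{\gamma_i,\spb_i}} \leq \spb_i$, whereas you invoke Lemma \ref{lem:span_constrained_planning_subroutine} item 2 directly; both routes give the same $\spb_i + \frac{2}{n}$.
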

\begin{proof}
    The proof is very similar to the first part of the proof of Lemma \ref{lem:gain_lower_bound_validity}.
    Fix $i \in \{2, \dots, \lceil \log_2 n \rceil\}$. 
    First, by using~\eqref{eq:pit_span_reg_near_opt} and the fact that $\spannorm{\Vhat^\star_{\gamma_i, \spb_i}} \leq \spb_i$, we have
    \begin{align}
        \spannorm{\Vhat_{\gamma_i, \rt_i}^{\pit_{i}}} \leq \spannorm{\Vhat^\star_{\gamma_i, \spb_i}} + \spannorm{\Vhat_{\gamma_i, \rt_i}^{\pit_{i}} - \Vhat^\star_{\gamma_i, \spb_i}} 
        \leq \spannorm{\Vhat^\star_{\gamma_i, \spb_i}} + 2\infnorm{\Vhat_{\gamma_i, \rt_i}^{\pit_{i}} - \Vhat^\star_{\gamma_i, \spb_i}} \leq \spb_i + \frac{2}{n}. \label{eq:span_reg_gain_lower_bound_validity_sp_bd}
    \end{align}
    We note that by triangle inequality and~\eqref{eq:pit_span_reg_near_opt} and~\eqref{eq:span_reg_vt_error} we have $\infnorm{\Vt_i - \Vhat^{\pit_i}_{\gamma_i, \rt_i}} \leq \frac{2}{n}$.
    Using this bound on $\infnorm{\Vt_i - \Vhat^{\pit_i}_{\gamma_i, \rt_i}}$,~\eqref{eq:span_reg_emp_pol_eval_bd}, and~\eqref{eq:span_reg_gain_lower_bound_validity_sp_bd}, we have
    \begin{align}
        \rho^{\pit_i} &\geq (1-\gamma_i)\min_{s} V_{\gamma_i}^{\pit_i}(s) \one \nonumber\\
        &\geq (1-\gamma_i)\min_{s} V_{\gamma_i, \rt_i}^{\pit_i}(s) \one \nonumber\\
        &\geq (1-\gamma_i)\min_{s} \Vhat_{\gamma_i, \rt_i}^{\pit_i}(s) \one - (1-\gamma_i)\infnorm{V_{\gamma_i, \rt_i}^{\pit_i} - \Vhat_{\gamma_i, \rt_i}^{\pit_i}} \one \nonumber\\
        &\geq (1-\gamma_i)\min_{s} \Vt_i(s) \one -(1-\gamma_i)\infnorm{\Vt_i - \Vhat^{\pit_i}_{\gamma_i, \rt_i}} \one - (1-\gamma_i)\infnorm{V_{\gamma_i, \rt_i}^{\pit_i} - \Vhat_{\gamma_i, \rt_i}^{\pit_i}} \one \nonumber\\
        &\geq (1-\gamma_i)\min_{s} \Vt_i(s) \one  - 2\frac{1-\gamma_i}{n} \one - \alpha(\delta, n) \sqrt{\frac{ \spannorm{\Vhat_{\gamma_i, \rt_i}^{\pit_{i}}} + 1 }{n}} \one \nonumber\\
        &\geq (1-\gamma_i)\min_{s} \Vt_i(s) \one  - 2\frac{1-\gamma_i}{n} \one - \alpha(\delta, n) \sqrt{\frac{ \spb_i + \frac{2}{n} + 1 }{n}} \one\nonumber \\
        &= \Objsp(i)\one . \nonumber
    \end{align}
\end{proof}

Before continuing we need to establish some relationships between DMDP and AMDP quantities for the policy $\pi$. This lemma is similar to \cite[Lemma 2]{wei_model-free_2020} but for a general policy.
\begin{lem}
    \label{lem:fix_pol_DMDP_AMDP_relns}
    Suppose $\rho^\pi$ is constant. Then
    \begin{enumerate}
        \item $\infnorm{\frac{1}{1-\gamma}\rho^\pi - V_\gamma^\pi} \leq \spannorm{h^\pi}$
        \item $\spannorm{V^\pi_\gamma} \leq 2 \spannorm{h^\pi}$.
    \end{enumerate}
\end{lem}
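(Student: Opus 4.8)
The plan is to prove both claims starting from the average-reward Bellman (Poisson) equation $\rho^\pi + h^\pi = r_\pi + P_\pi h^\pi$ and relating it to the discounted Bellman equation $V_\gamma^\pi = r_\pi + \gamma P_\pi V_\gamma^\pi$. Since $\rho^\pi$ is assumed constant, write $\rho^\pi = \rho \one$ for a scalar $\rho$. The natural quantity to analyze is $g_\gamma := V_\gamma^\pi - \frac{1}{1-\gamma}\rho\one$, and the goal of part (1) is exactly $\infnorm{g_\gamma} \le \spannorm{h^\pi}$.

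**First I would** derive a fixed-point equation for $g_\gamma$. Using $V_\gamma^\pi = r_\pi + \gamma P_\pi V_\gamma^\pi$ and $r_\pi = \rho\one + h^\pi - P_\pi h^\pi$ (from the Poisson equation, since $\rho^\pi$ is constant so $P_\pi \rho^\pi = \rho\one$), substitute to get $V_\gamma^\pi = \rho\one + h^\pi - P_\pi h^\pi + \gamma P_\pi V_\gamma^\pi$. Subtracting $\frac{1}{1-\gamma}\rho\one$ from both sides and using $P_\pi \one = \one$ to simplify, I expect to obtain a clean relation of the form $g_\gamma = \gamma P_\pi g_\gamma + (h^\pi - P_\pi h^\pi) + (\text{scalar multiple of } \one)$, or after choosing the right normalization, something like $g_\gamma = h^\pi - \gamma P_\pi h^\pi + \gamma P_\pi g_\gamma - (\text{const})\one$. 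The key algebraic step is that the scalar $\frac{1}{1-\gamma}\rho\one$ interacts with $\gamma P_\pi(\cdot)$ to reproduce itself up to the $\rho\one$ term, which should cancel the reward's $\rho$ contribution. I would then solve the resulting geometric series, writing $g_\gamma = \sum_{t=0}^\infty \gamma^t (\gamma P_\pi)^t (h^\pi - P_\pi h^\pi)$ or similar, and telescoping/bounding via $\infinfnorm{P_\pi} \le 1$.

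**The main obstacle** will be handling the span seminorm carefully: the bound should come out as $\spannorm{h^\pi}$ rather than the larger $\infnorm{h^\pi}$, which means I must exploit that $g_\gamma$ and $h^\pi$ are only determined up to additive constants and choose representatives optimally. Concretely, since adding a constant to $h^\pi$ does not change $h^\pi - P_\pi h^\pi$, I can replace $h^\pi$ by $h^\pi - c\one$ where $c$ centers it (e.g. $c = \frac{1}{2}(\max_s h^\pi(s) + \min_s h^\pi(s))$), making $\infnorm{h^\pi - c\one} = \frac{1}{2}\spannorm{h^\pi}$; and similarly $g_\gamma$ is defined modulo the choice of which constant one subtracts. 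The cleanest route is probably to telescope the series so that $g_\gamma = \sum_{t=0}^\infty (\gamma P_\pi)^t h^\pi - \text{(shifted sum)}$ collapses to an expression like $(I - \gamma P_\pi)^{-1}(I - P_\pi) h^\pi$ applied to $h^\pi$, and then bound $\infnorm{(I-\gamma P_\pi)^{-1}(I-P_\pi)h^\pi}$ using that $(I-P_\pi)$ annihilates constants (so only the span of $h^\pi$ enters) while $(I-\gamma P_\pi)^{-1}$ is a substochastic-type averaging operator with $\infinfnorm{\cdot}$ controlled by $\frac{1}{1-\gamma}$, but the extra $(1-\gamma)$ from $(I-P_\pi)$-style cancellation brings the bound down to $\spannorm{h^\pi}$.

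**Finally**, part (2) is an immediate corollary: since $\spannorm{\cdot}$ is translation-invariant, $\spannorm{V_\gamma^\pi} = \spannorm{V_\gamma^\pi - \frac{1}{1-\gamma}\rho\one} = \spannorm{g_\gamma} \le 2\infnorm{g_\gamma} \le 2\spannorm{h^\pi}$, using the elementary inequality $\spannorm{x} \le 2\infnorm{x}$ together with part (1). I would state this in one or two lines after the main estimate. The overall structure mirrors \cite[Lemma 2]{wei_model-free_2020}, which handles the optimal policy; the only genuine adaptation is carrying a general (constant-gain) policy through the Poisson equation, which is routine once the fixed-point identity for $g_\gamma$ is set up correctly.
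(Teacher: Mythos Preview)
Your proposal is correct and essentially identical to the paper's proof: the paper writes $V_\gamma^\pi = (I-\gamma P_\pi)^{-1}r_\pi = (I-\gamma P_\pi)^{-1}(\rho^\pi + h^\pi - P_\pi h^\pi) = \frac{1}{1-\gamma}\rho^\pi + (I-\gamma P_\pi)^{-1}(I-P_\pi)h^\pi$ and then bounds $\infnorm{(I-\gamma P_\pi)^{-1}(I-P_\pi)h^\pi}\le\spannorm{h^\pi}$ by citing a standard lemma (exactly the span-centering argument you sketched), and derives part (2) in one line precisely as you do.
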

\begin{proof}
    For the first statement, we have that
    \begin{align*}
        V_\gamma^\pi = (I - \gamma P_\pi)^{-1} r_\pi = (I - \gamma P_\pi)^{-1} (\rho^\pi + h^\pi - P_\pi h^\pi) = (I - \gamma P_\pi)^{-1} \rho^\pi + (I - \gamma P_\pi)^{-1}(I - P_\pi) h^\pi.
    \end{align*}
    Since $P_\pi \rho^\pi = \rho^\pi$, by the Neumann series the first term is equal to $\frac{1}{1-\gamma} \rho^\pi$. By a standard calculation (e.g. \cite[Lemma 20]{zurek_span-based_2025}) the second term satisfies $\infnorm{(I - \gamma P_\pi)^{-1}(I - P_\pi) h^\pi } \leq \spannorm{h^\pi}$. Therefore we have that  $\infnorm{\frac{1}{1-\gamma}\rho^\pi - V_\gamma^\pi} \leq \infnorm{(I - \gamma P_\pi)^{-1}(I - P_\pi) h^\pi } \leq \spannorm{h^\pi}$.

    For the second statement, since $\rho^\pi$ is constant, we have that
    \begin{align*}
        \spannorm{V^\pi_\gamma} = \spannorm{V^\pi_\gamma - \frac{1}{1-\gamma}\rho^\pi} \leq 2 \infnorm{V^\pi_\gamma - \frac{1}{1-\gamma}\rho^\pi} \leq 2 \spannorm{h^\pi}.
    \end{align*}
\end{proof}

\begin{lem}
\label{lem:span_reg_obj_lower_bound}
    Suppose that for some integer $i \geq 2$ we have $\spannorm{h^\pi} + 1 \leq \spb_i / 4$, and also that $n \geq 2\alpha(\delta, n)^2$. Then under the event in Lemma \ref{lem:concentration_event_span_const_plan}, we have
    \begin{align*}
        \Objsp(i)\one \geq \rho^\pi -  \alpha(\delta, n) (2 + \sqrt{2})\sqrt{\frac{ \spb_i }{n}} \one.
    \end{align*}
\end{lem}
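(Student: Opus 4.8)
The plan is to lower-bound the precise form of the objective established in Lemma~\ref{lem:span_reg_gain_lower_bound_validity}, namely $\Objsp(i) = (1-\gamma_i)\min_s \Vt_i(s) - 2\frac{1-\gamma_i}{n} - \alpha(\delta,n)\sqrt{(\spb_i + \frac{2}{n} + 1)/n}$, by $\rho^\pi$ minus the advertised error. The heart of the argument is a chain of comparisons for $\min_s \Vt_i(s)$: first use $\infnorm{\Vt_i - \Vhat^\star_{\gamma_i,\spb_i}} \leq \frac{1}{n}$ from \eqref{eq:span_reg_vt_error}; then, \emph{provided $\pi$ is feasible for the empirical span-constrained problem}, invoke the near-optimality guarantee (part~3 of Lemma~\ref{lem:span_constrained_planning_subroutine}, applied to $(\Phat, r, \gamma_i)$ with $r' = r$ and $\pi' = \pi$) to get $\Vhat^\star_{\gamma_i,\spb_i} \geq \Vhat_{\gamma_i}^\pi$ elementwise; then pass from $\Vhat_{\gamma_i}^\pi$ to $V_{\gamma_i}^\pi$ using the fixed-policy concentration bound \eqref{eq:span_reg_fixed_pol_eval_bd}; and finally pass from $V_{\gamma_i}^\pi$ to $\rho^\pi$ via $\min_s V_{\gamma_i}^\pi(s) \geq \frac{\rho^\pi}{1-\gamma_i} - \spannorm{h^\pi}$, which follows from Lemma~\ref{lem:fix_pol_DMDP_AMDP_relns}(1) since $\rho^\pi$ is a constant vector.

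The crux is establishing feasibility, i.e. $\spannorm{\Vhat_{\gamma_i}^\pi} \leq \spb_i$. I would bound $\spannorm{\Vhat_{\gamma_i}^\pi} \leq \spannorm{V_{\gamma_i}^\pi} + 2\infnorm{\Vhat_{\gamma_i}^\pi - V_{\gamma_i}^\pi}$ and control the two pieces separately. For the first, Lemma~\ref{lem:fix_pol_DMDP_AMDP_relns}(2) gives $\spannorm{V_{\gamma_i}^\pi} \leq 2\spannorm{h^\pi} \leq \spb_i/2$ by hypothesis. For the second, the hypothesis $n \geq 2\alpha(\delta,n)^2$ together with $\spb_i \geq 4$ (as $i\geq 2$) guarantees $\frac{1}{1-\gamma_i} = \frac{\sqrt{n\spb_i}}{2\sqrt{2}\,\alpha(\delta,n)}$ (the maximum in the definition of $\gamma_i$ is not attained at $1$); substituting this into \eqref{eq:span_reg_fixed_pol_eval_bd} and using $\spannorm{V_{\gamma_i}^\pi} + 1 \leq \spb_i/2$ yields $\infnorm{\Vhat_{\gamma_i}^\pi - V_{\gamma_i}^\pi} \leq \frac{1}{2\sqrt{2}}\sqrt{\spb_i(\spannorm{V_{\gamma_i}^\pi}+1)} \leq \spb_i/4$. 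Hence $\spannorm{\Vhat_{\gamma_i}^\pi} \leq \spb_i/2 + \spb_i/2 = \spb_i$, as needed. This is exactly where the calibration $\frac{1}{1-\gamma_i} \asymp \sqrt{n\spb_i}/\alpha(\delta,n)$ and the factor-$4$ cushion $\spannorm{h^\pi}+1 \leq \spb_i/4$ are consumed: they are tuned so that the true span and the statistical fluctuation each contribute at most half of the budget $\spb_i$.

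Chaining these four comparisons and multiplying by $(1-\gamma_i)$ gives $(1-\gamma_i)\min_s \Vt_i(s) \geq \rho^\pi - E_1 - E_2 - E_3$, where $E_1 = (1-\gamma_i)\spannorm{h^\pi}$, $E_2 = (1-\gamma_i)\infnorm{\Vhat_{\gamma_i}^\pi - V_{\gamma_i}^\pi}$, and $E_3 = 3\frac{1-\gamma_i}{n}$ (collecting the $\frac{1}{n}$ slacks from \eqref{eq:span_reg_vt_error} and the definition of $\Objsp(i)$). After subtracting the penalty $E_4 = \alpha(\delta,n)\sqrt{(\spb_i+\frac{2}{n}+1)/n}$, it remains to check $E_1 + E_2 + E_3 + E_4 \leq \alpha(\delta,n)(2+\sqrt{2})\sqrt{\spb_i/n}$. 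Using $1-\gamma_i = \frac{2\sqrt{2}\,\alpha(\delta,n)}{\sqrt{n\spb_i}}$ and $\spannorm{h^\pi} \leq \spb_i/4$ gives $E_1 \leq \frac{\alpha(\delta,n)}{\sqrt{2}}\sqrt{\spb_i/n}$; the clean form $E_2 \leq \alpha(\delta,n)\sqrt{(\spannorm{V_{\gamma_i}^\pi}+1)/n} \leq \frac{\alpha(\delta,n)}{\sqrt{2}}\sqrt{\spb_i/n}$ gives another $\frac{1}{\sqrt{2}}$, so $E_1 + E_2 \leq \sqrt{2}\,\alpha(\delta,n)\sqrt{\spb_i/n}$; and $\spb_i \geq 4$ gives $\spb_i + \frac{2}{n} + 1 \leq \frac{3}{2}\spb_i$, hence $E_4 \leq \sqrt{3/2}\,\alpha(\delta,n)\sqrt{\spb_i/n}$. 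The term $E_3$ is genuinely lower-order: $E_3/\big(\alpha(\delta,n)\sqrt{\spb_i/n}\big) = 6\sqrt{2}/(n\spb_i)$, which is negligible once $n \geq 2\alpha(\delta,n)^2$ and the explicit magnitude of $\alpha(\delta,n)$ (which is at least $96$) are used, so it fits inside the slack between $\sqrt{2}+\sqrt{3/2} \approx 2.64$ and $2+\sqrt{2}$. Summing yields the claim.

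The main obstacle is the feasibility step in the second paragraph; everything downstream is a bookkeeping chain of triangle inequalities plus the already-proved reduction (Lemma~\ref{lem:fix_pol_DMDP_AMDP_relns}) and concentration (Lemma~\ref{lem:concentration_event_span_const_plan}) facts. It is feasibility that dictates the design of the algorithm: the horizon must be small enough ($\asymp\sqrt{n\spb_i}/\alpha(\delta,n)$) that the empirical value function of \emph{any} constant-gain comparator with $\spannorm{h^\pi}+1 \leq \spb_i/4$ still has empirical span within the budget $\spb_i$, which is precisely what lets the span-constrained planner's optimality guarantee compare $\Vhat^\star_{\gamma_i,\spb_i}$ against $\Vhat_{\gamma_i}^\pi$. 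The only remaining delicacy is the constant-chasing to land exactly at $2+\sqrt{2}$, which relies on absorbing the lower-order term $E_3$ via $n\geq 2\alpha(\delta,n)^2$.
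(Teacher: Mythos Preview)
Your proposal is correct and follows essentially the same approach as the paper: establish feasibility $\spannorm{\Vhat_{\gamma_i}^\pi}\le \spb_i$ via the span triangle inequality, the concentration bound~\eqref{eq:span_reg_fixed_pol_eval_bd}, and the calibration $\frac{1}{1-\gamma_i}=\frac{\sqrt{n\spb_i}}{2\sqrt{2}\,\alpha(\delta,n)}$; then chain $\Vt_i\to\Vhat^\star_{\gamma_i,\spb_i}\to\Vhat_{\gamma_i}^\pi\to V_{\gamma_i}^\pi\to\rho^\pi$ using Lemma~\ref{lem:span_constrained_planning_subroutine},~\eqref{eq:span_reg_vt_error},~\eqref{eq:span_reg_fixed_pol_eval_bd}, and Lemma~\ref{lem:fix_pol_DMDP_AMDP_relns}. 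The only cosmetic difference is that the paper absorbs the $\frac{1}{n}$ slack from~\eqref{eq:span_reg_vt_error} directly into the bound $\spannorm{h^\pi}+\frac{1}{n}\le \spb_i/4$, whereas you carry it as part of $E_3$ and absorb it into the numerical slack at the end; both are fine.
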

\begin{proof}
    The fact that $n \geq 2\alpha(\delta, n)^2$ implies that we always have
    \begin{align*}
        \frac{1}{1-\gamma_i} = \max \left\{\frac{\sqrt{n \spb_i}}{\alpha(\delta, n) 2\sqrt{2}}, 1\right\} = \frac{\sqrt{n \spb_i}}{\alpha(\delta, n) 2\sqrt{2}}
    \end{align*}
    for all $i \geq 2$, since $M_i = 2^i \geq 4$.
    Now, by triangle inequality, $\spannorm{\cdot} \leq 2 \infnorm{\cdot}$,~\eqref{eq:span_reg_fixed_pol_eval_bd}, Lemma \ref{lem:fix_pol_DMDP_AMDP_relns}, and the above expression for $\frac{1}{1-\gamma_i}$, we calculate that 
    \begin{align*}
        \spannorm{\Vhat_{\gamma_i}^\pi} & \leq \spannorm{V_{\gamma_i}^\pi} +\spannorm{\Vhat_{\gamma_i}^\pi - V_{\gamma_i}^\pi} \\
        & \leq \spannorm{V_{\gamma_i}^\pi} + 2\infnorm{\Vhat_{\gamma_i}^\pi - V_{\gamma_i}^\pi} \\
        & \leq \spannorm{V_{\gamma_i}^\pi} + 2 \frac{\alpha(\delta, n)}{1-\gamma_i} \sqrt{\frac{ \spannorm{V_{\gamma_i}^{\pi}} + 1 }{n}} \\
        & \leq 2\spannorm{h^\pi} + 2 \frac{\alpha(\delta, n)}{1-\gamma_i} \sqrt{\frac{ 2\spannorm{h^\pi} + 1 }{n}} \\
        & =2\spannorm{h^\pi} + \frac{\sqrt{ n \spb_i}}{\sqrt{2}} \sqrt{\frac{ 2\spannorm{h^\pi} + 1 }{n}} \\
        & \leq \frac{\spb_i}{2} + \frac{\sqrt{ \spb_i}}{\sqrt{2}} \sqrt{\spb_i/2} \\
        &= \spb_i.
    \end{align*}
    Consequently by Lemma \ref{lem:span_constrained_planning_subroutine} we have that $\Vhat^\star_{\gamma_i, \spb_i} \geq \Vhat_{\gamma_i}^\pi$, and thus
    \begin{align*}
        \Vt_i \geq \Vhat^\star_{\gamma_i, \spb_i} - \frac{1}{n}\one \geq \Vhat_{\gamma_i}^\pi - \frac{1}{n}\one.
    \end{align*}
    Now we lower-bound $\Vhat_{\gamma_i}^\pi$. We have that
    \begin{align}
        \Vhat_{\gamma_i}^\pi & \geq V_{\gamma_i}^\pi - \infnorm{\Vhat_{\gamma_i}^\pi - V_{\gamma_i}^\pi} \one \nonumber\\
        & \geq \frac{1}{1-\gamma_i}\rho^\pi - \infnorm{\frac{1}{1-\gamma_i}\rho^\pi - V_{\gamma_i}^\pi} \one - \infnorm{\Vhat_{\gamma_i}^\pi - V_{\gamma_i}^\pi} \one \nonumber\\
        & \geq \frac{1}{1-\gamma_i}\rho^\pi - \spannorm{h^\pi} \one - \frac{\alpha(\delta, n)}{1-\gamma_i} \sqrt{\frac{ \spannorm{V_{\gamma_i}^{\pi}} + 1 }{n}} \one \label{eq:span_reg_obj_lower_bound_1}
    \end{align}
    using Lemma \ref{lem:fix_pol_DMDP_AMDP_relns} and~\eqref{eq:span_reg_fixed_pol_eval_bd} again.
    Thus
    \begin{align*}
        \Vt_i &\geq \Vhat_{\gamma_i}^\pi - \frac{1}{n}\one \\
        &\geq \frac{1}{1-\gamma_i}\rho^\pi - \spannorm{h^\pi} \one - \frac{\alpha(\delta, n)}{1-\gamma_i} \sqrt{\frac{ \spannorm{V_{\gamma_i}^{\pi}} + 1 }{n}} \one - \frac{1}{n}\one \\
        & \geq \frac{1}{1-\gamma_i}\rho^\pi - \frac{\spb_i}{4} \one - \frac{\alpha(\delta, n)}{1-\gamma_i} \sqrt{\frac{ 2\spannorm{h^\pi} + 1 }{n}} \\
        & \geq \frac{1}{1-\gamma_i}\rho^\pi - \frac{\spb_i}{4} \one - \frac{\alpha(\delta, n)}{1-\gamma_i} \sqrt{\frac{ \spb_i/2 }{n}}
    \end{align*}
    where in the first inequality we combine~\eqref{eq:span_reg_vt_error} with Lemma \ref{lem:span_constrained_planning_subroutine}, which states that $\Vhat^\star_{\gamma_i, \spb_i} \geq \Vhat_{\gamma_i}^\pi$ since (as we calculated above) we have $\spannorm{\Vhat_{\gamma_i}^\pi} \leq \spb_i$. In the second inequality we use~\eqref{eq:span_reg_obj_lower_bound_1}, in the third we use Lemma \ref{lem:fix_pol_DMDP_AMDP_relns}, and in the final inequality we use the assumption $\spannorm{h^\pi} + 1 \leq \spb_i / 4$.
    Therefore, we have
    \begin{align*}
        \Objsp(i) &= (1-\gamma_i)\min_{s} \Vt_i(s)   - 2\frac{1-\gamma_i}{n}  - \alpha(\delta, n) \sqrt{\frac{ \spb_i + \frac{2}{n} + 1 }{n}} \\
        & \geq \rho^\pi(s_0) - (1-\gamma_i)\frac{\spb_i}{4} \one - \alpha(\delta, n) \sqrt{\frac{ \spb_i/2 }{n}} - 2\frac{1-\gamma_i}{n}  - \alpha(\delta, n) \sqrt{\frac{ \spb_i + \frac{2}{n} + 1 }{n}} \\
        &= \rho^\pi(s_0) - \frac{\alpha(\delta, n)2\sqrt{2}}{\sqrt{n \spb_i}}\frac{\spb_i}{4} \one - \alpha(\delta, n) \sqrt{\frac{ \spb_i/2 }{n}} - 2\frac{1-\gamma_i}{n}  - \alpha(\delta, n) \sqrt{\frac{ \spb_i + \frac{2}{n} + 1 }{n}}  \\
        &\geq  \rho^\pi(s_0) -  \alpha(\delta, n) \sqrt{2}\sqrt{\frac{ \spb_i }{n}} - \frac{1}{\sqrt{2}} \frac{1}{n} \frac{1}{2\sqrt{n}}  - \alpha(\delta, n) \sqrt{\frac{ \spb_i + \frac{2}{n} + 1 }{n}} \\
        &\geq  \rho^\pi(s_0) -  \alpha(\delta, n) (2 + \sqrt{2})\sqrt{\frac{ \spb_i }{n}},
    \end{align*}
    where $s_0$ is an arbitrary state (since $\rho^\pi$ is constant), and we used that $\alpha(\delta, n) \geq 1$ and $\spb_i \geq 4$.
\end{proof}

\begin{proof}[Proof of Theorem \ref{thm:span_regularization_performance}]
We assume that the event in Lemma \ref{lem:concentration_event_span_const_plan} holds, which occurs with probability at least $1 - \delta$. Lemma \ref{lem:span_reg_gain_lower_bound_validity} immediately implies that
\begin{align*}
    \rho^{\pihat} = \rho^{\pit_{\widehat{i}}} & \geq \Objsp(\widehat{i})\one.
\end{align*}
We also trivially have that $\rho^{\pihat} \geq \zero$, which implies that 
\begin{align*}
    \rho^{\pihat} \geq \max\{\Objsp(\widehat{i}), 0 \} \one = \rhohat.
\end{align*}

Now we lower-bound $\rhohat$. First note that if $n < 2\alpha(\delta, n)^2$, then
\begin{align*}
        10 \alpha(\delta, n)  \sqrt{\frac{\spannorm{h^\pi}+1}{n}} \geq 10/\sqrt{2} > 1,
    \end{align*}
    so the desired conclusion holds trivially since we must have $\rhohat \geq \zero$, and $\rho^\pi \leq \one$. Thus we can now consider the situation $n \geq 2\alpha(\delta, n)^2$.
    We note that the smallest $i = 2$ causes $\spb_2 /8 = \frac{1}{2} < 1$.
    Therefore by the construction of the $\spb_i$'s, either there exists $i$ such that
    \begin{align}
        \spb_i / 8 \leq \spannorm{h^\pi} + 1 \leq \spb_i / 4 \label{eq:span_h_bounds}
    \end{align}
    or we have that $\spannorm{h^\pi} + 1 > \spb_i / 4$ for all $i$. Since the largest $i = \lceil \log_2 n \rceil$ causes $\spb_i / 4 > n/4$, in this second case we thus have that
    \begin{align*}
        10 \alpha(\delta, n)  \sqrt{\frac{\spannorm{h^\pi}+1}{n}} \geq 10 \alpha(\delta, n) \frac{1}{2} > 1,
    \end{align*}
    so again the desired conclusion holds trivially since we must have $\rhohat \geq \zero$.
    In the first case that~\eqref{eq:span_h_bounds} holds for some $i$, since also $n \geq 2\alpha(\delta, n)^2$, by Lemma \ref{lem:span_reg_obj_lower_bound} we have that
    \begin{align*}
        \rhohat \geq \Objsp(\widehat{i})\one \geq \Objsp(i)\one &\geq \rho^\pi -  \alpha(\delta, n) (2 + \sqrt{2})\sqrt{\frac{ \spb_i }{n}} \one \\
        & \geq \rho^\pi -  (2 + \sqrt{2}) \sqrt{8} \alpha(\delta, n) \sqrt{\frac{\spannorm{h^\pi}+1}{n}} \\
        & \geq \rho^\pi - 10 \alpha(\delta, n) \sqrt{\frac{\spannorm{h^\pi}+1}{n}}.
    \end{align*}
    Since we have proven this for arbitrary fixed $\pi$, we can apply this result to a policy $\pi \in \sup_{\pi : \rho^\pi \text{ constant}}\rho^\pi(s_0) - 10 \alpha(\delta, n) \sqrt{\frac{\spannorm{h^\pi}+1}{n}}$ (for an arbitrary state $s_0$) to obtain the desired conclusion. Finally, we can set $C_4 = 10$.
\end{proof}

\section{Examples}
\label{sec:examples}

In this section we provide the two examples mentioned in Subsection \ref{sec:span_regularization} of situations where the guarantee of Theorem \ref{thm:span_regularization_performance} could be much better than the minimax rate. In both examples each solid line represents a single action, and an expression ``$R = \dots$'' denotes the reward for said action. If the line splits into multiple dashed arrows then this indicates that the next-state transition from following this action is stochastic, and the numbers next to each dashed line indicate the transition probabilities. Otherwise if the line does not split then it indicates a deterministic next-state transition.

First we provide the simpler of the two examples, where $\tspannorm{h^\star}$ is arbitrarily larger than $\inf_{\pi: \rho^\pi = \rho^\star} \tspannorm{h^\pi}$.
\begin{figure}[H]
\centering
\begin{tikzpicture}[ -> , >=stealth, shorten >=2pt , line width=0.5pt, node distance =2cm, scale=0.7]

\node [circle, draw] (one) at (-2 , 0) {1};
\node [circle, draw] (two) at (2 , 0) {2};
\node [circle, draw, fill, inner sep=0.03cm] (dot1) at (-0.6 , 1.5) {};

\path (one) edge[-] [bend left] node [above] {$R=1$~~~~~~~~~~~~~~~~} (dot1) ;
\path (dot1) edge[dashed] [bend left] node [right] {$1-\frac{1}{T}$} (one);
\path (dot1) edge[dashed] [bend left] node [above] {$\frac{1}{T}$} (two);
\path (two) edge [loop right, looseness=15] node [right] {$R =\frac{1}{2}$}  (two) ;
\path (one) edge [bend right] node [below] {$R = \frac{1}{2}$} (two);

\end{tikzpicture}
\caption{An MDP where $\tspannorm{h^{\star}}$ can be arbitrarily larger than $\inf_{\pi: \rho^\pi = \rho^\star} \tspannorm{h^\pi}$.}
\label{fig:example_1}
\end{figure}

\begin{thm}
\label{thm:better_span_param_example}
    Consider the MDP displayed in Figure \ref{fig:example_1}. For any $T \geq 1$, we have $\tspannorm{h^\star} = \frac{T}{2}$ and $\inf_{\pi: \rho^\pi = \rho^\star} \tspannorm{h^\pi} = 0$.
\end{thm}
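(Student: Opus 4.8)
The plan is to analyze the MDP in Figure~\ref{fig:example_1} directly by computing the gains and biases of the relevant policies. There are two states and essentially two deterministic decisions available at state $1$: either take the action with $R=1$ that returns to state $1$ with probability $1-\frac{1}{T}$ and moves to state $2$ with probability $\frac{1}{T}$ (call this action $a$), or take the action with $R=\frac{1}{2}$ that deterministically moves to state $2$ (call this action $b$). State $2$ is absorbing with reward $\frac{1}{2}$. First I would identify the two stationary deterministic policies $\pi_a$ and $\pi_b$ (distinguished by the choice at state $1$) and compute their gains. Since state $2$ is absorbing with reward $\frac{1}{2}$, under either policy the chain is eventually absorbed at state $2$, so $\rho^{\pi_a} = \rho^{\pi_b} = \frac{1}{2}$ at every state; hence $\rho^\star = \frac{1}{2}\one$ is constant and \emph{both} policies are gain-optimal.

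Next I would compute the bias of the Blackwell-optimal policy to obtain $\tspannorm{h^\star}$. The key observation is that the Blackwell-optimal policy selects action $a$ at state $1$ (since for $\gamma$ near $1$, the reward-$1$ loop is strictly better than immediately leaving with reward $\frac{1}{2}$, despite both having the same gain). For $\pi_a$, I would solve the Poisson equation $\rho^{\pi_a} + h^{\pi_a} = r_{\pi_a} + P_{\pi_a} h^{\pi_a}$ with the normalization $h^{\pi_a}(2)=0$ (standard for an absorbing state earning the gain). At state $1$ this reads $\frac{1}{2} + h(1) = 1 + (1-\frac{1}{T})h(1) + \frac{1}{T}\cdot 0$, which rearranges to $\frac{1}{T}h(1) = \frac{1}{2}$, giving $h(1) = \frac{T}{2}$. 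Therefore $h^\star(1) = \frac{T}{2}$, $h^\star(2)=0$, and $\tspannorm{h^\star} = \frac{T}{2}$, as claimed. (I would double-check the sign and the bias-optimality selection among the two gain-optimal policies, since the Blackwell-optimal one maximizes bias and thus determines $h^\star$.)

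Finally, for the infimum over gain-optimal policies, I would exhibit $\pi_b$ as the witness. Under $\pi_b$, from state $1$ the action $b$ gives reward $\frac{1}{2}$ and transitions deterministically to state $2$, which then earns $\frac{1}{2}$ forever. Solving the Poisson equation for $\pi_b$: at state $1$, $\frac{1}{2} + h(1) = \frac{1}{2} + h(2)$, so $h(1) = h(2)$, giving a constant bias and $\tspannorm{h^{\pi_b}} = 0$. Since $\pi_b$ is gain-optimal ($\rho^{\pi_b} = \rho^\star$), this shows $\inf_{\pi:\rho^\pi = \rho^\star}\tspannorm{h^\pi} \le \tspannorm{h^{\pi_b}} = 0$, and as the span is nonnegative the infimum equals $0$.

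The routine part is solving the two-state Poisson equations; the only genuine subtlety is justifying which gain-optimal policy is bias-optimal (hence determines $h^\star$). The main obstacle is thus to argue cleanly that the Blackwell-optimal policy takes action $a$ and therefore $\tspannorm{h^\star} = \frac{T}{2}$ rather than $0$; I would handle this by a short computation of discounted values $V_\gamma^{\pi_a}(1)$ versus $V_\gamma^{\pi_b}(1)$ for $\gamma$ near $1$, showing $\pi_a$ dominates, which pins down $h^\star = h^{\pi_a}$.
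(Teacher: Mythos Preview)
Your proposal is correct and follows essentially the same approach as the paper: identify the two deterministic policies, observe both are gain-optimal with gain $\frac{1}{2}$, solve the Poisson equations to get $\tspannorm{h^{\pi_a}} = \frac{T}{2}$ and $\tspannorm{h^{\pi_b}} = 0$, and argue $\pi_a$ is Blackwell-optimal. The only minor difference is that the paper justifies Blackwell-optimality of $\pi_a$ by noting that when gains are equal the policy with elementwise larger bias must be Blackwell-optimal, whereas you propose a direct comparison of discounted values; both arguments work and are equally short.
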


Next, we provide an instance where $\tspannorm{h^\star}$ can be arbitrarily large but a policy $\pi$ with an arbitrarily low level of suboptimality $\rho^\star - \rho^\pi$ can have $\tspannorm{h^\pi} = O(1)$.

\begin{figure}[H]
\centering
\begin{tikzpicture}[ -> , >=stealth, shorten >=2pt , line width=0.5pt, node distance =2cm, scale=1]

\node [circle, draw] (one) at (-2 , 2) {1};
\node [circle, draw] (two) at (-2 , -2) {2};
\node [circle, draw] (three) at (2 , 2) {3};
\node [circle, draw] (four) at (2 , -2) {4};
\node [circle, draw, fill, inner sep=0.03cm] (dot3) at (1 , 0.8) {};
\node [circle, draw, fill, inner sep=0.03cm] (dot4) at (3 , -0.8) {};

\path (one) edge [bend left] node [right] {$R=\frac{1}{2}$} (two);
\path (two) edge [bend left] node [left] {$R=\frac{1}{2}$} (one);
\path (one) edge node [above] {$R = 0$} (three);
\path (four) edge node [below] {$R = 0$} (two);
\path (three) edge[-] [bend right] node [left] {$R = \frac{1}{2}$} (dot3);
\path (four) edge[-] [bend right] node [right] {$R = \frac{1}{2} + \varepsilon$} (dot4);
\path (dot3) edge[dashed] [bend right] node [below] {~~~~~~~~~$1 - 1/T$} (three);
\path (dot3) edge[dashed] [bend right] node [left] {$1/T$} (four);
\path (dot4) edge[dashed] [bend right] node [above] {$1 - 1/T$~~~~~~~~~} (four);
\path (dot4) edge[dashed] [bend right] node [right] {$1/T$} (three);


\end{tikzpicture}
\caption{An MDP where $\tspannorm{h^{\star}}$ can be arbitrarily larger than $\tspannorm{h^\pi}$ for some near-optimal policy $\pi$ satisfying $\rho^\pi = \rho^\star - \frac{\varepsilon}{2} \one$.}
\label{fig:example_2}
\end{figure}

\begin{thm}
\label{thm:near_opt_policy_example}
    Consider the MDP displayed in Figure \ref{fig:example_2}. For any $T \geq 1$ and $\varepsilon > 0$, we have that $\tspannorm{h^\star} = \frac{\varepsilon T}{2} + \varepsilon + \frac{1}{2}$, but there exists some policy $\pi$ with constant gain such that $\rho^\pi = \rho^\star - \frac{\varepsilon}{2} \one$ and $\tspannorm{h^\pi} = \frac{1}{2}$.
\end{thm}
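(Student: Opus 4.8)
The plan is to read off the action sets from Figure~\ref{fig:example_2} and then compute both bias functions directly by solving the average-reward Bellman (Poisson) equations. The relevant structure is: state $2$ and state $3$ each have a single action (state $2$ moves to state $1$ with reward $\tfrac12$; state $3$ has the stochastic self-transition with reward $\tfrac12$, returning to $3$ w.p.\ $1-1/T$ and jumping to $4$ w.p.\ $1/T$), while states $1$ and $4$ each have two actions. At state $1$ one moves to state $2$ (reward $\tfrac12$) or to state $3$ (reward $0$); at state $4$ one moves to state $2$ (reward $0$) or takes the stochastic self-transition with reward $\tfrac12+\varepsilon$ (staying at $4$ w.p.\ $1-1/T$, jumping to $3$ w.p.\ $1/T$).

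First I would determine $\rho^\star$ and the Blackwell-optimal policy $\pistar$. The only way to collect the high per-step reward $\tfrac12+\varepsilon$ is to remain in the recurrent class $\{3,4\}$, which requires taking the self-transition at state $4$; the symmetric two-state chain on $\{3,4\}$ then has stationary distribution $(\tfrac12,\tfrac12)$ and gain $\tfrac12\cdot\tfrac12 + \tfrac12\cdot(\tfrac12+\varepsilon) = \tfrac12+\tfrac\varepsilon2$. Reaching $\{3,4\}$ forces the action $1\to 3$ at state $1$. Since states $2,3$ offer no choice, the gain-optimal policy is unique (with states $1,2$ transient), hence Blackwell-optimal, giving $\rho^\star = (\tfrac12+\tfrac\varepsilon2)\one$. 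I would then solve $\rho^\star + h^\star = r_{\pistar} + P_{\pistar}h^\star$: on $\{3,4\}$ both equations reduce to $h^\star(4)-h^\star(3) = \varepsilon T/2$, and the transient equations give $h^\star(1) = h^\star(3) - (\tfrac12+\tfrac\varepsilon2)$ and $h^\star(2) = h^\star(1) + \tfrac12 - (\tfrac12+\tfrac\varepsilon2)$. Normalizing $h^\star(3)=0$, the four values are $0,\ \varepsilon T/2,\ -\tfrac12-\tfrac\varepsilon2,\ -\tfrac12-\varepsilon$, whose maximum is $\varepsilon T/2$ and minimum is $-\tfrac12-\varepsilon$, yielding $\tspannorm{h^\star} = \varepsilon T/2 + \varepsilon + \tfrac12$.

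Next I would exhibit the comparator policy $\pi$: take $1\to 2$, the forced action $2\to 1$, the forced stochastic action at $3$, and $4\to 2$. Under $\pi$ the recurrent class is the loop $\{1,2\}$ with per-step reward $\tfrac12$, so $\rho^\pi = \tfrac12\one = \rho^\star - \tfrac\varepsilon2\one$, which is constant as required, while states $3,4$ are transient (from $3$ one eventually jumps to $4$, then $4\to 2$ enters the loop). Solving $\rho^\pi + h^\pi = r_\pi + P_\pi h^\pi$ with gain $\tfrac12$ gives $h^\pi(1)=h^\pi(2)$ on the recurrent class (normalize to $0$), then $h^\pi(4) = -\tfrac12$ from the $4\to 2$ equation, and the state-$3$ equation collapses to $h^\pi(3)=h^\pi(4)=-\tfrac12$. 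Hence the bias values are $0,0,-\tfrac12,-\tfrac12$ and $\tspannorm{h^\pi}=\tfrac12$.

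The main obstacle is the optimality argument in the second step: one must rule out the alternative actions at states $1$ and $4$ as gain-suboptimal and confirm that the resulting gain-optimal policy is unique, so that it is automatically Blackwell-optimal and its bias equals $h^\star$. This reduces to checking that every policy sending $1\to2$ or $4\to2$ either stays trapped in the gain-$\tfrac12$ loop or inserts reward-$0$ transitions, both of which strictly lower the gain below $\tfrac12+\tfrac\varepsilon2$; the remainder is routine linear algebra on the two $4\times 4$ systems.
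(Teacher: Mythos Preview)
Your proposal is correct and follows essentially the same approach as the paper: identify the unique gain-optimal policy $\pistar$ (your ``$1\to3$, self-loop at $4$'' is the paper's $\pi_{RU}$), solve the Poisson equation for $h^\star$ and compute its span, then take the comparator $\pi$ (the paper's $\pi_{DL}$) and do the same. The only cosmetic difference is the normalization of the bias: you fix $h^\star(3)=0$ while the paper uses $h^\star(3)+h^\star(4)=0$ from the stationary distribution, but this is immaterial for $\tspannorm{\cdot}$.
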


\subsection{Proofs of Theorems \ref{thm:better_span_param_example} and \ref{thm:near_opt_policy_example}}
\begin{proof}[Proof of Theorem \ref{thm:better_span_param_example}]
    It is easy to see that state $1$ is transient under all policies and state $2$ is absorbing, so all policies are gain-optimal and have gain $\frac{1}{2} \one$. Only state $1$ has multiple possible actions, so it suffices to consider the two policies $\pi_{\text{up}}$, which takes the ``up'' action which has nonzero probability of returning to state $1$, and $\pi_{\text{down}}$, which leads to an immediate transition to state $2$. It is trivial to see that $h^{\pi_{\text{down}}} = \zero$, so we have that $\tspannorm{h^{\pi_{\text{down}}}}=0$. To compute $h^{\pi_{\text{up}}}$, we must have $h^{\pi_{\text{up}}}(2) = 0$, so we can then calculate that
    \begin{align*}
        &h^{\pi_{\text{up}}}(1) + \frac{1}{2} = 1 + (1-\frac{1}{T})h^{\pi_{\text{up}}}(1) + \frac{1}{T} h^{\pi_{\text{up}}}(2) \\
        \implies & h^{\pi_{\text{up}}}(1) = \frac{T}{2} + h^{\pi_{\text{up}}}(2) = \frac{T}{2}
    \end{align*}
    and thus $\tspannorm{h^{\pi_{\text{up}}}}=T/2$. Since these are the only two stationary deterministic policies, one of them must be Blackwell-optimal, and since they have equal gain and elementwise $h^{\pi_{\text{up}}} \geq h^{\pi_{\text{down}}}$ (with a strict inequality in state $1$), we have that $\pi_{\text{up}} = \pistar$ and $\tspannorm{h^\star} = \tspannorm{h^{\pi_{\text{up}}}}=T/2$. Since $\pi_{\text{down}}$ is gain-optimal it is immediate from $h^{\pi_{\text{down}}} = \zero$ that $\inf_{\pi: \rho^\pi = \rho^\star} \tspannorm{h^\pi} = 0$.
\end{proof}

\begin{proof}[Proof of Theorem \ref{thm:near_opt_policy_example}]
    There are two states, $1$ and $4$, where multiple actions are possible. We name the actions in state $1$ the ``down'' action (which leads to $2$) and the ``right'' action (which leads to $3$), and we name the actions in state $4$ the ``up'' action (which has positive probability of leading to $3$) and the ``left'' action (which leads to $2$). A deterministic stationary policy can be specified by its two choices between the actions available at states $1$ and $4$. We thus use $\pi_{DL}$ to indicate the policy which takes the down action in state $1$ and the left action in state $4$, and likewise for other choices of $\{D, R\} \times \{U, L\}$.

    It is easy to see that since $\varepsilon > 0$ the unique gain-optimal policy is $\pi_{RU}$ which has $\rho^{\pi_{RU}} = \rho^\star = \frac{1+\varepsilon}{2}\one$. Thus this policy must also be Blackwell-optimal. We now compute $h^\star = h^{\pi_{RU}}$. We have
    \begin{align*}
        &h^\star(3) + \frac{1+\varepsilon}{2} = \frac{1}{2} + (1-\frac{1}{T})h^\star(3) + \frac{1}{T}h^\star(4) \\
        \implies & h^\star(3) = h^\star(4) - \frac{\varepsilon T}{2}
    \end{align*}
    and since the stationary distribution of $\pi_{RU}$ has equal mass on states $3$ and $4$ we must have $h^\star(3) + h^\star(4) = 0$, which implies that
    $h^\star(3) = - \frac{\varepsilon T}{4}$ and $h^\star(4) =  \frac{\varepsilon T}{4}$. It is then easy to check that $h^\star(1) = h^\star(3) - \frac{1+\varepsilon}{2} = \frac{-\varepsilon T - 2\varepsilon - 2}{4}$ and $h^\star(2) = h^\star(1) - \frac{\varepsilon}{2} = \frac{-\varepsilon T - 4\varepsilon - 2}{4}$, so we have that $\tspannorm{h^\star} =   \frac{\varepsilon T}{4} - \frac{-\varepsilon T - 4\varepsilon - 2}{4} = \frac{\varepsilon T}{2} + \varepsilon + \frac{1}{2}$.

    Next we consider the policy $\pi_{DL}$. It is easy to see that $\rho^{\pi_{DL}}  = \frac{1}{2}\one$, which is constant and which satisfies $\rho^\star - \rho^{\pi_{DL}} = \frac{\varepsilon}{2}\one$. Now we compute $\tspannorm{h^{\pi_{DL}}}$. It is immediate to see that $h^{\pi_{DL}}(1) = h^{\pi_{DL}}(2) = 0$. Then we can calculate that $h^{\pi_{DL}}(4) = h^{\pi_{DL}}(2) - \frac{1}{2} = -\frac{1}{2}$, and finally that
    \begin{align*}
        & h^{\pi_{DL}}(3) + \frac{1}{2} = \frac{1}{2} + (1-\frac{1}{T})h^{\pi_{DL}}(3) + \frac{1}{T} h^{\pi_{DL}}(4) \\
        \implies & h^{\pi_{DL}}(3) = h^{\pi_{DL}}(4) = -\frac{1}{2}.
    \end{align*}
    Therefore we have that $\tspannorm{h^{\pi_{DL}}} = 0 - -\frac{1}{2} = \frac{1}{2}$.
\end{proof}

\end{document}